\documentclass{EngCLS}

\title{Online Policy Evaluation for MDPs with Dynamic UBSR Measures}
\author{Weikai Wang\thanks{GERAD \& Department of Decision Sciences, HEC Montréal, Canada (\texttt{weikai.wang@hec.ca})} \quad 
Erick Delage\thanks{GERAD \& Department of Decision Sciences, HEC Montréal, Canada (\texttt{erick.delage@hec.ca})}
}
\date{\today}

\begin{document}

\maketitle

\begin{abstract}
    Developing efficient function-approximation methods for policy evaluation is a fundamental challenge in risk-aware reinforcement learning. Existing approaches either focus on restrictive classes of risk measures or rely on access to a simulator, limiting their applicability in fully online settings.
    In this work, we propose computationally efficient online learning algorithms for policy evaluation in Markov decision processes (MDPs) with dynamic utility-based shortfall risk (UBSR) measures under linear function approximation. Specifically, we introduce the UBSR-TD algorithm, establish conditions under which it converges almost surely, and develop several variants designed to accelerate convergence. Our formulation shows that existing policy evaluation algorithms for risk-neutral MDPs can be readily adapted to dynamic UBSR settings by incorporating a loss function into the temporal-difference error. Numerical experiments support our theoretical findings, and an application to a perishable inventory management problem with shelf-life uncertainty demonstrates the practical effectiveness of the proposed methods.
\end{abstract}

\section{Introduction}
From AlphaGo defeating the world champion of Go \citep{Silver2017nature} to the fine tuning of large language models \citep{Ouyang2022neurips}, reinforcement learning (RL) has achieved remarkable successes. A central component of RL is policy evaluation, which corresponds to assessing a given policy on a Markov decision process (MDP) and whose accuracy directly influences the effectiveness of subsequent policy improvement. In theory, policy evaluation can be achieved using classical value iteration methods from dynamic programming (DP). In practice, however, the environment is rarely known in full, data collection may be costly or limited, and the curse of dimensionality often renders exact DP methods computationally infeasible. These challenges have motivated efficient function approximation methods for estimating value functions from streaming or limited data, including temporal-difference (TD)-type algorithms \citep{Sutton2018book}.

The classical policy evaluation methods optimize the expected cumulative discounted cost and therefore implicitly assume that the decision-maker is risk-neutral. In real-world environments, however, policies may lead to severe adverse outcomes, particularly in high-stakes fields such as finance, autonomous driving, and robotics \citep{WangYH2022ai}, etc. The need to limit exposure to such outcomes has motivated the development of risk-aware RL methods, which incorporate risk measures into their objectives, statically or dynamically. Following the seminal framework of \cite{Artzner1999mf}, much of the literature has focused on the entropic risk measure (ERM) and coherent risk measures, largely because of their analytical tractability \citep{Tamar2015neurips, FeiYJ2021icml, Lam2022iclr, ZhangRY2024iclr}.

Despite these progresses, more general classes of risk measures remain relatively underexplored, limiting the capability of existing approaches in settings where the decision-maker seeks to capture diverse risk preferences through customized loss functions. In this context, utility-based shortfall risk (UBSR) has attracted increasing interest because of its flexibility. Defined in terms of a general loss function and a prescribed acceptability threshold, UBSR quantifies the minimum amount of additional capital required to reduce risk to an acceptable level \citep{Follmer2016book}. This general formulation includes both the expectation and the ERM as special cases. Empirical evidence also suggests that UBSR can effectively capture the heterogeneous risk preferences observed in human behavior \citep{Shen2014neuralcomp}. Moreover, UBSR is well suited to modeling extreme losses and is the only convex risk measure that is elicitable, a property of particular importance for backtesting \citep{Bellini2015qf}.

While possessing several attractive properties, UBSR has received relatively limited attention in RL, partly because its general lack of coherence gives rise to substantial analytical difficulties. 
Several studies have considered function approximation methods for MDPs with general dynamic risk measures \citep{Tamar2015neurips, YuPQ2018tac, Kose2021jmlr, Coache2024mf}. However, these approaches typically require a sampling oracle to evaluate the risk measure and therefore are not applicable in an online setting. 
Other methods estimate risk using online mini-batches or episodic trajectories \citep{HuangA2021arXiv, Marzban2023qf, Coache2023sifin, LuoYD2026arXiv}, but their convergence is not guaranteed. 
For dynamic UBSR specifically, \citet{Shen2014neuralcomp} develops an online Q-learning algorithm, but its function approximation version remains missing. 
To the best of our knowledge, two questions remain open for RL with dynamic risk measures under function approximation: how to perform fully online policy evaluation and whether the resulting algorithm can be shown to converge almost surely. \citet{Basu2008mor} resolves both for the average-ERM criterion, but under a formulation outside the discounted MDP setting considered here.

Motivated by this gap, this work aims to develop fully online policy evaluation algorithms for MDPs with dynamic UBSR under linear function approximation and establish their almost-sure convergence.
Our contributions can be described as follows:
\begin{itemize}
    \item 
    We develop UBSR-TD, the first fully online policy evaluation algorithm for MDPs with dynamic UBSR under linear function approximation. The method can be viewed as a natural extension of classical TD(0) to general loss functions. We show that UBSR-TD solves the fixed point of a projected risk-aware Bellman equation and establish its almost-sure convergence under suitable conditions. As far as we are aware, this is the first almost-sure convergence result for fully online policy evaluation in discounted MDPs with dynamic risk measures.

    \item TD-type algorithms can converge slowly without careful hyperparameter tuning \citep{Sutton2018book}. To mitigate this issue, we introduce two extensions of UBSR-TD that leverage historical and gradient information. We evaluate the proposed methods numerically across different risks and parameter settings. The results show that these variants can improve convergence over a broad range of scenarios. Our formulations indicate that existing policy evaluation algorithms can be easily adapted to the dynamic UBSR framework by incorporating a loss function into the TD error.

    \item We further apply our method to perishable inventory management with both demand and product shelf-life uncertainty, a growing field that has only been examined under risk neutrality \citep{Nahmias2011book, BuJZ2023mnsc, Abouee2026ijoc}. Existing risk-aware models employ conditional Value-at-Risk or ERM as risk measures \citep{FengYY2008opre, Yalcindag2020hcms, Pathy2024coa}, but do not account for shelf-life uncertainty. We formulate an approximate dynamic programming model under dynamic UBSR and employ our proposed policy evaluation algorithms within a policy iteration scheme. 
    Numerical experiments show that our approach consistently outperforms static, myopic, and risk-neutral benchmarks across diverse scenarios, highlighting the value of risk-aware decision-making.
\end{itemize}

The remainder of the paper is organized as follows. Section \ref{sec:Prelimiaries} introduces the basic definitions and formulates the policy evaluation problem for risk-aware MDPs under dynamic UBSR. Section \ref{sec:UBSRTD} presents the UBSR-TD algorithm, while Section \ref{sec:ConvergenceUBSRTD} provides its convergence analysis. Section \ref{sec:Extensions} introduces extensions of UBSR-TD designed to accelerate convergence. Section \ref{sec:Experiments} reports numerical experiments on algorithmic convergence and presents an application to a perishable inventory management problem. Finally, Section \ref{sec:Conclusion} concludes the paper and discusses future research directions.

\section{Preliminaries}\label{sec:Prelimiaries}
\textit{Notations:} Let $(\Omega,\mcF,\mbP)$ be a finite probability space, where $\Omega$ is the finite sample space, $\mcF := 2^{|\Omega|}$ is a $\sigma$-algebra on $\Omega$, and $\mbP$ is a probability measure on $(\Omega,\mcF)$. 
Let $\mcL(\Omega)$ denote the space of all real-valued measurable functions on $(\Omega,\mcF,\mbP)$. For $X, Y \in \mcL(\Omega)$, we write $X \ge Y$ if $X(\omega) \ge Y(\omega)$, for all $\omega\in \Omega$ and we say $X \ge Y$ almost surely (a.s.) if $X(\omega) \ge Y(\omega)$ for all $\omega \in\Omega$ such that $\mbP(\omega) > 0$.

\subsection{Risk Measures}
Following \cite{shapiro2021book}, given a finite probability space $(\Omega,\mcF,\mbP)$, a risk measure is a mapping $\rho:\mcL(\Omega)\to\mbR$. Consider the following properties:
\begin{enumerate}[label = (\alph*)]
    \item (Monotonicity) $\rho(X) \ge \rho(Y)$ for all $X,Y \in \mcL(\Omega)$ such that $X \ge Y$ a.s.;
    \item (Translation invariance) $\rho(X+\lambda) = \rho(X) + \lambda$ for any $\lambda \in \mbR$, $X \in\mcL(\Omega)$;
    \item (Normalization) $\rho(0) = 0$;
    \item (Convexity) For all $\alpha \in [0,1]$, $X, Y \in \mcL(\Omega)$, $\rho(\alpha X + (1-\alpha) Y) \le \alpha \rho(X) + (1-\alpha) \rho(Y)$;
    \item (Positive homogeneity) For all $\lambda \ge 0$, $X \in \mcL(\Omega)$, $\rho(\lambda X) = \lambda \rho (X)$.
\end{enumerate}
A risk measure is called monetary if it satisfies properties (a)-(c). It is called convex if it additionally satisfies (d), and coherent if it further satisfies (e).

We now introduce a special class of risk measures that possesses many desirable properties.
\begin{definition}\label{defn-UBSR}
    A risk measure on $(\Omega,\mcF, \mbP)$ is called a utility-based shortfall risk (UBSR) measure if it can be represented as:
    \begin{equation*}
        \SR(X) := \inf\left\{ m \in \mbR : \mbE[\ell(X-m)] \le 0 \right\},\quad \forall X\in \mcL(\Omega),
    \end{equation*}
    for some continuous strictly increasing loss function $\ell: \mbR \to \mbR$ such that $\ell(0)=0$.
\end{definition}
Note that the original definition of UBSR in \cite{Follmer2016book} is limited to convex loss functions. Here, we adopt the more general definition from \cite{Shen2014neuralcomp}, which allows for a broader class of loss functions and, consequently, a wider class of risk measures.

The following examples show that a lot of popular risk measures are special cases of UBSR.

\begin{example}[Expected value]
    The expectation corresponds to the UBSR induced by $\ell(x)=x$, which we refer to as the risk-neutral measure.
\end{example}

\begin{example}[Expectile]\label{exmp-expectile}
    The expectile is the only coherent UBSR \citep{Bellini2015qf}. It is induced by the loss function $\ell_{\mathrm{EXP}}(x):=\tau x^+-(1-\tau)x^-$, where $\tau\in[0,1]$ controls risk aversion. As $\tau$ increases from $0$ to $1$, the expectile ranges from the essential infimum to the essential supremum of the random variable, with the expectation recovered at $\tau=0.5$.
\end{example}

\begin{example}[Entropic risk measure]
    The ERM $\rho_{\mathrm{ERM}}(X) := \frac{1}{\beta} \log(\mbE[\mathrm{e}^{\beta X}])$ is a UBSR with loss function $\ell_{\mathrm{ERM}}(x) = \mathrm{e}^{\beta x}-1$, with $\beta > 0$ representing risk sensitivity.
\end{example}

\begin{example}[Soft quantile]\label{exmp-softquantile}
    The soft quantile risk measure introduced in \cite{Hau2025aistats} approximates the quantile risk measure while addressing the discontinuity and zero-slope issues of the quantile loss. It is defined through the loss function
    \begin{equation*}
    \begin{aligned}
        \ell_{\mathrm{SQ}}(x) := \begin{cases}
            (1-\mu) (\kappa x + \kappa^2 - 1), & x < -\kappa,\\
            \frac{1-\mu}{\kappa} x, & -\kappa \le x < 0,\\
            \frac{\mu}{\kappa} x, & 0\le x < \kappa,\\
            \mu (\kappa x - \kappa^2 + 1), & x \ge \kappa,
        \end{cases}
    \end{aligned}
    \end{equation*}
with $\mu \in (0,1)$ serving as the risk level and $\kappa > 0$ as the slope parameter. Unlike the quantile loss, the soft quantile loss is continuous and strictly increasing.
\end{example}

It is known that UBSR is elicitable, and it can be elicited by the scoring function $\psi(y-z)$ with $\psi(z) := \int_0^z \ell(\tau) d\tau$, (see Theorem 4.6, \cite{Bellini2015qf}), and $\SR(Y)$ is the unique minimizer of the expected scoring function, namely,
\begin{equation}\label{eq-UBSR-argmin}
    \SR(Y) = \argmin_{z \in \mbR} \mbE[\psi(Y-z)]. 
\end{equation}
By the fundamental theorem of calculus, $\psi(z)$ is differentiable and $\psi'(y) = \ell(y)$, for all $y \in \mbR$. From \cite{Emmer2015jor}, we know that if $\psi$ is the squared error, then the minimizer is the expectation.

As a summary for the properties of UBSR, we state as the following proposition.
\begin{proposition}[Theorem 4.113 in \cite{Follmer2016book}]\label{prop:UBSRProperty}
    For the UBSR measure defined in Definition \ref{defn-UBSR}, the following statements are equivalent: for any finite real-valued random variable $Y$, we have (i) $\SR(Y) = m^*$ and (ii) $\mbE[\ell(Y - m^*)] = 0$.
\end{proposition}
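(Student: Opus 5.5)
The plan is to study the scalar function $g(m) := \mbE[\ell(Y-m)]$, $m\in\mbR$, and show that it is finite, continuous, strictly decreasing, and changes sign. Then $\{m : g(m)\le 0\}$ is a closed half-line $[m_0,\infty)$ whose left endpoint is the unique zero of $g$, and both implications of the statement follow. Since $Y$ is only assumed finite-valued, not supported on finitely many points, I will work under the standing integrability condition implicit in the statement (as in \cite{Follmer2016book}, where the UBSR is defined on bounded, or more generally suitably integrable, positions). That condition is that $\ell(Y-m)$ is integrable for every $m\in\mbR$, so that $g$ is real-valued. For bounded $Y$ this is automatic, because $\ell$ is continuous and hence bounded on compacts.

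\textbf{Continuity and strict monotonicity.} Fix $\bar m$ and consider $m\in[\bar m-1,\bar m+1]$. Because $\ell$ is increasing, we have pointwise
\begin{equation*}
\ell(Y-\bar m-1)\le \ell(Y-m)\le \ell(Y-\bar m+1),
\end{equation*}
so $|\ell(Y-m)|\le |\ell(Y-\bar m-1)|+|\ell(Y-\bar m+1)|$, which is an integrable bound independent of $m$. Continuity of $\ell$ gives pointwise convergence $\ell(Y-m)\to\ell(Y-\bar m)$ as $m\to\bar m$. Dominated convergence then yields continuity of $g$ at $\bar m$.

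For $m<m'$, strict monotonicity of $\ell$ gives $\ell(Y-m)-\ell(Y-m')>0$ at every $\omega$. A nonnegative integrable function that is strictly positive everywhere has strictly positive integral, so $g(m)>g(m')$.

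\textbf{Sign change.} Since $\ell$ is strictly increasing with $\ell(0)=0$, we have $\sup\ell\ge\ell(1)>0$ and $\inf\ell\le\ell(-1)<0$.

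As $m\downarrow-\infty$, the variables $\ell(Y-m)$ increase pointwise to $\sup\ell$. They are bounded below by the integrable $\ell(Y-m_1)$ for any fixed $m_1$. Monotone convergence therefore gives $g(m)\uparrow\sup\ell>0$, possibly $+\infty$. Hence $g(m)>0$ for some $m$.

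As $m\uparrow\infty$, the variables $\ell(Y-m)$ decrease pointwise to $\inf\ell$ and are bounded above by an integrable variable. Monotone convergence again gives $g(m)\downarrow\inf\ell<0$, possibly $-\infty$. Hence $g(m)<0$ for some $m$.

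This limit step is the main point requiring care. For unbounded $Y$ one cannot simply bound $Y$ between $\min Y$ and $\max Y$. The monotone convergence argument, with the integrable one-sided bound, replaces that.

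\textbf{Conclusion.} By the intermediate value theorem and strict monotonicity, $g$ has exactly one zero $m^*$. Moreover $g\le 0$ exactly on $[m^*,\infty)$, so $\SR(Y)=\inf\{m:g(m)\le0\}=m^*$.

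For (ii)$\Rightarrow$(i): if $\mbE[\ell(Y-m^*)]=0$, then $m^*$ is the unique zero, hence equals $\SR(Y)$.

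For (i)$\Rightarrow$(ii): if $\SR(Y)=m^*$, then $m^*$ is the left endpoint of $\{g\le0\}$. Continuity gives $g(m^*)\le0$, and $g(m)>0$ for all $m<m^*$ together with continuity gives $g(m^*)\ge0$. Therefore $\mbE[\ell(Y-m^*)]=0$.
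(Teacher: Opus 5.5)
Your proof is correct. The paper gives no argument of its own and only cites \cite{Follmer2016book}, which runs the same scheme. There, $g(m)=\mbE[\ell(Y-m)]$ is finite, continuous and decreasing, and the threshold $0$ is an interior point of the range of $\ell$. Hence $g$ has a unique zero, and that zero is the infimum defining the risk measure.

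What differs is where the regularity of $g$ comes from. In \cite{Follmer2016book} the loss is convex and positions are bounded. Continuity of $g$, and strict decrease wherever $g>\inf\ell$, then follow from convexity. You instead get continuity from dominated convergence. You get strict decrease from strict monotonicity of $\ell$ together with positivity of the integral of an everywhere-positive function. This uses exactly the hypotheses of Definition~\ref{defn-UBSR}, and that matters. The paper explicitly drops convexity, so losses it actually uses, such as the expectile with $\tau<1/2$ or the soft quantile with $\kappa=2$, are not literally covered by the cited theorem. Your argument does cover them.

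In the other direction, your integrability caveat and the monotone-convergence treatment of the limits are more than is needed here. On the paper's finite probability space every $Y\in\mcL(\Omega)$ is bounded. So $g$ is a finite convex combination of continuous strictly decreasing functions. The sign change is then immediate: $g(m)>0$ for $m<\min_{\omega}Y(\omega)$ and $g(m)<0$ for $m>\max_{\omega}Y(\omega)$.
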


\subsection{Risk-aware MDPs}
Consider an MDP $(\mcX,\mcA,P,c,\gamma)$, where
$\mcX=\{1,\ldots,N\}$ and $\mcA$ are finite state and action spaces,
$P$ is the transition kernel, $c$ is the cost function, and
$\gamma\in(0,1)$ is the discount factor. A policy $\pi(a|x)$ specifies
the probability of selecting action $a$ in state $x$. We focus on non-anticipative policies with deterministic control, so that $\pi(\cdot|x)$ is a Dirac measure for each $x$. Under $\pi$, the induced
transition kernel is $P^\pi(x'|x):=\sum_{a\in\mcA}P(x'|x,a)\pi(a|x)$ for $x,x'\in \mcX$, and the induced cost function is $c^\pi(x):=\sum_{a\in\mcA}c(x,a)\pi(a|x)$ for $x\in \mcX$,
which is assumed bounded: $\max_{x\in\mcX}|c^\pi(x)|\le \bar{c}$.
Since we study policy evaluation under a fixed policy, we omit the superscript $\pi$ and write $P$ and $c$
for the policy-induced transition kernel and cost function.

In the risk-neutral policy evaluation setting, the objective is to evaluate the discounted cumulative cost $\bar{V}$ under a fixed policy $\pi$:
\begin{equation}\label{eq-RiskNeutralProblem}
    \bar{V}(x) := \mbE\left[\sum_{n=0}^{\infty} \gamma^n c(X_n) \mid X_0 = x \right], \quad \forall x \in \mcX,
\end{equation}
where $\gamma \in (0,1)$ is some discount factor and $\{X_n\}_{n\ge 0}$ represents the state trajectory, typically modeled as a Markov chain such that $X_{n+1} \sim P(\cdot|X_n)$ for $n \ge 0$.

To incorporate risk into an MDP, we adopt the dynamic risk measure of \cite{Ruszczynski2010mp}, which recursively evaluates future random costs through nested mappings. This structure ensures time consistency and yields a risk-aware Bellman equation for the optimal cumulative risk. To formulate the problem, we first introduce the notion of a risk map.
\begin{definition}[Definition 3.1, \cite{Shen2013sicon}]
    A mapping $\mcR:\mcX\times\mcL(\mcX) \to \mbR$ is said to be a risk map on the Markov chain $P$ if
    \begin{enumerate}[label = (\roman*)]
        \item for each $x\in\mcX$, $\mcR_x(\cdot) := \mcR(x,\cdot)$ is a risk measure;
        \item $\mcR(\cdot,Y) \in \mcL(\mcX)$ for each $Y \in \mcL(\mcX)$.
    \end{enumerate}
\end{definition}

The above definition guarantees that the risk map depends solely on the current state $x$, which is referred to as a Markovian risk map in \cite{Shen2013sicon}. Such a Markovian risk map ensures that the optimal policy remains stationary in infinite-horizon settings.

Throughout this work, we restrict our attention to the case where $\mcR_x(\cdot)$ is a UBSR measure. For notation simplicity, we denote it by $\SR_x(\cdot)$. Then for any fixed stage $T > 0$, the discounted $T$-stage total risk under policy $\pi$ is defined as
\begin{equation*}
    V_T(x) := c(x) + \gamma \SR_{x_0}\left(c(X_1) + \gamma \SR_{X_1}(c(X_2) + \cdots + \gamma \SR_{X_{T-1}}(c(X_T))\cdots)\right),\quad \forall x\in\mcX.
\end{equation*}
Letting $T \to \infty$, we obtain the infinite-horizon discounted total risk under policy $\pi$ as
\begin{equation}\label{eq-DiscountedTotalRiskObjective}
     V(x) := \lim_{T\to\infty} V_T(x),\quad \forall x\in\mcX.
\end{equation}
Under certain conditions, the limit in \eqref{eq-DiscountedTotalRiskObjective} exists for risk maps induced by the UBSR (see Lemma 5.3 and Theorem 5.5 in \cite{Shen2013sicon}). Furthermore, this results in the following risk-aware Bellman equation under policy $\pi$:
\begin{equation}\label{eq-UBSRBellman}
    V(x) = \SR_x(c(x) + \gamma V(X')),\quad \forall x\in\mcX,
\end{equation}
with $X'\sim P(\cdot|x)$, and $V$ as the unique solution.

\subsection{Policy Evaluation with Function Approximation in the Risk-neutral Case}

In practical RL problems, the state space may be too large to compute value estimates explicitly for each state in $\mcX$. A standard approach is to approximate the objective function $V$ using a linear function approximation based on a set of basis functions:
\begin{equation*}
    V_{\bmtheta}(x) := \bmtheta^\top \bmphi(x) = \sum_{i=1}^{d} \theta_i \phi_i(x),\quad \forall x\in\mcX,
\end{equation*}
where $\bmtheta \in \mbR^d$ is the parameter vector and $\bmphi: \mcX \to \mbR^d$ is a feature mapping. For each $i \in \{1,\ldots,d\}$, define $\bmphi_i := (\phi_i(1),\ldots,\phi_i(N))^\top \in \mbR^N$, which will be referred as basis vectors in $\mbR^N$. We assume that the collection $\{\bmphi_i\}_{i=1}^{d}$ is linearly independent, and let $\bmPhi := [\bmphi_1 \ \cdots \ \bmphi_d] \in \mbR^{N \times d}$ denote the feature matrix. When $d = N$, exact representation of $V$ over $\bmPhi$ is possible. In practice, however, computational considerations typically motivate the regime $d \ll N$, where $V$ is approximated via a low-dimensional linear representation $V_{\bmtheta}$.

For the risk-neutral formulation \eqref{eq-RiskNeutralProblem}, the value function $\bar{V}$ satisfies the Bellman equation
\begin{equation*}
    \mbE\left[\gamma \bar{V}(X') - \bar{V}(X) + c(X) \mid X = x\right] = 0, \quad \forall x \in \mcX,
\end{equation*}
where $X' \sim P(\cdot|x)$.
However, this identity generally cannot be satisfied when $\bar{V}$ is replaced by its linear approximation $V_{\bmtheta}$, since the Bellman operator does not preserve the linear function class. 

The classical TD(0) algorithm \citep{Sutton2018book} aims to find a parameter $\bar{\bmtheta}$ such that $V_{\bar{\bmtheta}}$ well approximates the value function $\bar{V}$. The update rule is given by
\begin{equation}\label{eq-RiskNeutralTD}
    \bmtheta_{n+1}= \bmtheta_n+ \eta_n \bmphi(X_n)
    \left[(\gamma \bmphi(X_{n+1}) - \bmphi(X_n))^\top \bmtheta_n+ c(X_n)\right], \quad n \ge 0,
\end{equation}
where $\eta_n$ is some step size and $\{X_n\}_{n\ge 0}$ is the state trajectory driven by the Markov chain $P$. 

TD(0) is known as a semi-gradient method \citep{Sutton2018book}. Its update direction is given by $g_n(\bmtheta) := (y_n - V_{\bmtheta}(X_n)) \nabla_{\bmtheta} V_{\bmtheta}(X_n)$, where $y_n := c(X_n) + \gamma V_{\bmtheta_n}(X_{n+1})$ is a sample-based estimate of the Bellman update for $V_{\bmtheta_n}$. The direction $g_n(\bmtheta_n)$ is often referred to as the negative semi-gradient of a certain squared loss function, since $g_n(\bmtheta_n) =- \nabla_{\bmtheta} \frac{1}{2}(y_n - V_{\bmtheta}(X_n))^2|_{\bmtheta=\bmtheta_n}$, where the target $y_n$ is treated as fixed, ignoring its implicit dependence on $\bmtheta_n$.
Suppose the Markov chain $P$ is ergodic with the vector of stationary distribution denoted as $q$, the iterates $\bmtheta_n$ converges almost surely to a parameter $\bar{\bmtheta}$ that is the solution to the following root-finding problem :
\begin{equation}\label{eq-SquareLossRoot}
    \mbE\left[\bmphi(X)\left((\gamma \bmphi(X') -\bmphi(X))^\top \bar{\bmtheta}+ c(X)\right)\right] = \bm{0},
\end{equation}
where $X \sim q$, and $X'\sim P(\cdot|X)$.

It turns out that, such $\bar{\bmtheta}$ is a fixed point for the projected Bellman equation and minimizes the mean squared projected Bellman error. 
We refer readers to \cite{Dann2014jmlr} for further discussion on policy evaluation algorithms in the risk-neutral case.

\section{UBSR-TD Algorithm}\label{sec:UBSRTD}
Motivated by the risk-neutral setting, we consider a linear function approximation of the risk-aware objective in \eqref{eq-DiscountedTotalRiskObjective}. Our goal is to develop an online algorithm for finding a parameter $\bmtheta^*$ such that $V_{\bmtheta^*}$ is a good approximation to $V$. In this section, we first present a stochastic value iteration algorithm for solving \eqref{eq-UBSRBellman}, then derive its linear function approximation based on the semi-gradient method.

Suppose $V$ is a solution to \eqref{eq-UBSRBellman}, then by Proposition \ref{prop:UBSRProperty}, we know that $V$ is a unique solution to the root finding problem
\begin{equation}\label{eq-UBSRRoot}
    \mbE\left[\ell \left(c(X) + \gamma V(X') - V(X)\right) \mid X=x \right] = 0,\quad \forall x\in\mcX.
\end{equation}
Following \cite{Shen2014neuralcomp}, it can be solved using the following stochastic approximation iteration:
\begin{equation}\label{eq-UBSRStochasticVI}
    V_{n+1}(x) = V_n(x) + \eta_n \ell \left(c(x) + \gamma V_n(X') - V_n(x)\right),\quad \forall x\in \mcX, X'\sim P(\cdot|x),
\end{equation}
where $\eta_n$ is some step size. Algorithm \eqref{eq-UBSRStochasticVI} can be interpreted as the policy-evaluation counterpart of the synchronous version of the Q-learning algorithm proposed in \cite{Shen2014neuralcomp}. Its almost sure convergence to the target value function $V$ follows from an analogous argument to that used in Theorem 3.2 of \cite{Shen2014neuralcomp}. An asynchronous variant can be derived similarly.

Algorithm \eqref{eq-UBSRStochasticVI} is value-function based and thus suffers from the curse of dimensionality when the state space is large. We next heuristically derive its linear function approximation counterpart, with a rigorous convergence analysis provided in Section \ref{sec:ConvergenceUBSRTD}.

From \eqref{eq-UBSR-argmin}, we replace the unknown value function $V$ in \eqref{eq-UBSRBellman} by its linear function approximation $V_{\bmtheta}$ and define the sample-based target $y := c(X)+\gamma V_{\bmtheta}(X')$.
Following the semi-gradient principle, we treat $y$ as fixed when differentiating with respect to $\bmtheta$. This leads to the surrogate minimization objective $\mbE[\psi(y - V_{\bmtheta}(X))]$,
where we average over the stationary distribution of $X$ to obtain a single
state-independent parameter $\bmtheta^*$.
Since $\psi$ is differentiable with derivative $\ell=\psi'$, the
corresponding first-order condition is
\begin{equation*}
    \mbE\left[\ell\left(y - V_{\bmtheta^*}(X)\right) \nabla_{\bmtheta} V_{\bmtheta}(X)|_{\bmtheta=\bmtheta^*}  \right]=\bm{0},
\end{equation*}
where $X\sim q$, and $X'\sim P(\cdot|X)$. For the linear
function approximation $V_{\bmtheta}(x)=\bmtheta^\top \bmphi(x)$, this condition becomes
\begin{equation}\label{eq-UBSR-TDRoot}
    \mbE\left[\bmphi(X)\ell\left((\gamma\bmphi(X')-\bmphi(X))^\top\bmtheta^* + c(X) \right)\right]=\bm{0}.
\end{equation}
This leads to the following stochastic approximation scheme: given an observed sequence of states $\{X_n\}_{n \ge 0}$, the parameter $\bmtheta_n$ is updated recursively according to
\begin{equation}\label{eq-UBSR-TD}
    \bmtheta_{n+1} = \bmtheta_n + \eta_n \bmphi(X_n) \ell\left( (\gamma\bmphi(X_{n+1}) - \bmphi(X_n))^\top \bmtheta_n + c(X_n)\right),\quad n\ge 0,
\end{equation}
with $\bmtheta_0$ arbitrarily initialized.
We call \eqref{eq-UBSR-TD} the \textit{UBSR temporal-difference (UBSR-TD) algorithm}. When $\ell$ is the identity function, which corresponds to $\psi$ being the square loss, the UBSR-TD algorithm \eqref{eq-UBSR-TD} reduces to the classical TD(0) with linear function approximation. 

From the derivation of UBSR-TD, the algorithm belongs to the semi-gradient method. Its update direction is $\tilde{g}_n(\bmtheta_n)=-\nabla_{\bmtheta}\psi(y_n - V_{\bmtheta}(X_n))|_{\bmtheta=\bmtheta_n}$. Clearly, $\tilde{g}_n$ serves as the general-loss counterpart of the TD(0) update direction $g_n$.

\begin{remark}
    Applying a loss function to the TD error is a common practice in RL to enhance robustness and training stability. Examples include the Huber loss (i.e., error clipping) \cite{Mnih2015nature, LiuWD2025aos} and the logit loss \cite{BasSerrano2021aistats}, among others.
    Formulations  \eqref{eq-UBSRStochasticVI} and \eqref{eq-UBSR-TD} show that passing the TD error through a non-linear loss function transforms the risk-neutral objective to a dynamic UBSR criterion.
\end{remark}

\section{Convergence for the UBSR-TD Algorithm}\label{sec:ConvergenceUBSRTD}
In this section, we study the convergence of UBSR-TD. Since the algorithm updates the feature parameter $\bmtheta$, we first introduce notation linking the value-function space to the feature space.

Before proceeding, we assume that the Markov chain $P$ is ergodic. This assumption is standard in the RL literature \citep{Tsitsiklis1997, Borkar2025aoap}.

\begin{assumption}\label{assump-StationaryDistribution}
    The Markov chain $P$ is finite, aperiodic and irreducible. Consequently, there exists a unique distribution $q$ such that $P^\top q = q$, and $\min_{x\in \mcX} q(x) > 0$. 
\end{assumption}

We now define the projection operator to the feature space. Consider the inner product $\langle \bmv, \bmw \rangle_q := \sum_{i=1}^N q(i) \bmv_i \bmw_i$ and the associated $q$-weighted norm $\norm{v}_q := \sqrt{\langle \bmv, \bmv\rangle_q}$, for any $\bmv,\bmw\in\mbR^N$, where $\bmv_i$ denotes the $i$-th component of $\bmv$. Let $\mathrm{span}(\bmPhi)$ denote the subspace spanned by the feature vectors. The orthogonal projection
$\Pi:\mcL(\mcX)\to \mathrm{span}(\bmPhi)$ under the $q$-weighted norm is defined by
\begin{equation*}
    \Pi(\bmv) := \argmin_{\bmz\in \mathrm{span}(\bmPhi)} \norm{\bmv-\bmz}_q.
\end{equation*}

Following the results in the risk-neutral case \citep{Tsitsiklis1997}, we also try to find an optimal $\bmtheta^*$ that is a good approximation for $V$. Here we define the ``good approximation" as the discrepancy $\norm{\bmPhi \bmtheta^* - V}_q$ can be controlled by $\norm{\Pi V - V}_q$ multiplied by some fixed constant.

We make the following assumptions that are essential for our analysis.

\begin{assumption}\label{assump-BoundedSlope}
    There exist $L_1,\epsilon_1 > 0$ such that $0 < \epsilon_1 \le \frac{\ell(x) - \ell(y)}{x-y}\le L_1$, for all $x\neq y \in \mbR$.
\end{assumption}

\begin{assumption}\label{assump-Convexity}
    The loss function $\ell$ is either convex or concave on $[0,\infty)$ and either convex or concave on $(-\infty,0)$.
\end{assumption}

\begin{assumption}\label{assump-phibounded}
    There exists a constant $M>0$ such that $\max_{x\in\mcX}\norm{\bmphi(x)}_2\le M$.
\end{assumption}

Assumption~\ref{assump-BoundedSlope} is standard in the UBSR literature \citep{Shen2014neuralcomp, Hu2018nrl, WangWK2025neurips} and is typically used to establish contraction of the associated risk-aware Bellman operators. The risk-neutral, expectile, and soft-quantile risk measures satisfy it by definition. Although ERM does not, a truncated variant can be constructed to satisfy the assumption (see Section A.2 in \cite{Shen2014neuralcomp}).

We next show some results regarding the stochastic value iteration \eqref{eq-UBSRStochasticVI}.  
Notice that \eqref{eq-UBSRStochasticVI} can be written equivalently into the following general stochastic approximation form:
\begin{equation}\label{eq-SA-FixedPoint}
    V_{n+1}(x) = V_n(x) + \frac{\eta_n}{\alpha} \left( (\mcH V_n)(x) - V_n(x) + M_{n+1}(x)\right),\quad \forall x\in\mcX,
\end{equation}
where $\alpha$ is some positive constant that will be specified later, $\mcH:\mcL(\mcX) \to \mcL(\mcX)$ is an operator and $M_{n+1}$ is a martingale difference sequence defined as follows: for all $x\in\mcX$,
\begin{equation}\label{eq-OperatormcH}
    \begin{aligned}
        (\mcH V_n) (x) &:= \alpha \sum_{x'\in\mcX} P(x'|x) \ell( \gamma V_n(x') - V_n(x) + c(x)) + V_n(x),\\
        M_{n+1}(x) &:= \alpha \ell(\gamma V_n(X') - V_n(x) + c(x)) - \alpha \sum_{x'\in\mcX} P(x'|x) \ell(\gamma V(x') - V(x) + c(x)),
    \end{aligned}
\end{equation}
where $X'\sim P(\cdot|x)$.
We have the following result regarding the operator $\mcH$.

\begin{lemma}\label{lemma:mcHProperties}
    Under Assumptions \ref{assump-StationaryDistribution} and \ref{assump-BoundedSlope}, let $\gamma < \epsilon_1/L_1$ and select $\alpha \in (0, \frac{2(\epsilon_1 - \gamma L_1)}{L_1^2(1+\gamma)^2})$. Then the operator $\mcH$ is an $\bar{\alpha}$-contraction on $(\mcL(\mcX), \norm{\cdot}_q)$ for some $\bar{\alpha} \in (0,1)$. Consequently, $\mcH$ admits a unique fixed point $V$, satisfying $V=\mcH V$. Furthermore, $V$ is a solution to \eqref{eq-UBSRRoot}.
\end{lemma}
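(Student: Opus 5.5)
The plan is to verify the contraction property directly in the $q$-weighted norm, using the slope bounds of Assumption~\ref{assump-BoundedSlope} together with the stationarity of $q$ from Assumption~\ref{assump-StationaryDistribution}. After that, Banach's fixed point theorem and the definition of $\mcH$ in \eqref{eq-OperatormcH} give the remaining claims.

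Fix $U,W\in\mcL(\mcX)$ and set $D:=U-W$. For each pair $(x,x')$ the two arguments of $\ell$ in $(\mcH U)(x)$ and $(\mcH W)(x)$ differ by $\gamma D(x')-D(x)$. By Assumption~\ref{assump-BoundedSlope} there is a number $k(x,x')\in[\epsilon_1,L_1]$ such that
\begin{equation*}
    \ell(\gamma U(x')-U(x)+c(x))-\ell(\gamma W(x')-W(x)+c(x)) = k(x,x')\bigl(\gamma D(x')-D(x)\bigr).
\end{equation*}
If the two arguments coincide, any $k$ in the interval works. Hence
\begin{equation*}
    (\mcH U-\mcH W)(x)=\sum_{x'}P(x'|x)\bigl[D(x)-\alpha k(x,x')\bigl(D(x)-\gamma D(x')\bigr)\bigr].
\end{equation*}
By Jensen's inequality (convexity of the square), $\norm{\mcH U-\mcH W}_q^2$ is at most
\begin{equation*}
    \sum_{x,x'}q(x)P(x'|x)\bigl[D(x)-\alpha k(D(x)-\gamma D(x'))\bigr]^2 .
\end{equation*}
Expanding this square gives three terms, which I bound pointwise.
\begin{itemize}
    \item The leading term is simply $D(x)^2$.
    \item The cross term is $-2\alpha k D(x)^2+2\alpha k\gamma D(x)D(x')$. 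Using $k\ge\epsilon_1$ on the first piece and $k\le L_1$ on the second, it is at most $-2\alpha\epsilon_1D(x)^2+2\alpha\gamma L_1|D(x)||D(x')|$.
    \item The quadratic term is at most $\alpha^2L_1^2(|D(x)|+\gamma|D(x')|)^2$.
\end{itemize}

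Now sum against $q(x)P(x'|x)$. Because $P^\top q=q$, we have $\sum_{x,x'}q(x)P(x'|x)D(x')^2=\norm{D}_q^2$. By Cauchy--Schwarz, $\sum q(x)P(x'|x)|D(x)||D(x')|\le\norm{D}_q^2$. Together these give
\begin{equation*}
    \norm{\mcH U-\mcH W}_q^2\le\bigl(1-2\alpha(\epsilon_1-\gamma L_1)+\alpha^2L_1^2(1+\gamma)^2\bigr)\norm{D}_q^2 .
\end{equation*}
The bracket is strictly less than $1$ exactly when $0<\alpha<\frac{2(\epsilon_1-\gamma L_1)}{L_1^2(1+\gamma)^2}$. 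This interval is nonempty because $\gamma<\epsilon_1/L_1$. Take $\bar\alpha$ to be the square root of the maximum of the bracket and some small positive number; then $\bar\alpha\in(0,1)$ and $\mcH$ is an $\bar\alpha$-contraction. The main care needed is the sign handling in the cross term: the lower slope bound must multiply $D(x)^2$, while the upper bound multiplies $|D(x)||D(x')|$. Stationarity of $q$ is what lets the $D(x')$ terms collapse back to $\norm{D}_q$.

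Since $\min_x q(x)>0$, $\norm{\cdot}_q$ is a genuine norm on the finite-dimensional space $\mcL(\mcX)$, so that space is complete. Banach's theorem then yields a unique fixed point $V=\mcH V$. Finally, $V=\mcH V$ means $\alpha\sum_{x'}P(x'|x)\ell(\gamma V(x')-V(x)+c(x))=0$ for every $x$. Dividing by $\alpha>0$ gives exactly \eqref{eq-UBSRRoot}.
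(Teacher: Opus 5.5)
Your proposal is correct and follows essentially the same route as the paper's proof: a slope $k(x,x')\in[\epsilon_1,L_1]$ from Assumption~\ref{assump-BoundedSlope}, Jensen on the $P(\cdot|x)$-average, and stationarity of $q$ with Cauchy--Schwarz to collapse the $D(x')$ terms, giving the same factor $1-2\alpha(\epsilon_1-\gamma L_1)+\alpha^2L_1^2(1+\gamma)^2$ before invoking Banach. The differences are cosmetic: you bound the cross term by Cauchy--Schwarz where the paper uses AM--GM, and you make explicit the completeness of $(\mcL(\mcX),\norm{\cdot}_q)$ from $\min_x q(x)>0$ and the positivity of $\bar\alpha$, which the paper handles by checking that the minimum of the quadratic in $\alpha$ is positive.
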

\begin{remark}
    Under Assumptions \ref{assump-StationaryDistribution} and \ref{assump-BoundedSlope}, \cite{Shen2014neuralcomp} shows that, for any $\gamma\in(0,1)$, $\mcH$ is contractive under the infinity norm $\norm{\cdot}_\infty$ for some $\alpha>0$. Thus, \eqref{eq-UBSRRoot} has a unique solution, and \eqref{eq-UBSRStochasticVI} converges almost surely without further restrictions on $\gamma$.
    With function approximation, the analysis is restricted to $\mathrm{span}(\bmPhi)$, where $\Pi$ is non-expansive under $\norm{\cdot}_q$ but not necessarily under $\norm{\cdot}_\infty$. Contractivity of $\mcH$ under $\norm{\cdot}_q$ is therefore required, potentially imposing an additional restriction on $\gamma$. A similar condition appears in \cite{Kose2021jmlr} for the dynamic coherent risk setting. The role and necessity of this restriction are further illustrated in Example \ref{exmp:UBSRTDCounterExample}.
\end{remark}

Since $\Pi$ is non-expansive under $\norm{\cdot}_q$, the composition $\Pi\mcH$ is also an $\bar{\alpha}$-contraction under $\norm{\cdot}_q$. Because $\bmPhi$ has full rank, there exists a unique $\bmtheta^* \in \mbR^d$ such that $\bmPhi \bmtheta^* = \Pi\mcH (\bmPhi \bmtheta^*)$. Thus, $\bmPhi \bmtheta^*$ is the unique fixed point of the operator $\Pi\mcH$. We formally state the result as the following lemma.

\begin{lemma}\label{lemma:Phitheta-VstarBound}
    Under the assumptions of Lemma \ref{lemma:mcHProperties}, $\Pi \mcH$ is a contraction on $(\mcL(\mcX),\norm{\cdot}_q)$ and has a fixed point of the form $\bmPhi\bmtheta^*$ for a unique $\bmtheta^* \in \mbR^d$. Moreover, $\norm{\bmPhi \bmtheta^* - V}_q \le \frac{1}{1-\bar{\alpha}} \norm{\Pi V - V}_q$.
\end{lemma}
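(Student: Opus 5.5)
We take Lemma \ref{lemma:mcHProperties} as given: $\mcH$ is an $\bar{\alpha}$-contraction on $(\mcL(\mcX),\norm{\cdot}_q)$ with unique fixed point $V$. The argument then has three parts: contraction of $\Pi\mcH$, existence and uniqueness of $\bmtheta^*$, and the error bound.

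\emph{Contraction of $\Pi\mcH$.} Since $\Pi$ is the orthogonal projection onto $\mathrm{span}(\bmPhi)$ with respect to $\langle\cdot,\cdot\rangle_q$, it is linear and non-expansive. Indeed, $\norm{\bmv}_q^2=\norm{\Pi\bmv}_q^2+\norm{\bmv-\Pi\bmv}_q^2$ by the Pythagorean identity, so $\norm{\Pi\bmv}_q \le \norm{\bmv}_q$. Here we use Assumption \ref{assump-StationaryDistribution}: $q(x)>0$ for all $x$, so $\norm{\cdot}_q$ is a genuine norm and the inner product is nondegenerate. Linearity then gives, for any $\bmv,\bmw$,
\begin{equation*}
\norm{\Pi\mcH\bmv-\Pi\mcH\bmw}_q=\norm{\Pi(\mcH\bmv-\mcH\bmw)}_q\le \norm{\mcH\bmv-\mcH\bmw}_q\le \bar{\alpha}\norm{\bmv-\bmw}_q .
\end{equation*}
Hence $\Pi\mcH$ is an $\bar{\alpha}$-contraction.

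\emph{Existence and uniqueness of $\bmtheta^*$.} The space $\mcL(\mcX)\cong\mbR^N$ is finite-dimensional, so it is complete under $\norm{\cdot}_q$. Banach's fixed point theorem then gives a unique fixed point $\bmu^*$ of $\Pi\mcH$. Because $\bmu^*=\Pi\mcH\bmu^*$ lies in the range of $\Pi$, we have $\bmu^*\in\mathrm{span}(\bmPhi)$, so $\bmu^*=\bmPhi\bmtheta^*$ for some $\bmtheta^*$. This $\bmtheta^*$ is unique because the columns of $\bmPhi$ are linearly independent, i.e.\ $\bmPhi$ has full column rank and $\bmPhi\bmtheta=\bmPhi\bmtheta'$ forces $\bmtheta=\bmtheta'$.

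\emph{Error bound.} Using $\bmPhi\bmtheta^*=\Pi\mcH\bmPhi\bmtheta^*$ and $V=\mcH V$, insert $\Pi V$ and apply the triangle inequality:
\begin{equation*}
\norm{\bmPhi\bmtheta^*-V}_q\le \norm{\Pi\mcH\bmPhi\bmtheta^*-\Pi\mcH V}_q+\norm{\Pi V-V}_q\le \bar{\alpha}\norm{\bmPhi\bmtheta^*-V}_q+\norm{\Pi V-V}_q .
\end{equation*}
The first term is bounded by the contraction property of $\Pi\mcH$ just established. Rearranging, which is valid since $\bar{\alpha}<1$, yields $\norm{\bmPhi\bmtheta^*-V}_q\le \frac{1}{1-\bar{\alpha}}\norm{\Pi V-V}_q$.

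The only step needing care is non-expansiveness of $\Pi$, which depends on positivity of $q$ (Assumption \ref{assump-StationaryDistribution}). Everything else is routine once Lemma \ref{lemma:mcHProperties} is in hand.
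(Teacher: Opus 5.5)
Your proof is correct and follows the paper's argument essentially step for step: non-expansiveness of $\Pi$ via the Pythagorean identity, Banach's theorem together with linear independence of the columns of $\bmPhi$ for existence and uniqueness of $\bmtheta^*$, and the triangle inequality through $\Pi V$ followed by rearrangement. The only cosmetic difference is in bounding $\norm{\bmPhi\bmtheta^*-\Pi V}_q$: you write $\Pi V=\Pi\mcH V$ and invoke the contraction of $\Pi\mcH$ directly, whereas the paper first drops $\Pi$ by non-expansiveness and then uses the contraction of $\mcH$, which is the same estimate.
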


Clearly, if $d = N$, we have $\norm{\Pi V - V}_q = 0$ and Lemma \ref{lemma:Phitheta-VstarBound} implies that $\bmPhi \bmtheta^* = V$.

From \eqref{eq-UBSR-TD}, the UBSR-TD update can be written as
\begin{equation}\label{eq-UBSR-TD-SAwithMarkovNoise}
\begin{aligned}
    \frac{\alpha(\bmtheta_{n+1} - \bmtheta_n)}{\eta_n} = \alpha\bmphi(X_n) \ell\left( (\gamma \bmphi(X_{n+1}) - \bmphi(X_n))^\top \bmtheta_n + c(X_n) \right)
    =: H(\bmtheta_n, Y_{n+1}),
\end{aligned}
\end{equation}
where $Y_{n+1} := (X_{n}, X_{n+1})$, and $\alpha > 0$ is some constant. 
Define the filtration $\mcF_n := \sigma(\{\bmtheta_i, X_i\}_{i\le n})$. Under Assumption \ref{assump-StationaryDistribution}, it is straightforward to see that $\{Y_n\}_{n\ge 1}$ is also a Markov chain and is $\mcF_n$-measurable. Hence \eqref{eq-UBSR-TD-SAwithMarkovNoise} fits in the classic form of a stochastic approximation with Markovian noise \citep{ Borkar2025aoap, LiuSZ2025jmlr}. Then under proper conditions, the stochastic approximation \eqref{eq-UBSR-TD-SAwithMarkovNoise} converges almost surely to an equilibrium point of the mean-field ODE
\begin{equation}\label{eq-TD-ODE}
    \begin{aligned}
        \frac{d \bmvartheta(t)}{dt} &= h(\bmvartheta(t)) := \mbE_{Y \sim d_{Y}}[H(\bmvartheta(t), Y)]\\
        &= \alpha \sum_{x\in\mcX} q(x) \bmphi(x) \sum_{x'\in\mcX} P(x'|x) \ell\left( (\gamma \bmphi(x') - \bmphi(x))^\top \bmvartheta(t) + c(x) \right),
    \end{aligned}
\end{equation}
where $\bmvartheta(t)$ is the trajectory of the ODE at time $t$ in $\mbR^d$, and $d_{Y}$ is the stationary distribution of the Markov chain $\{Y_n\}_{n\ge 1}$ with $X_0 \sim q$ and $X_n \sim P(\cdot|X_{n-1})$ for $n\ge 1$.
If the ODE \eqref{eq-TD-ODE} has an equilibrium point $\tilde{\bmtheta}$, then we have
\begin{equation}\label{eq-ODE-equilibrium}
    \bm{0} = \sum_{x\in\mcX} q(x) \bmphi(x) \sum_{x'\in\mcX} P(x'|x) \ell\left( (\gamma \bmphi(x') - \bmphi(x))^\top \tilde{\bmtheta} + c(x) \right),
\end{equation}
which is exactly the root finding problem \eqref{eq-UBSR-TDRoot}.

It turns out that the conditions in Lemma \ref{lemma:mcHProperties} ensure that the ODE \eqref{eq-TD-ODE} has a unique equilibrium, which coincides with the optimal $\bmtheta^*$ defined in Lemma \ref{lemma:Phitheta-VstarBound}.

\begin{lemma}\label{lemma:ODEUniqueEquilibrium}
    Under the assumptions of Lemma \ref{lemma:mcHProperties}, the ODE \eqref{eq-TD-ODE} has a unique equilibrium point $\bmtheta^*$, with $\bmPhi \bmtheta^*$ being the unique fixed point for the operator $\Pi \mcH$.
\end{lemma}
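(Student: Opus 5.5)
We will show that the equilibrium condition \eqref{eq-ODE-equilibrium} is exactly the fixed-point equation $\bmPhi\bmtheta = \Pi\mcH(\bmPhi\bmtheta)$ written in coordinates. Uniqueness of the equilibrium then follows from the uniqueness of $\bmtheta^*$ in Lemma \ref{lemma:Phitheta-VstarBound}.

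\textbf{Step 1: rewrite $h$ through $\mcH$.} Let $D := \mathrm{diag}(q)$. For any $\bmtheta$, set $v := \bmPhi\bmtheta$, so that $v(x) = \bmphi(x)^\top\bmtheta$. From \eqref{eq-OperatormcH},
\[
(\mcH v)(x) - v(x) = \alpha \sum_{x'} P(x'|x)\,\ell\bigl(\gamma v(x') - v(x) + c(x)\bigr),
\]
and the argument of $\ell$ is exactly the one appearing in \eqref{eq-TD-ODE}. Hence
\[
h(\bmtheta) = \sum_x q(x)\bmphi(x)\bigl[(\mcH v)(x) - v(x)\bigr] = \bmPhi^\top D\bigl(\mcH(\bmPhi\bmtheta) - \bmPhi\bmtheta\bigr).
\]

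\textbf{Step 2: connect to the projection.} Because $\bmPhi$ has full column rank and $q>0$ (Assumption \ref{assump-StationaryDistribution}), $\bmPhi^\top D\bmPhi$ is invertible. The $q$-weighted projection is then $\Pi = \bmPhi(\bmPhi^\top D\bmPhi)^{-1}\bmPhi^\top D$, which is the normal-equations characterization of the $\argmin$. For a vector $u$, the condition $\bmPhi^\top D u = \bm 0$ is equivalent to $u \perp_q \mathrm{span}(\bmPhi)$, and hence to $\Pi u = 0$.

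Take $u = \mcH(\bmPhi\bmtheta) - \bmPhi\bmtheta$. Then $h(\bmtheta)=\bm 0$ if and only if $\Pi\mcH(\bmPhi\bmtheta) = \Pi\bmPhi\bmtheta = \bmPhi\bmtheta$. So the equilibria of \eqref{eq-TD-ODE} are exactly the $\bmtheta$ for which $\bmPhi\bmtheta$ is a fixed point of $\Pi\mcH$. The factor $\alpha>0$ does not affect the zero set.

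\textbf{Step 3: existence and uniqueness.} By Lemma \ref{lemma:Phitheta-VstarBound}, $\Pi\mcH$ is a contraction on $(\mcL(\mcX),\norm{\cdot}_q)$, so it has a unique fixed point, and that fixed point is $\bmPhi\bmtheta^*$. Suppose $\tilde\bmtheta$ is an equilibrium. By Step 2, $\bmPhi\tilde\bmtheta$ is a fixed point, so $\bmPhi\tilde\bmtheta = \bmPhi\bmtheta^*$. Injectivity of $\bmPhi$ then gives $\tilde\bmtheta=\bmtheta^*$. Conversely, $\bmtheta^*$ is an equilibrium by the reverse direction of Step 2.

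\textbf{Where care is needed.} The only delicate point is the identification $h(\bmtheta) = \bmPhi^\top D(\mcH - I)(\bmPhi\bmtheta)$ together with the normal-equation form of $\Pi$. Everything else is inherited from Lemmas \ref{lemma:mcHProperties} and \ref{lemma:Phitheta-VstarBound}; in particular, the restriction $\gamma<\epsilon_1/L_1$ and the choice of $\alpha$ enter only through the contraction property.
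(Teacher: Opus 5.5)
Your proof is correct and follows essentially the same route as the paper. Both arguments identify the equilibrium condition with the normal equations $\bmPhi^\top \mathrm{diag}(q)\bigl(\mcH(\bmPhi\bmtheta)-\bmPhi\bmtheta\bigr)=\bm{0}$ for the fixed point of $\Pi\mcH$ (the paper through the orthogonality condition of the Hilbert projection theorem, you through the explicit formula for $\Pi$), then take uniqueness from Lemma~\ref{lemma:Phitheta-VstarBound} and the full column rank of $\bmPhi$, with your version also stating outright the converse direction (every equilibrium gives a fixed point of $\Pi\mcH$) that the paper leaves implicit.
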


We now turn to establish the almost sure convergence of the UBSR-TD algorithm \eqref{eq-UBSR-TD}. To do this, define an operator $\hat{\mcH}: \mbR^d \to \mbR^d$ as
\begin{equation}\label{eq-OperatorhatH}
    \hat{\mcH} (\bmtheta):= \alpha \sum_{x\in\mcX} q(x) \bmphi(x) \sum_{x'\in\mcX} P(x'|x) \ell\left( (\gamma \bmphi(x') - \bmphi(x))^\top \bmtheta + c(x) \right) + \bmtheta,\quad \forall \bmtheta\in\mbR^d.
\end{equation}
Then the ODE \eqref{eq-TD-ODE} can be written as $\frac{d\bmvartheta(t)}{dt} = \hat{\mcH} (\bmvartheta(t)) - \bmvartheta(t)$ and clearly the equilibrium point $\bmtheta^*$ of the ODE \eqref{eq-TD-ODE} is a fixed point for the operator $\hat{\mcH}$, i.e., $\hat{\mcH} (\bmtheta^*) = \bmtheta^*$, which is exactly a solution to the root finding problem \eqref{eq-UBSR-TDRoot}.

We make the following assumption on the iterative step size.
\begin{assumption}\label{assump-StepSizeLimit}
    The step size $\{\eta_n\}_{n=0}^\infty$ satisfies $\sum_{n=0}^\infty \eta_n = \infty$ and $\sum_{n=0}^\infty \eta_n^2 < \infty$.
    Furthermore, the limit $\lim_{n \to \infty} \left( \eta_{n+1}^{-1} - \eta_n^{-1} \right) =: \eta^* $exists and is finite.
\end{assumption}
This assumption is somewhat stronger than the classical Robbins-Monro condition and is required for technical reasons (see \cite{Borkar2025aoap} and \cite{LiuSZ2025jmlr} for details). Commonly used choices, such as $\eta_n = b_1/(n + b_2)^{\delta}$ with $b_1,b_2 > 0, \delta\in (1/2,1]$, satisfy this requirement.

We are ready to present the almost sure convergence result for the UBSR-TD algorithm.
\begin{theorem}\label{thm:UBSRTDASConvergence}
    Suppose that Assumptions \ref{assump-BoundedSlope}-\ref{assump-StepSizeLimit} hold and that $\gamma < \epsilon_1/L_1$. Then, the UBSR-TD algorithm \eqref{eq-UBSR-TD} converges almost surely to $\bmtheta^*$, where $\bmtheta^*$ is defined as in Lemma \ref{lemma:Phitheta-VstarBound}.
\end{theorem}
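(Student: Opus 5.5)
We will cast \eqref{eq-UBSR-TD-SAwithMarkovNoise} as a stochastic approximation with Markovian noise and invoke the almost-sure convergence theorem of \cite{Borkar2025aoap} (equivalently \cite{LiuSZ2025jmlr}). That theorem requires four things:
\begin{itemize}
    \item[(A)] the noise chain $\{Y_n\}$ is a finite, irreducible, aperiodic Markov chain;
    \item[(B)] $H(\cdot,y)$ is Lipschitz uniformly in $y$;
    \item[(C)] the step sizes satisfy Assumption \ref{assump-StepSizeLimit};
    \item[(D)] the iterates remain bounded, which is established via a scaled-limit ODE, and the mean ODE \eqref{eq-TD-ODE} has a globally asymptotically stable equilibrium.
\end{itemize}
Condition (C) is assumed outright. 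For (A), $Y_{n+1}=(X_n,X_{n+1})$ lives on the finite set $\{(x,x'):P(x'|x)>0\}$. It inherits irreducibility and aperiodicity from Assumption \ref{assump-StationaryDistribution}, and its stationary law is $d_Y(x,x')=q(x)P(x'|x)$. For (B), Assumptions \ref{assump-BoundedSlope} and \ref{assump-phibounded} give
\[
\norm{H(\bmtheta,y)-H(\bmtheta',y)}_2\le \alpha M L_1(1+\gamma)M\norm{\bmtheta-\bmtheta'}_2,
\]
together with linear growth $\norm{H(\bmtheta,y)}_2\le \alpha M(|\ell(\bar c)|+L_1\bar c+L_1(1+\gamma)M\norm{\bmtheta}_2)$, using $\ell(0)=0$. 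So $H$ is globally Lipschitz, and $h$ is Lipschitz as well.

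The core step is global asymptotic stability of $\bmtheta^*$ for $\dot{\bmvartheta}=h(\bmvartheta)$. Lemma \ref{lemma:ODEUniqueEquilibrium} already gives uniqueness of the equilibrium, so we use the Lyapunov function $W(\bmvartheta)=\frac12\norm{\bmvartheta-\bmtheta^*}_2^2$.

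Write $\bmD=\mathrm{diag}(q)$ and $\bmv=\bmPhi\bmvartheta$, $\bmv^*=\bmPhi\bmtheta^*$. Then $h(\bmvartheta)=\bmPhi^\top\bmD(\mcH\bmv-\bmv)$, which is the identity $\hat{\mcH}(\bmvartheta)-\bmvartheta=\bmPhi^\top\bmD(\mcH\bmPhi\bmvartheta-\bmPhi\bmvartheta)$. Since $h(\bmtheta^*)=\bm 0$,
\[
\nabla W\cdot h=(\bmvartheta-\bmtheta^*)^\top\bmPhi^\top\bmD\big[(\mcH\bmv-\mcH\bmv^*)-(\bmv-\bmv^*)\big]=\langle \bmv-\bmv^*,\Pi(\mcH\bmv-\mcH\bmv^*)\rangle_q-\norm{\bmv-\bmv^*}_q^2 .
\]
The projection may be inserted because $\bmv-\bmv^*\in\mathrm{span}(\bmPhi)$. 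By Cauchy--Schwarz and the $\bar\alpha$-contraction of $\Pi\mcH$ from Lemma \ref{lemma:Phitheta-VstarBound}, this is at most
\[
-(1-\bar\alpha)\norm{\bmv-\bmv^*}_q^2\le -(1-\bar\alpha)\lambda_{\min}(\bmPhi^\top\bmD\bmPhi)\norm{\bmvartheta-\bmtheta^*}_2^2 .
\]
Here $\bmPhi^\top\bmD\bmPhi\succ0$ by full column rank and $q>0$. Hence $\bmtheta^*$ is globally exponentially stable.

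The step we expect to be the main obstacle is boundedness of the iterates (stability). We would use the Borkar--Meyn scaled ODE: let $h_\infty(\bmvartheta)=\lim_{r\to\infty}h(r\bmvartheta)/r$ and show that the origin is globally asymptotically stable for $\dot{\bmvartheta}=h_\infty(\bmvartheta)$.

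The difficulty is that the limit need not exist for a general $\ell$. This is where Assumption \ref{assump-Convexity} enters: convexity or concavity on each half-line makes $\ell(x)/x$ monotone on each half-line, so the asymptotic slopes $\ell'_{\pm\infty}:=\lim_{x\to\pm\infty}\ell(x)/x$ exist and lie in $[\epsilon_1,L_1]$. Then
\[
h_\infty(\bmvartheta)=\alpha\sum_x q(x)\bmphi(x)\sum_{x'}P(x'|x)\,\ell_\infty\big((\gamma\bmphi(x')-\bmphi(x))^\top\bmvartheta\big),
\]
where $\ell_\infty$ is the piecewise-linear map with slopes $\ell'_{+\infty}$ and $\ell'_{-\infty}$. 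The constant $c(x)$ disappears in the limit, and convergence is locally uniform by the Lipschitz bound.

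The function $\ell_\infty$ is itself a strictly increasing loss with slopes in $[\epsilon_1,L_1]$ and $\ell_\infty(0)=0$. Lemma \ref{lemma:mcHProperties} and the Lyapunov computation above therefore apply verbatim with $c\equiv0$. The corresponding fixed point is $\bm 0$, since the zero cost gives $V=0$, and it is globally asymptotically stable.

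Borkar--Meyn then gives $\sup_n\norm{\bmtheta_n}<\infty$ almost surely, and the Markovian-noise ODE theorem yields $\bmtheta_n\to\bmtheta^*$ almost surely. The existence of $\eta^*$ in Assumption \ref{assump-StepSizeLimit} is used only to meet that theorem's hypotheses on the Poisson-equation error terms.
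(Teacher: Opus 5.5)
Your proof is correct and follows the same skeleton as the paper. Both cast UBSR-TD as a Markovian-noise stochastic approximation and verify the hypotheses of Theorem 1 of \cite{Borkar2025aoap}: the step sizes, a uniform Lipschitz bound on $H$, the drift condition (trivial for the finite pair chain), and global asymptotic stability of both the scaled ODE and the mean ODE.

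The genuine difference is in the stability step. The paper proves (Lemma \ref{lemma:hatHContraction}) that $\hat{\mcH}$ is a contraction in $\norm{\cdot}_2$ and then cites Theorem 12.1 of \cite{Borkar2023book}. To control the quadratic term coming from $\alpha^2\norm{\hat{\mcH}(\bmtheta_1)-\hat{\mcH}(\bmtheta_2)-(\bmtheta_1-\bmtheta_2)}_2^2$, it must impose an upper bound on the largest singular value of $\sqrt{\mathrm{diag}(q)}\bmPhi$, which it justifies by rescaling the features.

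You instead take $W(\bmvartheta)=\frac12\norm{\bmvartheta-\bmtheta^*}_2^2$ and use only the $\norm{\cdot}_q$-contraction of $\Pi\mcH$ that is already established. This gives global exponential stability, with rate $(1-\bar\alpha)\lambda_{\min}(\bmPhi^\top\mathrm{diag}(q)\bmPhi)$, and requires no normalization of $\bmPhi$. In fact, the first chain of inequalities in the paper's proof of Lemma \ref{lemma:hatHContraction} is already a Lyapunov-decrease bound of exactly this type. Your argument shows that the second half of that lemma, the Euclidean contraction, is not needed for Theorem \ref{thm:UBSRTDASConvergence}.

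You also derive the existence and the piecewise-linear form of $\ell_\infty$ directly, from monotonicity of the secant slopes $\ell(x)/x$ on each half-line. The paper instead cites Lemma B.19 of \cite{WangWK2025neurips}. The paper's route yields a contraction property of $\hat{\mcH}$ and allows a black-box citation; yours is shorter and assumption-free with respect to feature scaling.

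One small correction: do not present \cite{LiuSZ2025jmlr} as an equivalent alternative. The paper notes that its Assumption 3 generally fails in this setting, so your argument should rest on \cite{Borkar2025aoap} alone.
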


Theorem~\ref{thm:UBSRTDASConvergence} establishes that UBSR-TD converges almost surely to a unique $\bmtheta^*$, where $\Phi\bmtheta^*$ is a fixed point of the projected risk-aware Bellman operator $\Pi\mcH$. Lemma~\ref{lemma:ODEUniqueEquilibrium} then indicates that such $\bmtheta^*$ is a unique equilibrium of the ODE \eqref{eq-TD-ODE} and hence a solution to the root-finding problem \eqref{eq-UBSR-TDRoot}.

We comment on the proof of Theorem \ref{thm:UBSRTDASConvergence}. Since the operator $\hat{\mcH}$ is nonlinear in $\bmtheta$, the classical linear stochastic approximation result of \cite{Tsitsiklis1997} is not directly applicable. We instead invoke the recent stochastic approximation framework with Markovian noise developed in \cite{Borkar2025aoap} to establish the almost sure convergence. 
Although aperiodicity of the Markov chain is likely unnecessary to establish the almost sure convergence, as noted in \cite{Tsitsiklis1997} for TD(0), we impose this assumption because our proof relies on Theorem 1 of \cite{Borkar2025aoap}, where aperiodicity is used to establish geometric ergodicity of the Markov chain. Alternative techniques, such as those in \cite{LiuSZ2025jmlr}, may establish convergence under weaker conditions and general state space, but would require a more involved analysis, as Assumption 3 of \cite{LiuSZ2025jmlr} generally does not hold in our setting.

The condition $\gamma < \epsilon_1/L_1$ is essential for the well-posedness and convergence of UBSR-TD, ensuring convergence to a unique stationary point of the ODE \eqref{eq-TD-ODE}. When it is violated, the limiting ODE \eqref{eq-TD-ODE} may admit multiple or unstable equilibria, as shown in the following example.

\begin{example}\label{exmp:UBSRTDCounterExample}
    Consider a zero-cost, two-state Markov chain with symmetric transitions $P(x_i|x_j) = 1- P(x_i | x_i) = p$ for $i\neq j\in\{1,2\}$. 
    Let the feature dimension be one with $\bmPhi=[1\ \delta]^\top$ and $\delta=\frac{891\pm \sqrt{465917}}{182}$. 
    Consider the expectile risk with loss function $\ell(y)= L_1 y$ for $y\ge 0$ and $\ell(y)=\epsilon_1 y$ for $y<0$. 
    We set $\epsilon_1 = 1 - L_1 = 1/11$ and $\gamma=9/10>\epsilon_1/L_1=1/10$, violating the conditions of Theorem~\ref{thm:UBSRTDASConvergence}.
    Nevertheless, one can easily identify the true value function $V(x)=0$ for all $x$, exactly represented by $\bmtheta^*=0$. 
    When $p=9/10$, Equation \eqref{eq-UBSR-TDRoot} holds for every $\bmtheta\geq 0$. Consequently, the ODE \eqref{eq-TD-ODE} has a continuum of equilibria because $\frac{d\bmvartheta(t)}{dt}=0$ for $\bmvartheta(t)\ge 0$. Therefore, within this region, the UBSR-TD iterates have zero expected drift. 
    When $p\in(9/10,1)$, the ODE \eqref{eq-TD-ODE} has a unique equilibrium at $\bmtheta^*=0$, which is unstable from the positive side, since $\frac{d \bmvartheta(t)}{dt}>0$ whenever $\bmvartheta(t)>0$. Consequently, for $\bmtheta_0>0$, the UBSR-TD iterates exhibit a positive expected drift away from zero.

    The bounds on $\gamma$ can be tightened by incorporating transition and feature information. For example, \cite{Kose2021jmlr} derives a sufficient condition $\gamma < 1/\sqrt{1+\chi}$ using the distortion coefficient $\chi$ for coherent risk measures. In our example, this gives $\gamma < \sqrt{19}/10$ for the expectile when $p=9/10$, improving on $1/10$ but remaining below the true threshold $9/10$. Therefore, the condition $\gamma < \epsilon_1/L_1$ should be viewed as a robust sufficient condition for general UBSR satisfying Assumption~\ref{assump-BoundedSlope}, with sharper bounds available under problem-specific structures.
\end{example}

\section{Extensions}\label{sec:Extensions}
TD-type algorithms are known to converge slowly when the step size is not carefully tuned \citep{Sutton2018book}. To address this issue, we introduce two extensions: UBSR-TD($\lambda$) and UBSR-Newton. These algorithms possess distinct features and, as shown in our experiments, can outperform UBSR-TD in certain settings. We present them to highlight that existing risk-neutral policy evaluation methods can be adapted to the dynamic UBSR objective by simply incorporating the loss function to the TD error.
The algorithm pseudo-codes are provided in Appendix \ref{sec:AlgorithmPseduoCodes}.

\subsection{UBSR-TD(\texorpdfstring{$\lambda$}{lambda}) Algorithm}

Eligibility traces can accelerate learning by incorporating temporal-difference information over multiple time steps \citep{Dann2014jmlr}. By interpolating between TD learning and Monte-Carlo sampling, they are regarded as one of the most natural extensions of TD-type methods \citep{Sutton2018book}. In the following, we incorporate eligibility traces into UBSR-TD.

Let $\lambda \in [0,1]$. Given an observation of the streaming data $\{X_n\}_{n\ge 0}$. Define a sequence of eligibility vector $\bmzeta_n \in \mbR^d$ as $\bmzeta_n := \sum_{k=0}^n (\gamma \lambda)^{n-k} \bmphi(X_k)$, which can be recursively computed by
\begin{equation}\label{eq-UBSR-TDLambda-EligibilityTrace}
     \bmzeta_{n+1} = \gamma \lambda \bmzeta_n + \bmphi(X_{n+1}),
\end{equation}
with $\bmzeta_0 := \bmphi(X_0)$.
The \textit{UBSR-TD($\lambda$) algorithm} is defined as 
\begin{equation}\label{eq-UBSR-TDLambda}
    \bmtheta_{n+1} = \bmtheta_n + \eta_n \bmzeta_n \ell\left((\gamma\bmphi(X_{n+1}) - \bmphi(X_n))^\top \bmtheta_n + c(X_n)\right),
\end{equation}

When $\lambda=0$, the algorithm reduces to UBSR-TD. For $\lambda>0$, however, UBSR-TD($\lambda$) need not converge to the same objective \eqref{eq-UBSR-TDRoot} as UBSR-TD, unlike its counterpart in the risk-neutral setting. This phenomenon is demonstrated in our numerical experiments (see Section \ref{sec:ConvergenceExperimentsd5}). \cite{Kose2021jmlr} (Section 5) develops a related eligibility-trace-based TD-type algorithm for MDPs with dynamic coherent risk measures. Their analysis shows that, in the dynamic risk setting, TD($\lambda$)-type algorithms generally solve a different root-finding problem, with the original objective distorted by a $\lambda$-dependent weight matrix.
Moreover, as shown in our experiments, UBSR-TD(1) generally does not minimize the projected value-function error, in contrast to classical TD(1). 

\subsection{UBSR-Newton Algorithm}
Newton-type methods have been widely used to accelerate TD-type algorithms \citep{Givchi2014acml, PanYC2017aaai}. By leveraging second-order information, they are shown to improve convergence in risk-neutral settings. We next develop an analogous approach for the UBSR framework. Throughout this subsection, we assume that $\ell$ is differentiable.
For notation simplicity, let $\bmX_n := \bmphi(X_n)$, $\bmZ_{n} := \gamma\bmphi(X_{n+1}) - \bmphi(X_n)$, $c_n := c(X_n)$. Then $(\bmX_n,\bmZ_n, c_n)$ represents a one-stage observation generated by the Markov chain $P$. 

For the risk-neutral problem \eqref{eq-RiskNeutralProblem}, \cite{LiuWD2025aos} use the full observation history to update both the second-order information matrix and the improvement direction. Extending this idea to the UBSR setting, given observations ${(\bmX_i,\bmZ_i,c_i)}_{i\ge 0}$, we recursively update
\begin{equation}\label{eq-UBSRNewtonAlgorithmOrigin}
    \hat{\bmtheta}_{n+1} = \frac{1}{n+1} \sum_{i=1}^{n+1} \hat{\bmtheta}_{i-1} - \hat{\bmH}_{n+1}^{-1} \frac{1}{n+1} \sum_{i=1}^{n+1} \bmX_{i} \ell\left(\bmZ_{i}^\top \hat{\bmtheta}_{i-1} + c_{i}\right),
\end{equation}
where $\hat{\bmH}_{n+1}$ is an empirical information matrix for \eqref{eq-UBSR-TDRoot} defined as
\begin{equation}\label{eq-hatH}
    \hat{\bmH}_{n+1} := \frac{1}{n+1} \sum_{i=1}^{n+1} \bmX_{i} \bmZ_{i}^\top \ell'\left( \bmZ_{i}^\top \hat{\bmtheta}_{i-1} + c_{i} \right).
\end{equation}

As suggested in \cite{LiuWD2025aos}, Whenever $\hat{\bmH}_n$ is invertable, the formulation in \eqref{eq-UBSRNewtonAlgorithmOrigin}-\eqref{eq-hatH} can be rewritten in a fully online form by introducing two auxiliary variables and applying the Sherman-Morrison formula. Consequently, the resulting update scheme requires no more than $O(d^2)$ additional computational and memory cost per iteration. Moreover, to ensure the invertibility of an initial $\hat{\bmH}_0$, one can employ a ``warm-up'' procedure: we use the first $n_0$ samples to construct an invertible $\hat{\bmH}_{n_0}$ before carrying out the Newton update.

We summarize the modified iteration procedure as follows. First, we initialize $\hat{\bmtheta}_0$ and use the initial $n_0$ samples to compute:
\begin{equation}\label{eq-PretrainHandL}
    \hat{\bmH}_{n_0} := \frac{1}{n_0} \sum_{i=1}^{n_0} \bmX_i \bmZ_i^\top \ell'(\bmZ_i^\top \hat{\bmtheta}_0 + c_i),\quad 
    \bmL_{n_0} := \frac{1}{n_0} \sum_{i=1}^{n_0} \bmX_i \ell(\bmZ_i^\top \hat{\bmtheta}_0 + c_i),
\end{equation}
ensuring that $\hat{\bmH}_{n_0}$ is invertable. Note that for $\hat{\bmH}_{n_0}$ to be full rank, it is necessary to have $n_0 \ge d$. Let $\hat{\bmtheta}_i = \bar{\bmtheta}_i := \hat{\bmtheta}_0$ for $i = 1,\cdots, n_0$. Then starting from $n = n_0$, we recursively compute
\begin{equation}\label{eq-hattheta=bartheta-hatHinverseL}
    \hat{\bmtheta}_{n+1} = \bar{\bmtheta}_{n+1} - \hat{\bmH}_{n+1}^{-1} \bmL_{n+1},
\end{equation}
where
\begin{equation}\label{eq-barthetaIteration}
     \bar{\bmtheta}_{n+1} := \frac{1}{n+1} \left( n \bar{\bmtheta}_n + \hat{\bmtheta}_n \right),
\end{equation}
\begin{equation}\label{eq-LIteration}
\begin{aligned}
    \bmL_{n+1} := 
     \frac{n}{n+1} \bmL_n + \frac{1}{n+1} \bmX_{n+1} \ell\left(\bmZ_{n+1}^\top \hat{\bmtheta}_n + c_{n+1}\right), 
\end{aligned}
\end{equation}
\begin{equation}\label{eq-hatHn}
    \hat{\bmH}_{n+1} = \frac{n}{n+1} \hat{\bmH}_n + \frac{1}{n+1}
    \bmX_{n+1} \bmZ_{n+1}^\top \ell'\left(\bmZ_{n+1}^\top \hat{\bmtheta}_{n} + c_{n+1} \right).
\end{equation}
Once $\hat{\bmH}_{n}^{-1}$ is available, $\hat{\bmH}_{n+1}^{-1}$ can be computed recursively using the Sherman-Morrison formula:
\begin{equation}\label{eq-UBSRNewton-ShermanMorrison}
    \hat{\bmH}_{n+1}^{-1} = \frac{n+1}{n} \hat{\bmH}_n^{-1} 
    - \frac{\frac{n+1}{n^2} \hat{\bmH}_n^{-1} \bmX_{n+1} \bmZ_{n+1}^\top \hat{\bmH}_n^{-1}}{(\ell'(\bmZ_{n+1}^\top\hat{\bmtheta}_n + c_{n+1}))^{-1} + \frac{1}{n} \bmZ_{n+1}^\top \hat{\bmH}_{n}^{-1} \bmX_{n+1}}.
\end{equation}
We call the update \eqref{eq-PretrainHandL}-\eqref{eq-UBSRNewton-ShermanMorrison} the \textit{UBSR-Newton algorithm}.

The UBSR-Newton algorithm extends the ROPE method of \cite{LiuWD2025aos} for online robust policy evaluation, in which $\ell$ is chosen as the derivative of a smoothed Huber loss. Our formulation generalizes ROPE to a broader class of Newton-type algorithms, and the resulting algorithm can be interpreted as policy evaluation in a risk-aware MDP framework with dynamic UBSR measures.

Although almost sure convergence has not been established for either ROPE or UBSR-Newton, techniques analogous to those in \cite{LiuWD2025aos} may be used to derive uniform high-probability bounds and variance estimators for UBSR-Newton.

\section{Experiments}\label{sec:Experiments}

In this section, we conduct numerical experiments to evaluate the algorithms proposed in the preceding sections. We first examine the algorithms in a tabular MDP setting and validate the convergence result established in Theorem \ref{thm:UBSRTDASConvergence}. We then further demonstrate the potential of our algorithms through a real-world risk-aware perishable inventory problem, where function approximation is required to address the curse of dimensionality.

\subsection{Convergence Experiments for Different Algorithms}\label{sec:ExperimentTabular}
We test the policy evaluation algorithms on a randomly generated MDP with $N=10$ states and 10 actions per state.
The nominal transition kernel is sampled uniformly on $[0,1]$ and normalized, the cost function is drawn from the normal distribution $\mcN(10,100)$, and the deterministic policy is generated randomly. The feature matrix is orthogonalized via QR decomposition to ensure full rank, The initial $\bmtheta_0$ is set to $\bm{0}$. 
For UBSR-TD, we use $\eta_n=2/(n+100)^{2/3}$, a value calibrated in preliminary experiments to promote convergence.
We set $n_0 = 500$ for UBSR-Newton, and the resulting $\hat{\bmH}_{n_0}$ is observed to be invertable in all the test scenarios.

We consider two representative UBSR measures: the expectile and the soft quantile. For the expectile, Theorem~\ref{thm:UBSRTDASConvergence} requires $\gamma < \gamma_{\mathrm{EXP}} := \min\{ \frac{\tau}{1-\tau}, \frac{1-\tau}{\tau}\}$. For the soft quantile, we set $\kappa = 2$ and vary $\mu \in (0,1)$. It is worth noting that, under these parameter choices, the soft quantile is neither convex nor concave over its entire domain, yet it still satisfies Assumption \ref{assump-Convexity}.
Since both the loss functions for the expectile and soft quantile are piecewise linear and non-differentiable at kink points, we use the left derivatives at those points for UBSR-Newton. Due to space limitation, we report the results for the soft quantile in Appendix \ref{sec:ConvergenceSoftQuantile}.

The algorithm's performance is measured by the $q$-weighted distance $\norm{\bmPhi\bmtheta_n-\Pi V}_q$ along the simulated trajectory, where $V$ is computed by risk-aware value iteration (see Section 7 of \cite{Ruszczynski2010mp}). Each algorithm is run independently 20 times for $10^6$ iterations, and the distance is averaged across the 20 trajectories at each iteration.

\subsubsection{Convergence Experiments for $d = N$}\label{sec:ConvergenceExperiementsd10}
The first two rows of Figure \ref{figure:ConvergenceExpectile} show the convergence of UBSR-TD, UBSR-TD($\lambda$) and UBSR-Newton (labeled as TD($0.0$), TD($\lambda$), Newton) for the expectile when $d=N$.  The shaded regions represent the standard error of the first component of $\bmtheta_n$ ($\hat{\bmtheta}_n$ for UBSR-Newton) across 20 trajectories. Under this setting, the features can recover the true value function. The proposed algorithms are observed to converge to the true value function around $10^6$ iterations. In case of UBSR-TD, this occurs even when the discount-factor restriction in Theorem \ref{thm:UBSRTDASConvergence} is not satisfied.
We can see that, UBSR-Newton requires substantially fewer iterations than UBSR-TD to reach a prescribed error threshold relative to the true value function. In most settings, UBSR-TD($\lambda$) converges faster than UBSR-TD, with larger values of $\lambda$ generally improving convergence, particularly for larger discount factors. 
The standard error is relatively large in early iterations and gradually decreases as the algorithms converge. As in the risk-neutral setting, the variance of UBSR-TD($\lambda$) increases with $\lambda$ and UBSR-Newton exhibits large variance during the early iterations possible due to the instability of $\hat{\bmH}_n$.

\subsubsection{Convergence Experiments for $d < N$}\label{sec:ConvergenceExperimentsd5}

The last two rows of Figure~\ref{figure:ConvergenceExpectile} report results for expectile with $d=5<N$, where the feature space cannot represent the true value function. When the restriction on $\gamma$ holds, the proposed algorithms converge, but not necessarily to the projected value function.
As established in Theorem~\ref{thm:UBSRTDASConvergence}, UBSR-TD converges to the solution of \eqref{eq-UBSR-TDRoot}. Therefore, $\norm{\bmPhi\bmtheta_n-\Pi V}_q$ need not vanish. UBSR-Newton exhibits similar behavior. When the restriction on $\gamma$ fails, UBSR-TD($\lambda$) may diverge, as observed for $\tau\in\{0.6,0.9\}$ and $\gamma=0.9$.

For expectile with $\tau = 0.5$, UBSR-TD(1) reduces to TD(1), for which $\norm{\bmPhi\bmtheta_n - \Pi V}_q$ is expected to converge to zero when $d < N$ \citep{Tsitsiklis1997}. However, this property generally fails in the risk-aware setting. Moreover, UBSR-TD(1) need not achieve the smallest distance to the projected value function, as illustrated by the case where $\tau = 0.6$ and $\gamma = 0.6$.

Overall, the experiments demonstrate the proposed algorithms’ ability to handle a broad range of risks and discount factors. For practical applications, when the feature representation is imperfect, UBSR-TD and UBSR-Newton are practical choices: UBSR-TD is better suited to low-variance estimation, while UBSR-Newton is preferred for faster convergence.

\begin{figure}[htbp]
    \centering
    \includegraphics[width=0.95\linewidth]{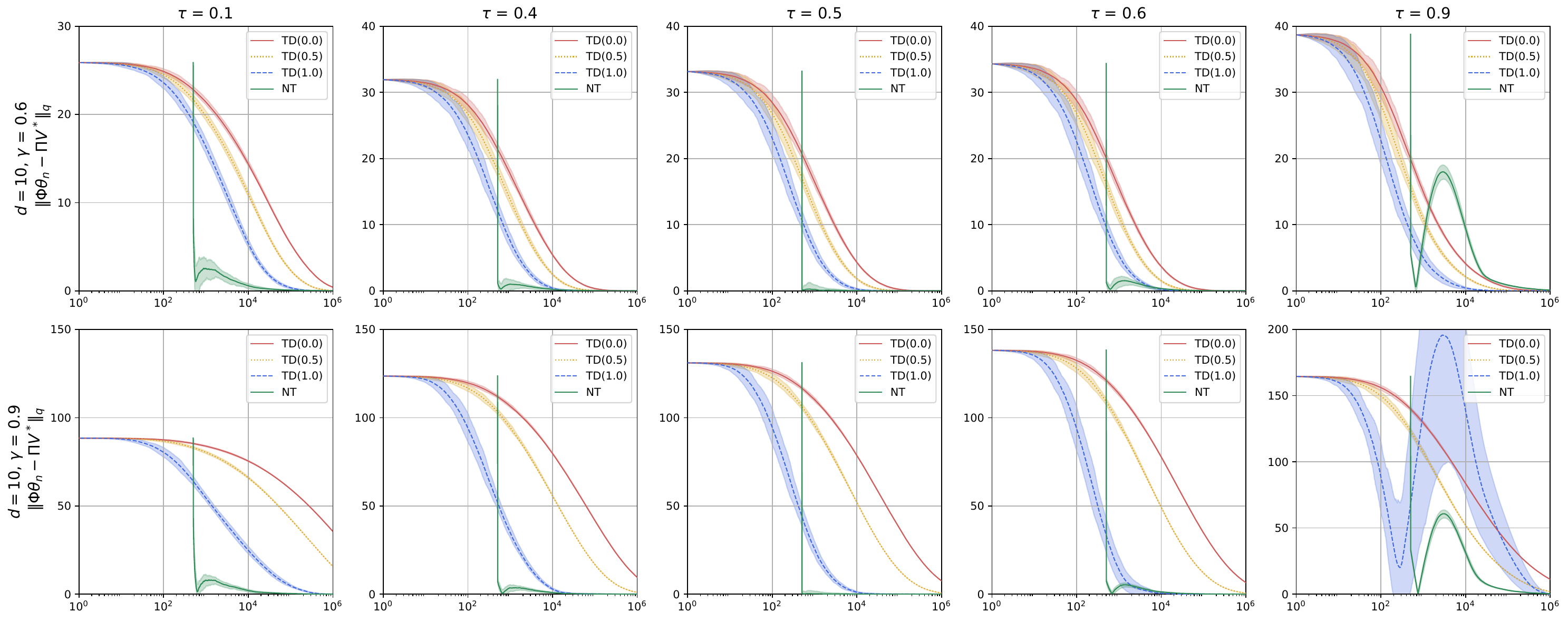}
    \includegraphics[width=0.95\linewidth]{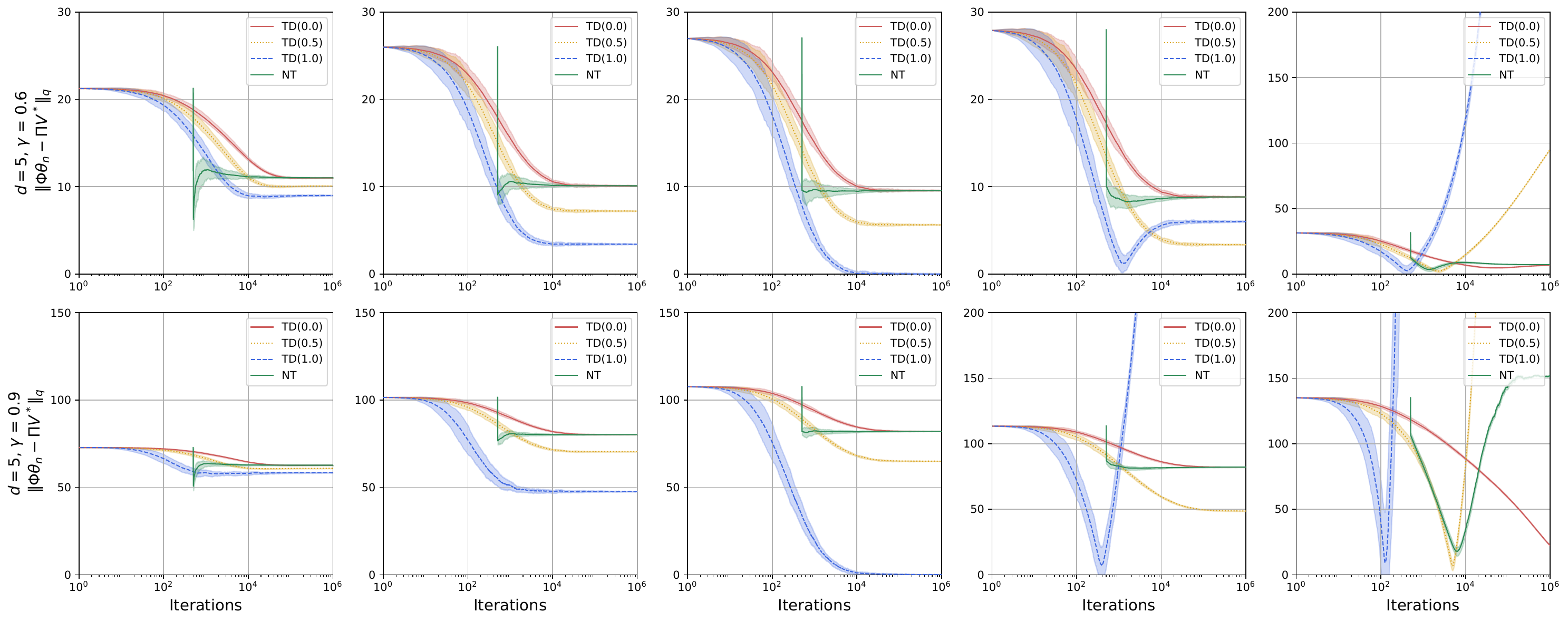}
    \caption{Convergence results for expectile risk}
    \label{figure:ConvergenceExpectile}
\end{figure}

\subsection{Experiments on a Perishable Inventory Management Problem}\label{sec:PerishableInventoryProblem}
Inventory management problems have been studied extensively. When products are non-perishable or have deterministic shelf lives, DP can often be used to compute optimal policies efficiently \citep{Nahmias2011book}. By contrast, stochastic shelf lives typically destroy classical structural properties, substantially increasing problem complexity and making standard heuristics, such as $(s,S)$ policies, generally suboptimal (see Section 4 of \cite{Abouee2026ijoc}). In large-scale settings, exact DP also becomes computationally prohibitive because of the curse of dimensionality. A common solution is approximate dynamic programming (ADP), in which the value function is approximated using low-dimensional basis features. Policy iteration is then carried out by alternating between
policy improvement and policy evaluation steps \citep{Powell2011book}.

Stochastic shelf lives further motivate risk-sensitive decision-making in high-stakes applications such as medical resource management \citep{FengYY2008opre,HosseiniMotlagh2019jaihc}. Larger orders can improve service levels but increase the risk of wastage, whereas smaller orders reduce wastage but may result in severe shortages with significant medical consequences. 
Motivated by these challenges, in this subsection, we propose UBSR-based risk-aware policy iteration algorithms within the ADP framework and apply it to the perishable platelet inventory system studied in \cite{Abouee2026ijoc}. 

\subsubsection{Model Formulation}
The system state is defined as $\bmX=(X_1,\cdots,X_{m-1})\in\mcX\subset\mbZ_+^{m-1}$, where $X_i$ is the inventory level with $i$ periods of remaining shelf life and $m$ is the maximum shelf life. After observing the state, the decision-maker orders $z\in\mcZ:=\{0,\cdots,\bar{z}\}$. The delivered units have random remaining shelf lives $\bmY=(Y_1,\cdots,Y_m)$, with $\sum_{i=1}^m Y_i=z$. Conditional on $z$, each unit independently has shelf life $i$ with probability $p_i(z)$ under a multinomial logit model. The random demand $D$ follows a truncated negative binomial distribution. See Section 6.1 of \cite{Abouee2026ijoc} for details on the shelf-life and demand distributions.

An admissible policy $\mu:\mcX\to\mcZ$ specifies the order quantity for each state. In each period, the decision-maker observes the inventory state, places an order, and receives the delivery immediately. Demand is subsequently realized and satisfied according to a first-expire-first-out policy. Unmet demand is lost, and each remaining unit loses one period of shelf life when the period ends.

Given a state $\bmx=(x_1,\dots,x_{m-1}) \in \mcX$ and policy $\mu$, the next state $\bmX=(X_1,\dots,X_{m-1})$ evolves according to
\begin{equation}\label{eq-ADPDynamic}
\begin{aligned}
    X_j &:= \Bigg( x_{j+1} + Y_{j+1} - \Big(D - \sum_{i=1}^j (x_i + Y_i)\Big)^+ \Bigg)^+, \quad j = 1, \dots, m-2,\\
    X_{m-1} &:= \Bigg( Y_m - \Big(D - \sum_{i=1}^{m-1} (x_i + Y_i)\Big)^+ \Bigg)^+,
\end{aligned}
\end{equation}
where $\bmY=(Y_1,\dots,Y_m)$ denotes the random vector of received products under the order $\mu(\bm{x})$.

The one-stage cost function is defined as
\begin{equation}\label{eq-ADPcost}
\begin{aligned}
    c(\bmX, \mu(\bmX)) 
    := & c_1 \bm{1}_{\{\mu(\bmX) > 0\}} 
    +c_2 \left( \sum_{i=1}^{m-1} X_i + \mu(\bmX) - D \right)^+ \\
    &+ c_3 \left( D - \sum_{i=1}^{m-1} X_i - \mu(\bmX) \right)^+  + c_4 \left( X_1 + Y_1 - D \right)^+,
\end{aligned}
\end{equation}
where $c_1$, $c_2$, $c_3$, $c_4$ represent the fixed ordering, holding, shortage, and wastage costs, respectively.

Our objective is to determine an optimal policy minimizing the infinite-horizon discounted total risk \eqref{eq-DiscountedTotalRiskObjective} associated with the cost function \eqref{eq-ADPcost} under the dynamic expectile risk.

\subsubsection{Approximate Policy Iteration}\label{sec:ApproximatePolicyIteration}
For a maximum inventory level of $20$, the perishable inventory system has $\binom{19+m}{m-1}=\frac{m(m+1)\cdots (m+19)}{20!}$ states. The state-space size grows polynomially as $m$ increases, making explicit value iteration computationally expensive.

We employ a linear function approximation of the value function: $\tilde{V}(\bmx) := \sum_{i=1}^d \theta_i \phi_i(\bmx)$, $\bmx \in \mcX$, where $\{\phi_i\}_{i=1}^d$ are predefined basis functions. 
Following \cite{Abouee2026ijoc}, we set $\phi_1(\bmx)=1$ and $\phi_2(\bm{x}) = \hat{V}(\sum_{i=1}^{m-1}x_i)$, where $\hat{V}$ is the value function of the associated risk-aware non-perishable inventory problem, which can be computed efficiently via value iteration. The remaining features capture the additional value structure induced by stochastic shelf-life dynamics that is not represented by $\hat{V}$.
We adopt the polynomial feature class Choice 4 of the state variables in \cite{Abouee2026ijoc}, comprising $x_i$, $x_i^2$, and $x_i x_j$ for $i,j=1,\ldots,m-1$ with $i\neq j$.
All features are normalized to improve numerical stability and avoid computational scaling issues.

Starting from an initial state $\bmx$, the policy improvement step at iteration $n$ is defined by
\begin{equation*}
\mu_n(\bmx) := \argmin_{z\in\mcZ} \SR\left[
c(\bmx,z) + \gamma \sum_{i=1}^d \theta_{n,i}^*\phi_i(\bmX^{\mu_{n-1}}) \right],
\end{equation*}
where $\mu_n$ and $\{\theta_{n,i}^*\}_{i=1}^d$ are the policy and weight parameters at iteration $n$; $\bmX^{\mu_{n-1}}$ denotes the random next state generated by the transition dynamics in \eqref{eq-ADPDynamic} under policy $\mu_{n-1}$; and $\gamma \in (0,1)$ is the discount factor.
When the transition dynamic is unknown or exact policy improvement is costly, we use simulation or heuristics for policy improvement with a fixed number of samples and and run UBSR-TD or UBSR-Newton for a fixed number of policy-evaluation iterations.

For benchmark comparisons, we consider two policies: a \textit{static} policy derived from the non-perishable model, corresponding to the classical $(s,S)$ policy, and a \textit{myopic} policy that directly minimizes the risk associated with the one-stage cost.

\subsubsection{Experimental Setup}
We consider a perishable inventory system with maximum inventory $20$, shelf life $m\in\{3,5,8\}$, and discount factor $\gamma=0.6$, which satisfies the UBSR-TD convergence conditions in Theorem~\ref{thm:UBSRTDASConvergence} for $\tau\in(3/8,5/8)$. Demand follows the negative binomial distribution in Table 1 of \cite{Abouee2026ijoc}, with mean $6.2$. Shelf-life parameters are $(1,0.5,-0.2,-0.1)$ for $m=3$ and the endogenous settings in Table 2 of \cite{Abouee2026ijoc} for $m=5,8$, where the negative coefficients imply that larger orders increase the probability of receiving items with lower period of remaining shelf life.

We collect 60,000 online samples for policy evaluation, divide them into 20 trajectories of length 3,000, and average the resulting weight estimates. For UBSR-Newton, we use the first 1,000 samples of each trajectory for warm-up, which yields an invertible $\hat{\bmH}_{n_0}$ in most cases. Trajectories with singular $\hat{\bmH}_{n_0}$ are discarded. Empirically, this is observed to have negligible impact on the results.

For $m=3$, exact policy improvement is computationally negligible. For $m=5$, however, each exact improvement step takes hours. We therefore perform one exact step to obtain a myopic baseline, followed by approximate improvement using 50 simulated transitions per state. This reduces runtime from hours to minutes while preserving reasonable empirical performance. We run policy iteration for 20 iterations.
For $m=8$, exact policy evaluation takes over 10 hours, making exact policy comparisons impractical. We therefore only compare the runtime and optimality gap of UBSR-TD and UBSR-Newton for a single policy evaluation step for $m=8$. The runtime and optimality gap results for a single policy evaluation step are reported in Appendix \ref{sec:ADPPolicyEvaluation}.

\subsubsection{Experimental Results for  $m=3$}
The left panel of Figure \ref{figure:ADPOptPolicyComparison} shows the optimal risk-aware policies for $m=3$ under $\tau\in\{0.1,0.5,0.9\}$, representing risk-seeking, risk-neutral, and risk-averse policies, respectively, and three cost settings: the baseline $(10,1,20,5)$, high ordering cost $(50,1,20,5)$, and high wastage cost $(10,1,20,50)$. Detailed optimality-gap results for different policies are reported in Appendix \ref{sec:FurtherDetailsADP}. We observe that the optimal policy is no longer of the classical $(s,S)$ form. As $\tau$ increases, the ordering region expands to higher inventory levels to reduce shortage risk. High ordering costs shrink the ordering region but increase order sizes, whereas high wastage costs also shrink the ordering region while reducing order quantities.

\begin{figure}[htbp]
    \centering
    \includegraphics[width=0.45\linewidth]{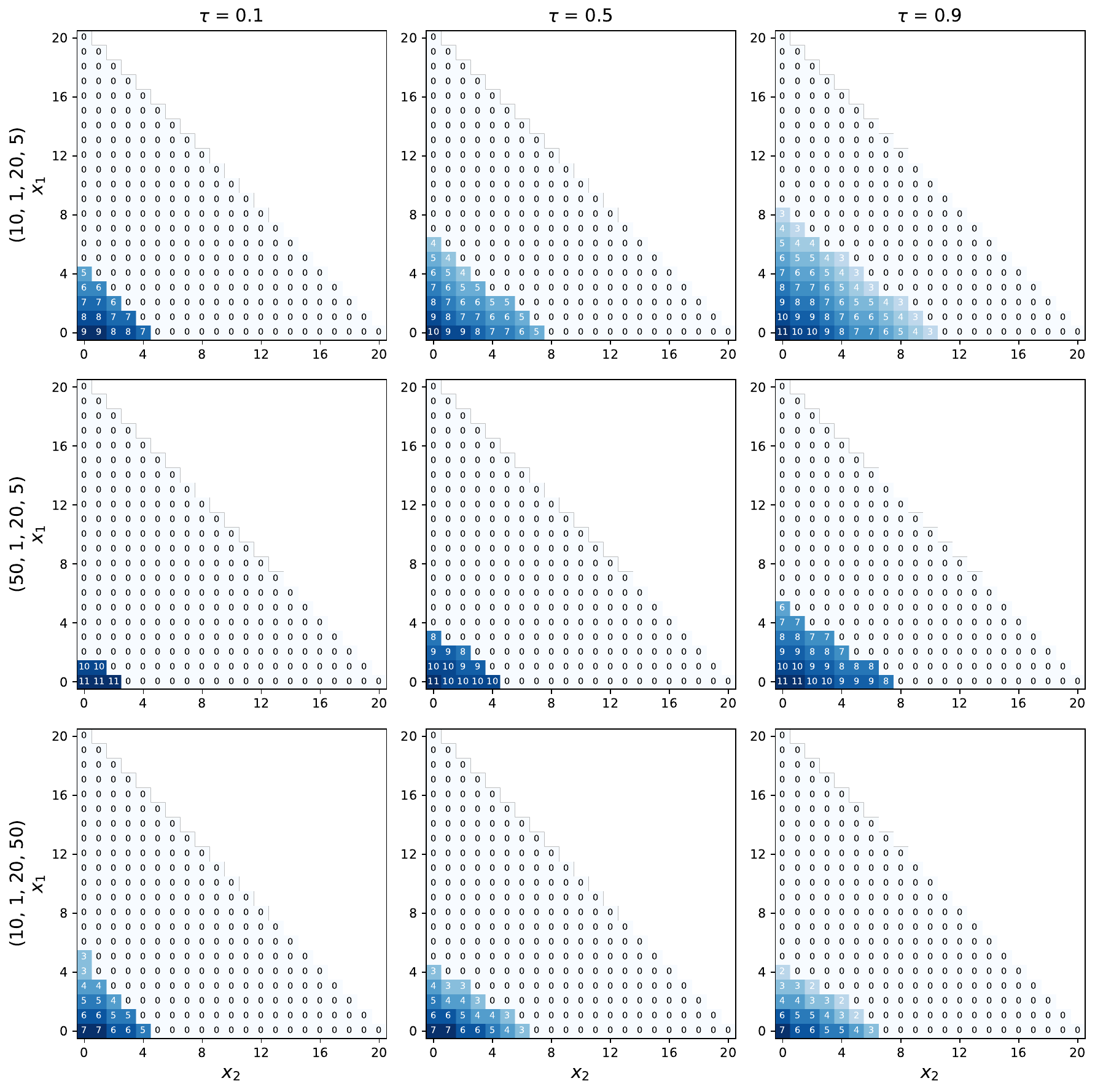}
    \hfill
    \includegraphics[width=0.45\linewidth]{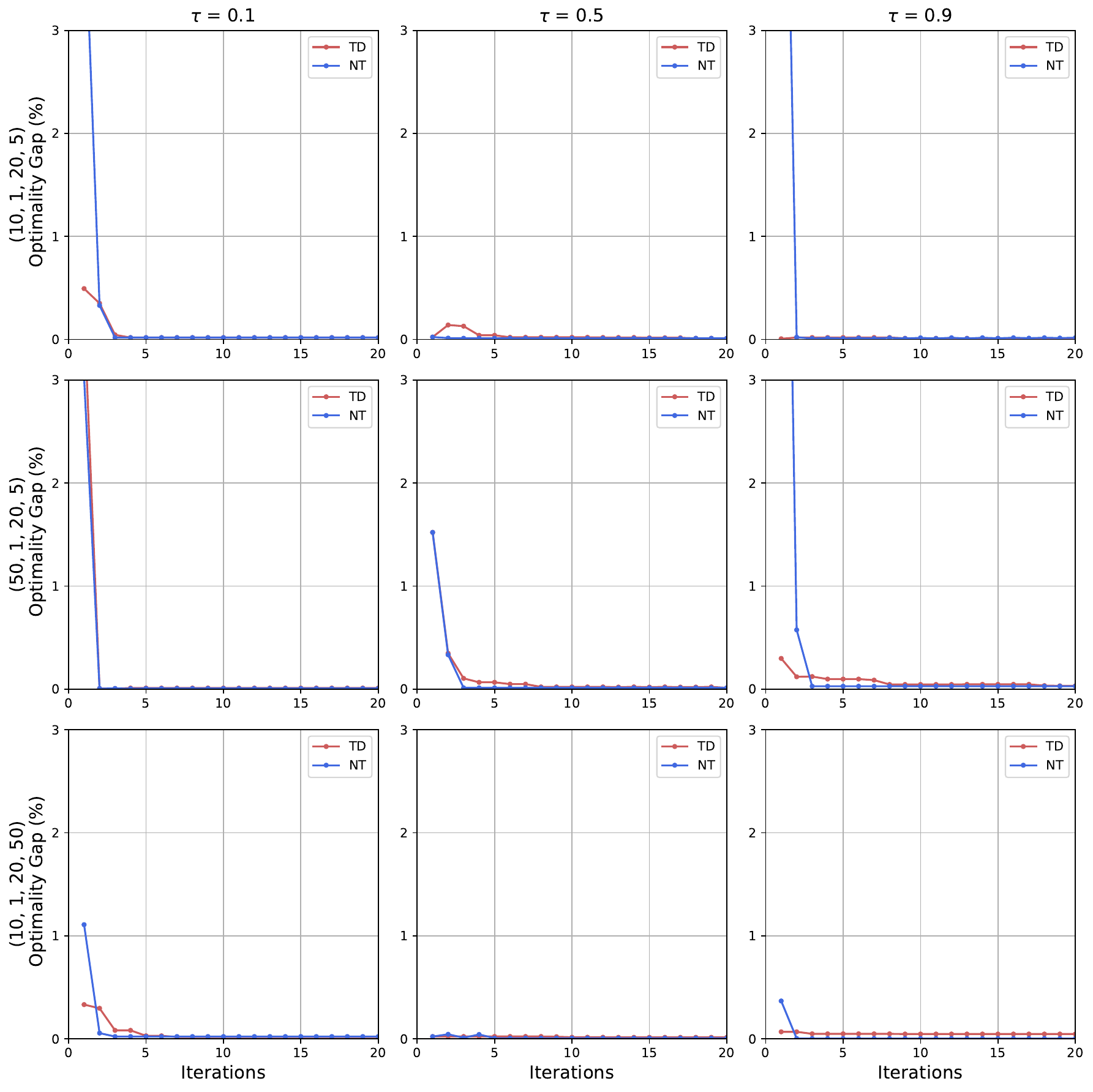}
    \caption{(Left) Optimal risk-aware policies for $m=3$; (Right) Optimality gaps of policy iteration with UBSR-TD and UBSR-Newton (TD and NT) for $m=3$.}
    \label{figure:ADPOptPolicyComparison}
\end{figure}

\begin{figure}[htbp]
    \centering
    \includegraphics[width=0.95\linewidth]{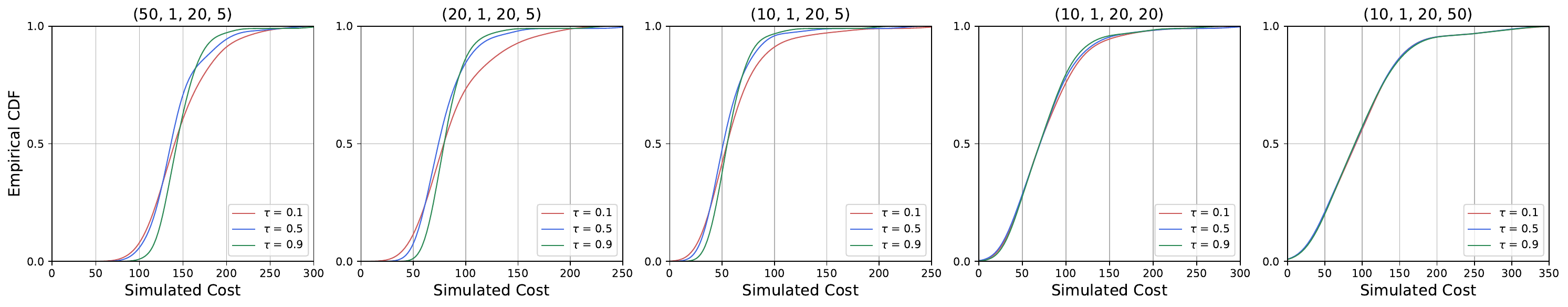}
    \caption{Empirical cumulative distributions of simulated costs under policies obtained via policy iteration with UBSR-TD under different risk and cost settings when $m=3$}
    \label{figure:ADPSimCosts}
\end{figure}

The right panel of Figure~\ref{figure:ADPOptPolicyComparison} reports the percentage optimality gap at the zero-inventory state over policy iterations for $m=3$, relative to the true optimum computed by value iteration with tolerance $0.01$. Both methods converge within about 10 iterations, with UBSR-Newton (blue) converging faster than UBSR-TD (red). After 20 iterations, both achieve gaps below $0.2\%$, demonstrating the effectiveness of the ADP framework and the proposed policy iteration methods. 

Figure~\ref{figure:ADPSimCosts} shows the empirical cumulative distributions of simulated costs under UBSR-TD policies for $m=3$ across different risk and cost settings. The distributions reflect the corresponding risk attitudes: the risk-seeking policy ($\tau=0.1$; red) yields greater dispersion and more extreme costs, the risk-averse policy ($\tau=0.9$; green) produces a more concentrated distribution with fewer extremes, and the risk-neutral policy ($\tau=0.5$; blue) lies between the two and minimizes the expectation.

Moreover, we observe that higher wastage costs diminish the influence of risk awareness on the optimal policy. When $c_4=50$, the risk-aware policies for different $\tau$ values become similar in Figure~\ref{figure:ADPOptPolicyComparison},
producing nearly identical cost distributions in Figure~\ref{figure:ADPSimCosts}. This occurs because when wastage costs are dominating, the optimal policy constrains the inventory to a narrow range around $7$, close to the expected demand of $6.2$.

\begin{table}[htbp]
  \centering
  \caption{Results under varying policies, risk parameters, and cost settings for $m=5$}
  \resizebox{\textwidth}{!}{
    \begin{tabular}{cccccccccccc}
    \toprule
    Cost  & $\tau$   & \multicolumn{2}{c}{0.1} & \multicolumn{2}{c}{0.4} & \multicolumn{2}{c}{0.5} & \multicolumn{2}{c}{0.6} & \multicolumn{2}{c}{0.9} \\
    \midrule
    \multirow{6}[2]{*}{(10,1,20,5)} & \textbf{Optimal} & \multicolumn{2}{c}{\textbf{31.23 }} & \multicolumn{2}{c}{\textbf{38.44 }} & \multicolumn{2}{c}{\textbf{40.23 }} & \multicolumn{2}{c}{\textbf{42.38 }} & \multicolumn{2}{c}{\textbf{52.98 }} \\
          & Static & 31.30  & (0.24\%) & 38.53  & (0.24\%) & 40.33  & (0.25\%) & 42.43  & (0.13\%) & 55.50  & (4.75\%) \\
          & Myopic & 31.86  & (2.02\%) & 40.03  & (4.15\%) & 41.23  & (2.47\%) & 44.01  & (3.84\%) & 54.19  & (2.27\%) \\
          & RN    & 33.18  & (6.27\%) & 38.60  & (0.43\%) &   -    &    -   & 42.55  & (0.39\%) & 66.04  & (24.65\%) \\
          & TD    & 31.30  & (0.24\%) & 38.53  & (0.24\%) & 40.43  & (0.49\%) & 42.40  & (0.05\%) & 54.13  & (2.16\%) \\
          & Newton & 31.30  & (0.24\%) & 38.53  & (0.24\%) & 40.43  & (0.49\%) & 42.40  & (0.05\%) & 53.79  & (1.54\%) \\
    \midrule
    \multirow{6}[2]{*}{(80,1,20,5)} & \textbf{Optimal} & \multicolumn{2}{c}{\textbf{120.40 }} & \multicolumn{2}{c}{\textbf{132.51 }} & \multicolumn{2}{c}{\textbf{136.13 }} & \multicolumn{2}{c}{\textbf{140.21 }} & \multicolumn{2}{c}{\textbf{165.96 }} \\
          & Static & 120.50  & (0.09\%) & 132.60  & (0.07\%) & 136.14  & (0.01\%) & 140.22  & (0.01\%) & 166.05  & (0.05\%) \\
          & Myopic & 188.46  & (56.53\%) & 162.65  & (22.75\%) & 165.49  & (21.57\%) & 173.77  & (23.93\%) & 200.82  & (21.00\%) \\
          & RN    & 121.66  & (1.05\%) & 132.60  & (0.07\%) &    -   &    -   & 140.37  & (0.11\%) & 170.86  & (2.95\%) \\
          & TD    & 120.70  & (0.25\%) & 132.60  & (0.07\%) & 136.14  & (0.01\%) & 140.23  & (0.01\%) & 166.04  & (0.05\%) \\
          & Newton & 120.41  & (0.01\%) & 132.60  & (0.07\%) & 136.14  & (0.01\%) & 140.22  & (0.01\%) & 166.04  & (0.05\%) \\
    \midrule
    \multirow{6}[2]{*}{(10,1,20,80)} & \textbf{Optimal} & \multicolumn{2}{c}{\textbf{31.38 }} & \multicolumn{2}{c}{\textbf{39.64 }} & \multicolumn{2}{c}{\textbf{42.55 }} & \multicolumn{2}{c}{\textbf{46.23 }} & \multicolumn{2}{c}{\textbf{84.09 }} \\
          & Static & 31.46  & (0.24\%) & 40.07  & (1.08\%) & 43.30  & (1.76\%) & 50.46  & (9.15\%) & 153.24  & (82.23\%) \\
          & Myopic & 31.95  & (1.81\%) & 40.69  & (2.63\%) & 42.82  & (0.63\%) & 46.51  & (0.60\%) & 101.66  & (20.90\%) \\
          & RN    & 33.17  & (5.70\%) & 39.88  & (0.60\%) &   -    &   -    & 46.33  & (0.22\%) & 87.66  & (4.24\%) \\
          & TD    & 31.46  & (0.24\%) & 39.76  & (0.29\%) & 42.64  & (0.21\%) & 46.49  & (0.55\%) & 84.56  & (0.56\%) \\
          & Newton & 31.46  & (0.24\%) & 39.74  & (0.25\%) & 42.63  & (0.17\%) & 46.33  & (0.21\%) & 84.14  & (0.06\%) \\
    \bottomrule
    \end{tabular}%
    }
  \label{table:ADPPolicyEvaluationm5}%
\end{table}%

\subsubsection{Experimental Results for $m=5$}
As discussed in Section \ref{sec:ApproximatePolicyIteration}, $m=5$ corresponds to a large-scale setting: exact value iteration takes about five hours to reach a precision of $0.1$ across the test instances, whereas the proposed policy iterations finish within one hour.

Table~\ref{table:ADPPolicyEvaluationm5} reports exact evaluations of policies obtained by policy iterations across different risk and cost settings, with static, myopic, and risk-neutral (RN) policies included as benchmarks. Further results on cost instances and discussions are reported in Appendix \ref{sec:FurtherDetailsADP}. Optimality gaps are shown in parentheses.
We observe that, the policy iteration methods achieve satisfactory optimality gaps across all scenarios: at worst $2.16\%$ for UBSR-TD and $1.54\%$ for UBSR-Newton, compared with $82.23\%$, $56.53\%$, and $24.65\%$ for the static, myopic, and risk-neutral benchmarks. For $\tau=0.1$ and $\tau=0.9$, the risk-aware policies consistently outperform the risk-neutral policy, with maximum improvements exceeding $5\%$ and $18\%$, highlighting the value of risk-aware control.

Overall, these results demonstrate that the proposed ADP framework with policy iteration using UBSR-TD and UBSR-Newton achieves robust performance across a wide range of scenarios, substantially outperforming static, myopic, and risk-neutral approaches.

\section{Conclusions}\label{sec:Conclusion}
In this work, we propose the first online policy evaluation algorithm for MDPs with dynamic UBSR measures under linear function approximation, the UBSR-TD algorithm, prove its almost sure convergence, and introduce several variants designed to accelerate convergence. 
Numerical experiments support our findings, and an application to perishable inventory management under shelf-life uncertainty highlights the effectiveness and practical value of the proposed methods.

Several directions worth further investigation. First, extending the framework to nonlinear function approximation, such as neural networks, is a natural next step. Second, relaxing the current theoretical restrictions on the discount factor remains an important open problem, as the existing bound is likely conservative. Finally, developing theoretical analyses for the proposed algorithmic extensions is another promising direction for future research.

\section*{Acknowledgments}
Weikai Wang is partially supported by IVADO and the Fonds de recherche du Québec [2007773]. Erick Delage is partially supported by the Natural Sciences and Engineering Research Council of Canada [Grant RGPIN-2022-05261].

\bibliography{reference}

\newpage


\appendix
\section*{Appendices}

\section{Algorithm Pseudo-codes}\label{sec:AlgorithmPseduoCodes}

\begin{algorithm}[H]
    \caption{UBSR-TD Algorithm}
    \label{algo-UBSR-TD}
    \KwIn{Online streaming data $\{X_n\}_{n\ge 0}$, initial parameter $\bmtheta_0$, $\gamma \in (0,1)$, step size $\{\eta_n\}_{n\ge 0}$, training horizon $T$}
    \KwOut{Finial estimator $\bmtheta_T$}

    \For{$n = 0$ \KwTo $T-1$}{
        Compute $\bmtheta_{n+1}$ through \eqref{eq-UBSR-TD}.
    }
\end{algorithm}

\begin{algorithm}[H]
    \caption{UBSR-TD($\lambda$) Algorithm}
    \label{algo-UBSR-TDlambda}
    \KwIn{Online streaming data $\{X_n\}_{n\ge 0}$, initial parameter $\bmtheta_0$, $\lambda \in [0,1]$, $\gamma \in (0,1)$, step size $\{\eta_n\}_{n\ge 0}$, training horizon $T$}
    \KwOut{Finial estimator $\bmtheta_T$}

    Set $\bmzeta_0 = \bmphi(X_0)$.
    
    \For{$n = 0$ \KwTo $T-1$}{
        
        Compute $\bmtheta_{n+1}$ through \eqref{eq-UBSR-TDLambda}.

        Compute $\bmzeta_{n+1}$ through \eqref{eq-UBSR-TDLambda-EligibilityTrace}
    }
\end{algorithm}

\begin{algorithm}[H]
    \caption{UBSR-Newton Algorithm}
    \label{algo-UBSR-Newton}
    \KwIn{Online streaming data $\{(\bmX_n,\bmZ_n,c_n)\}_{n \ge 1}$, initialization step $n_0 \ge d$, initial parameter $\hat{\bmtheta}_0$, $\gamma \in (0,1)$, training horizon $T$}
    \KwOut{Finial estimator $\hat{\bmtheta}_T$}
    
    Compute $\hat{\bmH}_{n_0}^{-1}$ and $\bmL_{n_0}$ according to \eqref{eq-PretrainHandL}.
    
    Set $\bar{\bmtheta}_{n_0} = \hat{\bmtheta}_{n_0} = \hat{\bmtheta}_0$.
    
    \For{ $n = n_0 + 1$ \KwTo $T$}{
        Compute $\bar{\bmtheta}_n$ through \eqref{eq-barthetaIteration}.
        
        Compute $\bmL_n$ and $\hat{\bmH}_n^{-1}$ through \eqref{eq-LIteration} and \eqref{eq-UBSRNewton-ShermanMorrison}.
        
        Update the parameter estimate $\hat{\bmtheta}_n$ through \eqref{eq-hattheta=bartheta-hatHinverseL}.
    } 
\end{algorithm}

\section{Omitted Proofs in Section \ref{sec:ConvergenceUBSRTD}}
\begin{proof}[Proof of Lemma \ref{lemma:mcHProperties}]
    We first show that $\mcH$ is a contraction on $(\mcL(\mcX),\norm{\cdot}_q)$.
    
    By Assumption \ref{assump-BoundedSlope} and the monotonicity of $\ell$, for any $a,b \in \mbR$, there exists some $\xi_{(a,b)}\in[\epsilon_1,L_1]$ such that $\ell(a) - \ell(b) = \xi_{(a,b)}(a-b)$.
    Using this property, for any $V_1, V_2 \in \mcL(\mcX)$,
    there exists some $\xi_{(x,x')} := \xi_{(x,x',V_1,V_2)} \in [\epsilon_1,L_1]$ such that
    \begin{equation*}
    \begin{aligned}
        &\norm{\mcH (V_1) - \mcH (V_2)}_q^2 \\
        =& \sum_{x\in\mcX} q(x) \Bigg( \alpha \sum_{x'\in\mcX} P(x'|x) \Big( \ell(\gamma V_1(x') - V_1(x) + c(x))- \ell(\gamma V_2(x') - V_2(x) + c(x)) \Big)  \\
        &\qquad + V_1(x) - V_2(x) \Bigg)^2\\
        =&\sum_{x\in\mcX} q(x) \Bigg( \alpha \sum_{x'\in\mcX} P(x'|x) \xi_{(x,x')} (\gamma \Delta(x') - \Delta(x)) + \Delta(x)\Bigg)^2\\
        =& \alpha^2 \sum_{x\in\mcX} q(x) \Bigg(\sum_{x'\in\mcX} P(x'|x) \xi_{(x,x')} (\gamma \Delta(x') - \Delta(x))\Bigg)^2\\
        & + 2\alpha \sum_{x\in\mcX} q(x) \sum_{x'\in\mcX} P(x'|x) \xi_{(x,x')} (\gamma \Delta(x') - \Delta(x))\Delta(x)
        + \sum_{x\in\mcX} q(x) (\Delta(x))^2\\
        =:& \alpha^2 \sum_{x\in\mcX} q(x) T_1(x) + 2\alpha \sum_{x\in\mcX} q(x) T_2(x) + \norm{\Delta}_q^2,
    \end{aligned}
    \end{equation*}
    where we define $\Delta(x) := V_1(x) - V_2(x)$, for all $x\in\mcX$. 
    For the first term, we have
    \begin{equation*}
    \begin{aligned}
        \sum_{x\in\mcX} q(x) T_1(x) &\le \sum_{x\in\mcX} q(x) \left( \sum_{x'\in\mcX} P(x'|x) \xi_{(x,x')} |\gamma \Delta(x') - \Delta(x)| \right)^2\\
        &\le \sum_{x\in\mcX} q(x) \sum_{x'\in\mcX} P(x'|x) \left( \xi_{(x,x')} |\gamma \Delta(x') - \Delta(x)| \right)^2\\
        &\le L_1^2 \sum_{x\in\mcX} q(x) \sum_{x'\in\mcX} P(x'|x) (\gamma \Delta(x') - \Delta(x))^2\\
        &= L_1^2 \sum_{x\in\mcX} q(x) \sum_{x'\in\mcX} P(x'|x) (\gamma^2 \Delta^2(x') + \Delta^2(x) - 2\gamma\Delta(x')\Delta(x))\\
        &\le L_1^2 (1+\gamma)^2 \norm{\Delta}_q^2,
    \end{aligned}
    \end{equation*}
    where the second inequality follows from Jensen's inequality; the third inequality follows from Assumption \ref{assump-BoundedSlope} and the last inequality from the invariant distribution equation and the Cauchy-Schwarz inequality:
    \begin{equation}\label{eq-CauchySchwarz}
        \left| \sum_{x\in\mcX} q(x) \sum_{x'\in\mcX}P(x'|x) \Delta(x')\Delta(x) \right| = |\mbE[\Delta(X')\Delta(X)]|\le (\mbE[\Delta^2(X')]\mbE[\Delta^2(X)])^{\frac{1}{2}} = \norm{\Delta}_q^2,
    \end{equation}
    where $X\sim q$ and $X'\sim P(\cdot|X)$. This inequality extends directly to the $n$-step case by replacing $x'$ with $x_n$ and $P$ with the $n$-step transition kernel $P^n$, yielding $|\mbE[\Delta(X_n)\Delta(X)]| \le \norm{\Delta}_q^2$, since $(P^n)^\top q = q$ by the invariant distribution equation.

    For the second term, we have
    \begin{equation*}
    \begin{aligned}
        \sum_{x\in\mcX} q(x) T_2(x) &=  \sum_{x\in\mcX} q(x) \sum_{x'\in\mcX} P(x'|x) \xi_{(x,x')} (\gamma \Delta(x')\Delta(x) - \Delta^2(x))\\
        &\le \sum_{x\in\mcX} q(x) \sum_{x'\in\mcX} P(x'|x) \xi_{(x,x')}\Big( \frac{\gamma}{2} (\Delta^2(x') + \Delta^2(x)) - \Delta^2(x)\Big)
        \le  ( \gamma L_1 - \epsilon_1) \norm{\Delta}_q^2,
    \end{aligned}
    \end{equation*}
    where the first inequality follows from the AM-GM inequality $ab \le \frac{1}{2}(a^2 + b^2)$, for all $a,b \in \mbR$, and the last inequality follows from the invariant distribution equation and Assumption \ref{assump-BoundedSlope}.

    As a result, we obtain that
    \begin{equation*}
        \norm{\mcH (V_1) - \mcH (V_2)}_q^2 \le \left( \alpha^2 L_1^2(1+\gamma)^2 + 2\alpha (\gamma L_1 - \epsilon_1) + 1\right) \norm{\Delta}_q^2.
    \end{equation*}
    To ensure that $\mcH$ is a contraction under $\norm{\cdot}_q$, we require
    that $g(\alpha) := \alpha^2 L_1^2(1+\gamma)^2 + 2\alpha (\gamma L_1 - \epsilon_1) + 1 < 1$. This gives $\alpha < \frac{2(\epsilon_1 - \gamma L_1)}{L_1^2 (1+\gamma)^2}$. Since $\alpha > 0$, we require that $\epsilon_1 - \gamma L_1 > 0$. 
    Moreover, we have $\min_{\alpha > 0} g(\alpha) = g(\frac{\epsilon_1-\gamma L_1}{L_1^2(1+\gamma)^2})=-\frac{(\epsilon_1-\gamma L_1)^2}{L_1^2(1+\gamma)^2}+1>0$. Therefore $g(\alpha) \in (0,1)$ for $\alpha \in (0, \frac{2(\epsilon_1 - \gamma L_1)}{L_1^2 (1+\gamma)^2})$.
    By Banach's fixed point theorem, there exists a unique fixed point $V$ for $\mcH$ on $(\mcL(\mcX),\norm{\cdot}_q)$ such that $V = \mcH V$. This implies that $V$ is also a solution to \eqref{eq-UBSRRoot}. 
\end{proof}

\begin{proof}[Proof of Lemma \ref{lemma:Phitheta-VstarBound}]
    By the Pythagorean theorem, for any $V \in \mcL(\mcX)$, we have
    \[ \norm{V}_q^2 = \norm{\Pi V}_q^2 + \norm{V - \Pi V}_q^2, \]
    since $\langle \Pi V, V - \Pi V\rangle_q = 0$. By Theorem \ref{lemma:mcHProperties}, $\mcH$ is a contraction under $\norm{\cdot}_q$ and $\Pi$ is non-expansive, it follows that $\Pi \mcH$ is a contraction on $(\mcL(\mcX), \norm{\cdot}_q)$. Hence, by Banach's fixed point theorem, $\Pi \mcH$ admits a unique fixed point. Moreover, by the definition of the projection operator and $\{\bmphi_i\}_{i=1}^d$ are linearly independent, this fixed point can be expressed as $\bmPhi \bmtheta^*$ for some unique vector $\bmtheta^* \in \mbR^d$.

    Since $V \in \mcL(\mcX)$ and is the fixed point of $\mcH$, we have
    \begin{equation*}
        \norm{\bmPhi \bmtheta^* - \Pi V}_q = \norm{\Pi\mcH(\bmPhi\bmtheta^*) - \Pi V}_q
        \le \norm{\mcH(\bmPhi\bmtheta^*) - V}_q
        = \norm{\mcH(\bmPhi\bmtheta^*) - \mcH V}_q
        \le \bar{\alpha} \norm{\bmPhi\bmtheta^* - V}_q,
    \end{equation*}
    where $\bar{\alpha}$ is the contraction factor for $\mcH$. Note that
    \begin{equation*}
        \norm{\bmPhi\bmtheta^* - V}_q \le \norm{\bmPhi \bmtheta^* - \Pi V}_q + \norm{\Pi V - V}_q
        \le \bar{\alpha} \norm{\bmPhi\bmtheta^* - V}_q + \norm{\Pi V - V}_q.
    \end{equation*}
    It follows that $\norm{\bmPhi\bmtheta^* - V}_q \le \frac{1}{1-\bar{\alpha}}\norm{\Pi V - V}_q$.
    This completes the proof.
\end{proof}

\begin{proof}[Proof of Lemma \ref{lemma:ODEUniqueEquilibrium}]
    From Lemma \ref{lemma:Phitheta-VstarBound}, we know that there exists a unique $\bmtheta^* \in \mbR^d$ such that $\bmPhi\bmtheta^* = \Pi\mcH (\bmPhi \bmtheta^*)$, i.e., $\bmPhi\bmtheta^*$ is the unique fixed point for $\Pi\mcH$. From the Hilbert projection theorem (Corollary 5.4, \cite{Brezis2010book}), we know that $\langle \mcH (\bmPhi \bmtheta^*) - \bmPhi \bmtheta^*, \bmu \rangle_q = 0$ for any $\bmu \in \mathrm{span}(\bmPhi)$. Since $q > 0$, such $\bmPhi \bmtheta^*$ is unique in $\mathrm{span}(\bmPhi)$. Since $\bmPhi$ is full rank, $\bmtheta^*$ is also unique.
    This implies that $\bmtheta^*$ is the unique solution to
    \begin{equation}\label{eq-FixedPointEquationPiHMatrix}
        \bmPhi^\top \diag(q) (\mcH (\bmPhi \bmtheta^*) - \bmPhi \bmtheta^*) = \bm{0}.
    \end{equation}
    Recalling the definition of $\mcH$, \eqref{eq-FixedPointEquationPiHMatrix} can be equivalently written as
    \begin{equation*}
        \sum_{x\in\mcX} q(x) \bmphi(x) \sum_{x'\in\mcX} P(x'|x) \ell\left((\gamma \bmphi(x') - \bmphi(x))^\top \bmtheta^* + c(x)\right) = \bm{0}.
    \end{equation*}
    This implies that $\bmtheta^*$ is the unique solution of \eqref{eq-ODE-equilibrium}, i.e., the unique equilibrium of the ODE \eqref{eq-TD-ODE}. This completes the proof.  
\end{proof}

Before proving Theorem \ref{thm:UBSRTDASConvergence}, we present several lemmas that are necessary to establish the theorem. We first show that $\hat{\mcH}$ defined in \eqref{eq-OperatorhatH} is a contraction on $(\mcL(\mbR^d), \norm{\cdot}_2)$.

\begin{lemma}\label{lemma:hatHContraction}
    Under the assumptions of Lemma \ref{lemma:mcHProperties}, 
    the operator $\hat{\mcH}$ is a contraction on $(\mcL(\mbR^d),\norm{\cdot}_2)$.
\end{lemma}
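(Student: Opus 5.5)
The plan is to reproduce, in parameter space, the squared-norm expansion used in the proof of Lemma \ref{lemma:mcHProperties}, and then to move between $\norm{\cdot}_2$ on $\mbR^d$ and $\norm{\cdot}_q$ on $\mathrm{span}(\bmPhi)$ using the Gram matrix $\bmG := \bmPhi^\top \diag(q)\bmPhi$. Since $\bmPhi$ has full column rank and $\min_x q(x)>0$ by Assumption \ref{assump-StationaryDistribution}, $\bmG$ is symmetric positive definite. Let $0<\lambda_{\min}\le\lambda_{\max}$ denote its extreme eigenvalues. For $\bmdelta\in\mbR^d$ and $\Delta:=\bmPhi\bmdelta$ we then have
\begin{equation*}
\lambda_{\min}\norm{\bmdelta}_2^2\le\norm{\Delta}_q^2\le\lambda_{\max}\norm{\bmdelta}_2^2 .
\end{equation*}

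\textbf{Step 1: mean-value rewriting.} Fix $\bmtheta_1,\bmtheta_2$ and set $\bmdelta=\bmtheta_1-\bmtheta_2$. As in Lemma \ref{lemma:mcHProperties}, Assumption \ref{assump-BoundedSlope} gives slopes $\xi_{(x,x')}\in[\epsilon_1,L_1]$ such that
\begin{equation*}
\hat{\mcH}(\bmtheta_1)-\hat{\mcH}(\bmtheta_2)=\bmdelta+\alpha\,\bmPhi^\top\diag(q)\,w,
\end{equation*}
where
\begin{equation*}
w(x):=\sum_{x'}P(x'|x)\,\xi_{(x,x')}\bigl(\gamma\Delta(x')-\Delta(x)\bigr).
\end{equation*}
Expanding the square gives
\begin{equation*}
\norm{\hat{\mcH}(\bmtheta_1)-\hat{\mcH}(\bmtheta_2)}_2^2=\norm{\bmdelta}_2^2+2\alpha\,\bmdelta^\top\bmPhi^\top\diag(q)w+\alpha^2\norm{\bmPhi^\top\diag(q)w}_2^2 .
\end{equation*}

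\textbf{Step 2: the two terms.} The cross term equals $2\alpha\sum_x q(x)\Delta(x)w(x)=2\alpha\sum_x q(x)T_2(x)$, which is exactly the quantity bounded in the proof of Lemma \ref{lemma:mcHProperties}. Using AM-GM and the invariance $P^\top q=q$, it is at most
\begin{equation*}
2\alpha(\gamma L_1-\epsilon_1)\norm{\Delta}_q^2\le-2\alpha(\epsilon_1-\gamma L_1)\lambda_{\min}\norm{\bmdelta}_2^2 .
\end{equation*}
For the quadratic term, write $\bmPhi^\top\diag(q)w=(\bmPhi^\top\diag(q)^{1/2})(\diag(q)^{1/2}w)$. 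Its squared norm is therefore at most $\lambda_{\max}\norm{w}_q^2$, because the operator norm squared of $\bmPhi^\top\diag(q)^{1/2}$ equals $\lambda_{\max}(\bmG)$. The Jensen/Cauchy--Schwarz argument behind \eqref{eq-CauchySchwarz} already gives $\norm{w}_q^2\le L_1^2(1+\gamma)^2\norm{\Delta}_q^2$. Together these yield the bound $\lambda_{\max}^2L_1^2(1+\gamma)^2\norm{\bmdelta}_2^2$.

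\textbf{Step 3: conclusion.} Combining the two bounds,
\begin{equation*}
\norm{\hat{\mcH}(\bmtheta_1)-\hat{\mcH}(\bmtheta_2)}_2^2\le\Bigl(1-2\alpha(\epsilon_1-\gamma L_1)\lambda_{\min}+\alpha^2L_1^2(1+\gamma)^2\lambda_{\max}^2\Bigr)\norm{\bmdelta}_2^2 .
\end{equation*}
Since $\gamma<\epsilon_1/L_1$, the linear coefficient is negative, so the bracket lies in $(0,1)$ whenever
\begin{equation*}
0<\alpha<\frac{2(\epsilon_1-\gamma L_1)\lambda_{\min}}{L_1^2(1+\gamma)^2\lambda_{\max}^2}.
\end{equation*}
Positivity of the bracket follows from the same discriminant argument as before, using $\lambda_{\min}\le\lambda_{\max}$. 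This gives the contraction.

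\textbf{Main obstacle.} The admissible range for $\alpha$ now depends on the conditioning of $\bmG$, so it need not coincide with the range $\alpha\in\bigl(0,\tfrac{2(\epsilon_1-\gamma L_1)}{L_1^2(1+\gamma)^2}\bigr)$ fixed in Lemma \ref{lemma:mcHProperties}. I would resolve this by noting that $\alpha$ is a free positive scaling constant in \eqref{eq-UBSR-TD-SAwithMarkovNoise} and \eqref{eq-OperatorhatH}. Changing it only rescales time in the ODE \eqref{eq-TD-ODE} and leaves the fixed point $\bmtheta^*$ unchanged. We may therefore shrink $\alpha$ to the minimum of the two ranges, so that Lemmas \ref{lemma:mcHProperties}--\ref{lemma:ODEUniqueEquilibrium} and the present statement hold simultaneously. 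If one instead wanted a contraction under the original $\alpha$, the fallback would be to prove contraction in the weighted norm $\norm{\bmG^{1/2}\cdot}_2$, which is equivalent to $\norm{\cdot}_2$ and suffices for the stochastic approximation argument.
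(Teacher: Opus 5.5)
Your proposal is correct and follows essentially the paper's route. You expand $\norm{(\bmI+\alpha\bmXi)\bmdelta}_2^2$ and bound the cross term by a negative multiple of $\norm{\bmPhi\bmdelta}_q^2$ via AM--GM and $P^\top q=q$. You bound the quadratic term through the spectral norm of $\sqrt{\diag(q)}\bmPhi$ and \eqref{eq-CauchySchwarz}, then pass to $\norm{\bmdelta}_2$ with the extreme singular values. The paper handles the scale issue you correctly identify by rescaling the features (without loss of generality) so that $\lambda_M$ is small, which is equivalent to your shrinking of $\alpha$.

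Your fallback with the norm $\norm{(\bmPhi^\top\diag(q)\bmPhi)^{1/2}\,\cdot\,}_2$ and the original $\alpha$ would not work, though. Already for $\ell(x)=x$, rescaling the features by $k$ makes $\hat{\mcH}$ affine with linear part $\bmI+\alpha k^2\bmXi$, whose spectral radius exceeds one for large $k$, so no norm makes it a contraction. Your main argument does not need this fallback.
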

\begin{proof}
    From \eqref{eq-FixedPointEquationPiHMatrix} and Assumption \ref{assump-BoundedSlope}, we know that for any $\bmtheta_1,\bmtheta_2 \in \mbR^d$ with $\bmtheta_1 \neq \bmtheta_2$, there exists some $\xi_{(x,x',\bmtheta_1,\bmtheta_2)}\in [\epsilon_1,L_1]$ such that
    \begin{equation*}
    \begin{aligned}
        &(\bmtheta_1 - \bmtheta_2)^\top \bmPhi^\top \diag(q) (\mcH(\bmPhi\bmtheta_1) - \bmPhi\bmtheta_1 - \mcH (\bmPhi\bmtheta_2) + \bmPhi\bmtheta_2)\\
        =& (\bmtheta_1 - \bmtheta_2)^\top \alpha \sum_{x\in\mcX} q(x) \bmphi(x) \sum_{x'\in\mcX} P(x'|x) \Big\{ \ell\left((\gamma \bmphi(x') - \bmphi(x))^\top \bmtheta_1 + c(x)\right) \\
        &\qquad - \ell\left((\gamma\bmphi(x') - \bmphi(x))^\top \bmtheta_2 + c(x)\right) \Big\}\\
        = &(\bmtheta_1 - \bmtheta_2)^\top \alpha \sum_{x\in\mcX} q(x) \bmphi(x) \sum_{x'\in\mcX} P(x'|x) \xi_{(x,x',\bmtheta_1,\bmtheta_2)} (\gamma\bmphi(x') - \bmphi(x))^\top (\bmtheta_1 - \bmtheta_2)\\
        = &\alpha (\bmtheta_1 - \bmtheta_2)^\top \bmXi(\bmtheta_1,\bmtheta_2) (\bmtheta_1 - \bmtheta_2),
    \end{aligned}
    \end{equation*}
    where we denote 
    \begin{equation}\label{eq-bmXi}
        \bmXi(\bmtheta_1,\bmtheta_2) :=
        \sum_{x\in\mcX} q(x) \bmphi(x) \sum_{x'\in\mcX} P(x'|x) \xi_{(x,x',\bmtheta_1,\bmtheta_2)} (\gamma \bmphi(x') - \bmphi(x))^\top \in \mbR^{d\times d}.
    \end{equation}
    Putting $(\bmtheta_1-\bmtheta_2)^\top$ inside the summations, we have
    \begin{equation}\label{eq-lemma:hatHContraction-1}
    \begin{aligned}
        &\alpha (\bmtheta_1 - \bmtheta_2)^\top \bmXi(\bmtheta_1,\bmtheta_2) (\bmtheta_1 - \bmtheta_2)\\
        =&\alpha \sum_{x\in\mcX} \sum_{x'\in \mcX} q(x) P(x'|x) (\bmtheta_1 - \bmtheta_2)^\top \bmphi(x)\xi_{(x,x',\bmtheta_1,\bmtheta_2)} (\gamma \bmphi(x') - \bmphi(x))^\top(\bmtheta_1 - \bmtheta_2)\\
        =& \alpha \sum_{x\in\mcX}\sum_{x'\in \mcX} q(x) P(x'|x) \xi_{(x,x',\bmtheta_1,\bmtheta_2)} (\gamma f_{\bmdelta}(x') f_{\bmdelta}(x) - f_{\bmdelta}^2(x))\\
        \le & \alpha \sum_{x\in\mcX} \sum_{x'\in\mcX} q(x) P(x'|x) \xi_{(x,x',\bmtheta_1,\bmtheta_2)}\left( \gamma \frac{f_{\bmdelta}^2(x') + f_{\bmdelta}^2(x)}{2} - f_{\bmdelta}^2(x) \right)\\
        =& \alpha \sum_{x\in\mcX} \sum_{x'\in \mcX} q(x) P(x'|x) \xi_{(x,x',\bmtheta_1,\bmtheta_2)} \left( \frac{\gamma}{2} - 1 \right) f_{\bmdelta}^2(x) \\
        &\qquad + \alpha \sum_{x\in\mcX} \sum_{x'\in \mcX} q(x) P(x'|x) \xi_{(x,x',\bmtheta_1,\bmtheta_2)} \frac{\gamma}{2} f_{\bmdelta}^2(x')\\
        \le & \alpha \left( \frac{\gamma}{2} - 1 \right) \epsilon_1 \sum_{x\in\mcX}\sum_{x'\in \mcX} q(x) P(x'|x) f_{\bmdelta}^2(x) + \frac{\alpha \gamma L_1}{2} \sum_{x\in\mcX}\sum_{x'\in\mcX} q(x) P(x'|x) f_{\bmdelta}^2(x')\\
        =& \alpha \left( \frac{\gamma(\epsilon_1 + L_1)}{2} - \epsilon_1 \right) \norm{f_{\bmdelta}}_q^2 < 0,
    \end{aligned}
    \end{equation}
    where we denote $f_{\bmdelta}(x) := \bmphi^\top(x)\bmdelta \in \mbR$ and $f_{\bmdelta}:= \bmPhi\bmdelta \in \mbR^{N}$ with $\bmdelta := \bmtheta_1 - \bmtheta_2 \in \mbR^d$. The first inequality follows from the AM-GM inequality, the second inequality follows from the invariant distribution equality $P^\top q = q$ and the exchange of sum over $x$ and $x'$, the last inequality follows from the fact that that we require $\gamma < \epsilon_1/L_1$, hence $\frac{\gamma(\epsilon_1 + L_1)}{2} - \epsilon_1 < 0$. 

    Before moving forward, we introduce 
    an assumption on $\bmPhi$, which will be seen shortly to be non-restrictive. Let $\lambda_M, \lambda_m$ denote the largest and smallest singular value of $\sqrt{\diag(q)}\bmPhi$, and suppose that
    \begin{equation*}
        0 < \lambda_m \le \lambda_M < \sqrt{\frac{2\epsilon_1 -\gamma(\epsilon_1 + L_1)}{ \alpha^2 L_1^2 (\gamma + 1)^2}},
    \end{equation*}
    where $\alpha>0$ is a constant satisfying the conditions of Lemma \ref{lemma:mcHProperties}. Here we have $\lambda_m > 0$ since we have assumed that $\bmPhi$ has full column rank and $q > 0$. Because the feature vectors may be rescaled without altering the algorithm (with the parameter vector $\bmtheta$ get scaled accordingly), this assumption entails no loss of generality (also see Section 2.1 in \cite{Chandak2025stsy}). 
    
    On the other hand, we have
    \begin{equation*}
    \begin{aligned}
        & \norm{\alpha \bmXi(\bmtheta_1,\bmtheta_2)(\bmtheta_1 - \bmtheta_2)}_2^2\\
        =& \norm{\bmPhi^\top \diag(q) (\mcH(\bmPhi\bmtheta_1) - \bmPhi\bmtheta_1 - \mcH(\bmPhi\bmtheta_2) + \bmPhi\bmtheta_2)}_2^2\\
        =& \norm{ \bmPhi^\top \sqrt{\diag(q)}\sqrt{\diag(q)} (\mcH(\bmPhi\bmtheta_1) - \bmPhi\bmtheta_1 - \mcH(\bmPhi\bmtheta_2) + \bmPhi\bmtheta_2)}_2^2\\
        \le & \lambda_M^2 \alpha^2  \sum_{x\in\mcX} \Bigg( \sqrt{q(x)} \sum_{x'\in\mcX} P(x'|x) \Big( \ell\big( (\gamma\bmphi(x') - \bmphi(x))^\top \bmtheta_1 + c(x) \big)  \\
        &\qquad - \ell\big( (\gamma \bmphi(x') - \bmphi(x))^\top \bmtheta_2 + c(x) \big) \Big) \Bigg)^2\\
        \le & \lambda_M^2 \alpha^2  \sum_{x\in\mcX}   q(x) \sum_{x'\in\mcX} P(x'|x) \Bigg( \xi_{(x,x',\bmtheta_1,\bmtheta_2)} (\gamma\bmphi(x') - \bmphi(x))^\top (\bmtheta_1 - \bmtheta_2) \Bigg)^2\\
        \le & \lambda_M^2 \alpha^2 L_1^2 \sum_{x\in\mcX} q(x) \sum_{x'\in\mcX} P(x'|x) (\gamma f_{\bmdelta}(x') - f_{\bmdelta}(x))^2\\
        \le & \lambda_M^2 \alpha^2 L_1^2 (\gamma +1)^2 \norm{f_{\bmdelta}}_q^2,
    \end{aligned}
    \end{equation*}
    where the first inequality follows from the definition of the spectral norm induced by the Euclidean norm, the second inequality is obtained using Assumption \ref{assump-BoundedSlope} together with Jensen's inequality, and the final inequality is a consequence of the Cauchy-Schwarz inequality \eqref{eq-CauchySchwarz}.
    
    Therefore, for any $\bmtheta_1,\bmtheta_2 \in \mbR^d$ with $\bmtheta_1 \neq \bmtheta_2$, we have
    \begin{equation*}
    \begin{aligned}
        &\hat{\mcH} (\bmtheta_1) - \hat{\mcH} (\bmtheta_2) \\
        =& \alpha \sum_{x\in\mcX} q(x) \bmphi(x) \sum_{x'\in\mcX} P(x'|x) \Bigg( \ell\left( \gamma \bmphi(x') - \bmphi(x))^\top \bmtheta_1 + c(x) \right)  \\
        &\qquad -  \ell\left( \gamma \bmphi(x') - \bmphi(x))^\top \bmtheta_2 + c(x) \right) \Bigg) + \bmtheta_1 - \bmtheta_2\\
        =& \alpha \sum_{x\in\mcX} q(x) \bmphi(x) \sum_{x'\in\mcX} P(x'|x) \xi_{(x,x',\bmtheta_1,\bmtheta_2)} (\gamma \bmphi(x') - \bmphi(x))^\top (\bmtheta_1 - \bmtheta_2) + \bmtheta_1 - \bmtheta_2\\
        =& \left(\bmI + \alpha \bmXi(\bmtheta_1,\bmtheta_2)\right)(\bmtheta_1 - \bmtheta_2).
    \end{aligned}
    \end{equation*}
    Taking the Euclidean norm, we obtain that
    \begin{equation*}
    \begin{aligned}
        & \norm{\hat{\mcH}(\bmtheta_1) - \hat{\mcH}(\bmtheta_2)}_2^2 \\
        \le & \norm{\bmtheta_1 - \bmtheta_2}_2^2 + \alpha (\bmtheta_1 - \bmtheta_2)^\top \left( \bmXi(\bmtheta_1,\bmtheta_2) + \bmXi^\top(\bmtheta_1,\bmtheta_2)\right) (\bmtheta_1 - \bmtheta_2) \\
        &\qquad + \alpha^2 (\bmtheta_1 - \bmtheta_2)^\top \bmXi^\top(\bmtheta_1,\bmtheta_2) \bmXi(\bmtheta_1,\bmtheta_2) (\bmtheta_1 - \bmtheta_2) \\
        \le& \norm{\bmtheta_1-\bmtheta_2}_2^2 + \alpha \left( \gamma(\epsilon_1 + L_1) - 2 \epsilon_1 \right) \norm{f_{\bmdelta}}_q^2 + \lambda_M^2 \alpha^2 L_1^2(\gamma + 1)^2 \norm{f_{\bmdelta}}_q^2\\
        \le & \tilde{\beta} \norm{\bmtheta_1 - \bmtheta_2}_2^2,
    \end{aligned}
    \end{equation*}
    where the last inequality follows from the fact that 
    \begin{equation*}
    \begin{aligned}
        \norm{f_{\bmdelta}}_q^2 & = \norm{\bmPhi(\bmtheta_1 - \bmtheta_2)}_q^2 = \norm{\sqrt{\diag(q)} \bmPhi (\bmtheta_1 - \bmtheta_2)}_2^2  \\
        & \ge \left( \min_{\bmtheta \neq \bm{0}}\frac{\norm{ \sqrt{\diag(q)} \bmPhi\bmtheta}_2^2}{\norm{\bmtheta}_2^2}\right) \norm{\bmtheta_1 - \bmtheta_2}_2^2 = \lambda_m^2 \norm{\bmtheta_1 - \bmtheta_2}_2^2,
    \end{aligned}
    \end{equation*}
    and the definition of $\lambda_M$ that
    \begin{equation*}
        \alpha \left( \gamma(\epsilon_1 + L_1) - 2 \epsilon_1 \right)  + \lambda_M^2 \alpha^2 L_1^2(\gamma + 1)^2 < 0,
    \end{equation*}
    which yields a contraction factor
    \begin{equation*}
        \tilde{\beta} := 1 - \lambda_m^2 \left(- \alpha (\gamma(\epsilon_1 + L_1) - 2\epsilon_1) - \lambda_M^2 \alpha^2 L_1^2 (\gamma + 1)^2\right) < 1.
    \end{equation*}
    Therefore we conclude that $\hat{\mcH}$ is a contraction on $(\mcL(\mbR^d),\norm{\cdot}_q)$. 
    This completes the proof.
\end{proof}

Next we prove some property regarding $H$ defined in \eqref{eq-UBSR-TD-SAwithMarkovNoise}. We first show a property of the loss function $\ell$.
\begin{lemma}[Lemma B.19 in \cite{WangWK2025neurips}]\label{lemma:ellinftyExistence}
    For any loss function $\ell: \mbR \to \mbR$ with $\ell(0) = 0$ satisfying Assumptions \ref{assump-BoundedSlope} and \ref{assump-Convexity}, define $\ell_s(z) := \frac{1}{s} \ell(sz)$, for all $z\in \mbR$. We have $\ell_\infty(z) := \lim_{s\to\infty} \ell_s(z)$ exists and the convergence is uniform on compact sets. Furthermore, $\ell_\infty$ also satisfies Assumptions \ref{assump-BoundedSlope} and \ref{assump-Convexity}.
\end{lemma}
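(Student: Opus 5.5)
The plan is to reduce the statement to the behaviour of $\ell$ separately on $[0,\infty)$ and on $(-\infty,0)$. On each half-line, Assumption \ref{assump-Convexity} makes $\ell$ either convex or concave, and Assumption \ref{assump-BoundedSlope} makes it Lipschitz with slopes in $[\epsilon_1,L_1]$. Since $\ell(0)=0$, we have $\ell_s(0)=0$ for every $s>0$. Moreover, for any $z\neq w$,
\[
\frac{\ell_s(z)-\ell_s(w)}{z-w}=\frac{\ell(sz)-\ell(sw)}{sz-sw}\in[\epsilon_1,L_1],
\]
so the family $\{\ell_s\}_{s>0}$ is uniformly Lipschitz and uniformly bounded on compact sets.

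\textbf{Step 1 (existence of the pointwise limit).} Fix $z>0$ and suppose $\ell$ is convex on $[0,\infty)$. Then the chord slope from the origin, $u\mapsto \ell(u)/u$, is nondecreasing on $(0,\infty)$. Hence $s\mapsto \ell_s(z)=z\cdot \ell(sz)/(sz)$ is monotone in $s$. It is also bounded, since the slope $\ell(sz)/(sz)$ lies in $[\epsilon_1,L_1]$. So $\lim_{s\to\infty}\ell_s(z)$ exists. In fact the limit equals $z\,\ell'(+\infty)$, where $\ell'(+\infty)$ is the limit of the monotone right-derivative. The concave case is identical with monotonicity reversed. For $z<0$ the same argument on $(-\infty,0)$ gives $\ell_\infty(z)=z\,\ell'(-\infty)$. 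For $z=0$ the limit is trivially $0$.

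\textbf{Step 2 (uniform convergence on compacts).} The $\ell_s$ form an equi-Lipschitz family, with common constant $L_1$, and converge pointwise. A standard $\varepsilon/3$ argument on a finite net of a compact set then upgrades pointwise convergence to uniform convergence, which is Arzelà–Ascoli in spirit. As an alternative, one can use Dini's theorem on each compact half-interval, since the convergence is monotone in $s$ there.

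\textbf{Step 3 (properties of $\ell_\infty$).} Difference quotients pass to the pointwise limit, so the slope bounds $\epsilon_1\le(\ell_\infty(z)-\ell_\infty(w))/(z-w)\le L_1$ are inherited and Assumption \ref{assump-BoundedSlope} holds for $\ell_\infty$. Continuity, strict monotonicity and $\ell_\infty(0)=0$ follow immediately. For Assumption \ref{assump-Convexity}, I would argue either way. One route: pointwise limits of convex (respectively concave) functions on a half-line are convex (respectively concave), and the convexity type of each $\ell_s$ on each half-line matches that of $\ell$, because scaling by $s>0$ preserves it. The other route: $\ell_\infty$ is linear on each half-line by Step 1, so it is trivially both convex and concave there.

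The only point needing care is Step 1, namely the monotonicity of chord slopes from the origin under one-sided convexity. Everything else is routine.
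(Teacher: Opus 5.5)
Your proof is correct and complete. This paper never proves the lemma: it imports it verbatim as Lemma B.19 of \cite{WangWK2025neurips}, so there is no in-paper argument to compare against.

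Your route is a natural self-contained proof:
\begin{itemize}
\item one-sided convexity or concavity makes the chord slope $\ell(u)/u$ from the origin monotone, so $\ell_s(z)$ is monotone and bounded in $s$;
\item this gives $\ell_\infty(z)=a_+z$ for $z\ge 0$ and $\ell_\infty(z)=a_-z$ for $z<0$, with $a_\pm\in[\epsilon_1,L_1]$;
\item the equi-Lipschitz bound $|\ell_s(z)-\ell_s(w)|\le L_1|z-w|$ upgrades pointwise convergence to uniform convergence on compacts.
\end{itemize}
The explicit piecewise-linear form of $\ell_\infty$ is also convenient later. In the proof of condition (iv) of Theorem~\ref{thm:UBSRTDASConvergence}, the paper only needs $\ell_\infty(0)=0$ and the slope bounds of Assumption~\ref{assump-BoundedSlope} to rerun the contraction argument with $c\equiv 0$, and your form gives both immediately.

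One detail deserves a line in a written proof. Assumption~\ref{assump-Convexity} only asserts convexity or concavity on the open half-line $(-\infty,0)$. For $z<0$, first extend this to $(-\infty,0]$ by continuity of $\ell$, which is $L_1$-Lipschitz. Only then does the three-slope argument give monotonicity of $u\mapsto \ell(u)/u=(\ell(u)-\ell(0))/(u-0)$ on $(-\infty,0)$.
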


Define $H_s(\bmtheta,y) := \frac{1}{s}H(s\bmtheta,y)$, for $s \ge 1$, as a rescaled version of the function $H$ with respect to $\bmtheta$. We also define $h_s(\bmtheta) := \mbE_{Y\sim d_Y} H_s(\bmtheta,Y)$, where $d_Y$ is the stationary distribution for the Markov chain $\{Y_n\}_{n\ge 1}$.

\begin{lemma}\label{lemma:HinftyhinftyExistence}
    Under the assumptions of Lemma \ref{lemma:mcHProperties}, $H_\infty(\bmtheta,y) := \lim_{s\to\infty} H_s(\bmtheta,y)$ exists and is finite for each $\bmtheta \in \mbR^d$ and $y \in \mcY$. Consequently, $h_\infty(\bmtheta) := \mbE_{Y \sim d_Y}[H_\infty(\bmtheta,Y)]$ exists and is finite, and $\lim_{s\to\infty} h_s(\bmtheta) = h_\infty(\bmtheta)$ for each $\bmtheta \in \mbR^d$.
\end{lemma}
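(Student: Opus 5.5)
The plan is to compute $H_s$ explicitly and reduce everything to Lemma \ref{lemma:ellinftyExistence}. Fix $\bmtheta\in\mbR^d$ and $y=(x,x')\in\mcY$, and write $a:=(\gamma\bmphi(x')-\bmphi(x))^\top\bmtheta$. By \eqref{eq-UBSR-TD-SAwithMarkovNoise},
\begin{equation*}
H_s(\bmtheta,y)=\frac{\alpha}{s}\bmphi(x)\,\ell\left(s a + c(x)\right)=\alpha\bmphi(x)\left[\ell_s(a) + \frac{1}{s}\left(\ell(sa+c(x))-\ell(sa)\right)\right].
\end{equation*}
The limit should be $H_\infty(\bmtheta,y)=\alpha\bmphi(x)\ell_\infty\big((\gamma\bmphi(x')-\bmphi(x))^\top\bmtheta\big)$, so the cost term $c(x)$ has to drop out.

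There are two terms to control. For the correction term, Assumption \ref{assump-BoundedSlope} gives $|\ell(sa+c(x))-\ell(sa)|\le L_1|c(x)|\le L_1\bar c$, so the term is $O(1/s)$ and vanishes uniformly in $\bmtheta$ and $y$. For the main term, Lemma \ref{lemma:ellinftyExistence} gives $\ell_s(a)\to\ell_\infty(a)$. Assumption \ref{assump-Convexity} is needed only here, to guarantee that this limit exists. The vector $\bmphi(x)$ is bounded by Assumption \ref{assump-phibounded}. Together these show that $H_\infty(\bmtheta,y)$ exists and is finite. Finiteness also follows directly, since $\ell_\infty$ satisfies Assumption \ref{assump-BoundedSlope} and $\ell_\infty(0)=0$, which give $|\ell_\infty(a)|\le L_1|a|$.

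For $h_\infty$, the chain $Y$ lives on the finite set $\mcX\times\mcX$, and $d_Y(x,x')=q(x)P(x'|x)$ under Assumption \ref{assump-StationaryDistribution}. The expectation is therefore a finite weighted sum of finite terms, so $h_\infty(\bmtheta)$ exists and is finite. Because the sum is finite, the pointwise limit passes through it, giving $h_s(\bmtheta)=\sum_y d_Y(y)H_s(\bmtheta,y)\to\sum_y d_Y(y)H_\infty(\bmtheta,y)=h_\infty(\bmtheta)$, with no dominated convergence argument required. If uniformity on compacts is wanted later, it follows from the uniform convergence statement in Lemma \ref{lemma:ellinftyExistence}, since $a$ ranges over a compact set when $\bmtheta$ does.

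The only real obstacle is the existence of $\ell_\infty$, and that is already supplied by Lemma \ref{lemma:ellinftyExistence}. The one point needing care is showing that the additive constant $c(x)$ disappears under rescaling, which the Lipschitz bound above handles.
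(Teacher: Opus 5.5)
Your proof is correct and follows essentially the same route as the paper: you split $\frac{1}{s}\ell(sa+c(x))$ into $\ell_s(a)$ plus a correction of size at most $L_1\bar c/s$, then invoke Lemma~\ref{lemma:ellinftyExistence} for the main term. The only difference is cosmetic: the paper passes the limit through $h_s$ via the bounded convergence theorem with a uniform bound on $\|H_s\|_2$, whereas you observe that $d_Y$ has finite support, which makes that step immediate.
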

\begin{proof}
    By definition, for any $\bmtheta \in \mbR^d$, $y =(x,x') \in \mcY$, we have
    \begin{equation*}
    \begin{aligned}
        \lim_{s\to\infty}H_s(\bmtheta,y) &= \lim_{s\to\infty} \frac{1}{s} \alpha \bmphi(x)\ell\left( (\gamma \bmphi(x') - \bmphi(x))^\top s \bmtheta + c(x) \right)\\
        &= \alpha \bmphi(x) \ell_\infty\left( (\gamma \bmphi(x') - \bmphi(x))^\top \bmtheta \right) =: H_\infty(\bmtheta,y),
    \end{aligned}
    \end{equation*}
    exists and is finite. Here the second equality follows from the fact that $\lim_{s\to\infty}|\frac{1}{s}\ell(sz + c) - \frac{1}{s}\ell(sz)| \le \lim_{s\to\infty} \frac{1}{s} L_1 |sz + c -sz| = 0$ for any $c \in \mbR$, hence following Lemma \ref{lemma:ellinftyExistence}, $\lim_{s\to\infty} \frac{1}{s}\ell(sz + c) = \ell_\infty(z)$. As a result, $h_\infty(\bmtheta) :=\mbE_{Y\sim q_{Y}}[H_\infty(\bmtheta,Y)]$ exists and is finite for any fixed $\bmtheta \in \mbR^d$ since 
    \begin{equation*}
    \begin{aligned}
        \norm{h_\infty(\bmtheta)}_2 &= \alpha \left\| \mbE[\bmphi(X)(\ell_\infty((\gamma \bmphi(X') - \bmphi(X))^\top\bmtheta)) ]  \right\|_2\\
        & \le \alpha \mbE\left[ \left\| \bmphi(X) \left(\ell_\infty((\gamma \bmphi(X') - \bmphi(X))^\top \bmtheta) - \ell_\infty(0)\right) \right\|_2 \right]
        \le \alpha M^2 L_1 (\gamma + 1) \norm{\bmtheta}_2 < \infty,
    \end{aligned}
    \end{equation*}
    for any random states $X,X'$ in $\mcX$,
    where the first inequality is from Jensen's inequality and the fact that $\ell_\infty(0) = 0$; the second inequality follows from the triangular inequality, Assumptions \ref{assump-BoundedSlope}, \ref{assump-phibounded}.
    
    Note that $\norm{H_s(\bmtheta,y)}_2 \le \alpha ML_1( (\gamma+1)M \norm{\bmtheta}_2 + \bar{c})$ for all $s\ge 1$, $\bmtheta \in \mbR^d, y\in\mcY$. By the bounded convergence theorem, we have $\lim_{s\to\infty} h_s(\bmtheta) = h_\infty(\bmtheta)$ for each $\bmtheta \in \mbR^d$.    
    This completes the proof.
\end{proof}

\begin{lemma}\label{lemma:HLipschitz}
    The function $H$ is Lipschitz continuous with respect to $\bmtheta$, i.e., there exists a constant $L_{2}$ such that $\norm{H(\bmtheta_1,y) - H(\bmtheta_2,y)}_2 \le L_{2}\norm{\bmtheta_1 - \bmtheta_2}_2$, for any $\bmtheta_1,\bmtheta_2 \in \mbR^d$, $y\in\mcY$.
\end{lemma}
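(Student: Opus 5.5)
The plan is a direct computation from the definition of $H$ in \eqref{eq-UBSR-TD-SAwithMarkovNoise}, using only the Lipschitz bound on $\ell$ from Assumption \ref{assump-BoundedSlope} and the feature bound from Assumption \ref{assump-phibounded}.

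Fix $y=(x,x')\in\mcY$ and $\bmtheta_1,\bmtheta_2\in\mbR^d$. Since $H(\bmtheta,y)=\alpha\bmphi(x)\ell\big((\gamma\bmphi(x')-\bmphi(x))^\top\bmtheta+c(x)\big)$, the difference factors as
\begin{equation*}
H(\bmtheta_1,y)-H(\bmtheta_2,y)=\alpha\bmphi(x)\Big[\ell\big(\bmz^\top\bmtheta_1+c(x)\big)-\ell\big(\bmz^\top\bmtheta_2+c(x)\big)\Big],
\end{equation*}
where $\bmz:=\gamma\bmphi(x')-\bmphi(x)$. The bracket is a scalar. By Assumption \ref{assump-BoundedSlope}, its absolute value is at most $L_1|\bmz^\top(\bmtheta_1-\bmtheta_2)|$. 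Note that the constant $c(x)$ cancels inside the difference of arguments, which is why boundedness of $c$ is not needed here.

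Taking Euclidean norms and applying Cauchy--Schwarz gives
\begin{equation*}
\norm{H(\bmtheta_1,y)-H(\bmtheta_2,y)}_2\le\alpha\norm{\bmphi(x)}_2L_1\norm{\bmz}_2\norm{\bmtheta_1-\bmtheta_2}_2.
\end{equation*}
By Assumption \ref{assump-phibounded}, $\norm{\bmphi(x)}_2\le M$. By the triangle inequality, $\norm{\bmz}_2\le\gamma M+M=(1+\gamma)M$. Hence one may take $L_2:=\alpha L_1M^2(1+\gamma)$. This constant is uniform in $y$ because $\mcX$ is finite and the feature bound is uniform.

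There is no real obstacle. The only point to flag is that the statement implicitly uses Assumptions \ref{assump-BoundedSlope} and \ref{assump-phibounded}, and only the upper slope bound $L_1$ is needed, not $\epsilon_1$ or the discount restriction. If one wanted to avoid Assumption \ref{assump-phibounded}, finiteness of $\mcX$ already gives $M:=\max_x\norm{\bmphi(x)}_2<\infty$.
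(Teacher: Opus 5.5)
Your proposal is correct and follows essentially the same route as the paper's proof. The paper likewise writes the difference of the $\ell$ values as a slope factor in $[\epsilon_1,L_1]$ times $(\gamma\bmphi(x')-\bmphi(x))^\top(\bmtheta_1-\bmtheta_2)$, then bounds it with Assumptions \ref{assump-BoundedSlope} and \ref{assump-phibounded} to get the same constant $L_2=\alpha M^2L_1(1+\gamma)$.
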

\begin{proof}
    By Assumption \ref{assump-BoundedSlope}, for $y = (x,x') \in \mcY$, for any $\bmtheta_1, \bmtheta_2 \in \mbR^d$, there exists some $\xi_{(x,x',\bmtheta_1,\bmtheta_2)} \in [\epsilon_1,L_1]$ such that,
    \begin{equation*}
    \begin{aligned}
        \norm{H(\bmtheta_1,y) - H(\bmtheta_2,y)}_2 &= \Big\|\alpha \bmphi(x) \left( \xi_{(x,x',\bmtheta_1,\bmtheta_2)} (\gamma \bmphi(x') - \bmphi(x))^\top (\bmtheta_1 - \bmtheta_2) \right)\Big\|_2 \\
        &\le \alpha M^2 L_1 (\gamma + 1) \norm{\bmtheta_1 - \bmtheta_2}_2.
    \end{aligned}
    \end{equation*}
    This completes the proof.
\end{proof}

We invoke a recent stochastic approximation result with Markovian noise to establish the almost sure convergence of our UBSR-TD algorithm \eqref{eq-UBSR-TD}. We refer readers to Section 2.2 of \cite{Borkar2025aoap} for the definitions of a small function and aperiodicity of a Markov chain.

\begin{theorem}[Theorem 1, \cite{Borkar2025aoap}]\label{thm:BorkarSATheorem}
    Consider the stochastic approximation
    \begin{equation}\label{eq-SAMarkovian}
        \bmtheta_{n+1} = \bmtheta_n + \eta_n f(\bmtheta_n, Y_{n+1}),
    \end{equation}
    where $f:\mbR^d \times \mcY \to \mbR^d$, $\{Y_n\}_{n\ge 1}$ is an irreducible and aperiodic Markov chain on $\mcY$, with unique stationary distribution $d_Y$, and transition kernel $P$. Suppose the following conditions hold:
    \begin{enumerate}[label = (\roman*)]
        \item The step size $\{\eta_n\}_{n\ge 1}$ satisfies Assumption \ref{assump-StepSizeLimit}.
        
        \item There exists a constant $\mfL$ such that for each $y \in \mcY$, $\norm{f(\bm{0},y)} \le \mfL$, and 
        \[ \norm{ f(\bmtheta_1, y) - f(\bmtheta_2, y)} \le \mfL \norm{ \bmtheta_1 - \bmtheta_2}, \quad \forall \bmtheta_1,\bmtheta_2 \in \mbR^d.\] 

        \item The Lyapunov drift criterion (V4) holds with respect to the Lyapunov function $v: \mcY \to [1,\infty)$, i.e., for a small function $s$ and constants $\delta > 0$, $b < \infty$,
        \[ \mbE[v(Y_{n+1}) - v(Y_{n}) | Y_n = y] \le -\delta v(y) + b s(y),\quad \forall y \in \mcY. \]
        
        \item The scaled vector field $\bar{f}_\infty(\bmtheta)$ exists for each $\bmtheta \in \mbR^d$, where $\bar{f}_\infty(\bmtheta) := \lim_{r\to\infty} \bar{f}_r(\bmtheta)$, $\bar{f}_r(\bmtheta) := \mbE_{Y\sim d_Y}[f_r(\bmtheta,Y)]$, and $f_r(\bmtheta,y) := \frac{1}{r}f(r\bmtheta,y)$ for $r \ge 1$, $y \in \mcY$.  
        Moreover, the ODE 
        \begin{equation}\label{eq-BorkarODEbarfinfty}
            \frac{d \bmvartheta(t)}{dt} = \bar{f}_\infty(\bmvartheta(t)),
        \end{equation}
        is globally asymptotically stable.

        \item The ODE 
        \begin{equation}\label{eq-BorkarODEbarf}
            \frac{d\bmvartheta(t)}{dt} = \bar{f}(\bmvartheta(t)),
        \end{equation}
        is globally asymptotically stable, where $\bar{f}(\bmtheta) := \mbE_{Y\sim d_Y}[f(\bmtheta,Y)]$, for all $\bmtheta \in \mbR^d$.
    \end{enumerate}
    Then the sequence $\{\bmtheta_n\}_{n\ge 0}$ generated by \eqref{eq-SAMarkovian} converges a.s. to the invariant set of the ODE \eqref{eq-BorkarODEbarf}.
\end{theorem}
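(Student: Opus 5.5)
The plan is the ODE method with Markovian noise: remove the noise with a Poisson equation, then prove stability by the Borkar--Meyn rescaling, and finally prove convergence by an asymptotic pseudotrajectory argument. Throughout, write $\bar f$ for the mean field defined in (v).

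\textbf{Step 1: Poisson equation.} From (iii) and aperiodicity, the chain $\{Y_n\}$ is $v$-geometrically ergodic (Meyn--Tweedie). Hence for every $\bmtheta$ there is a solution $\hat f(\bmtheta,\cdot)$ of
\[
\hat f(\bmtheta,y)-(P\hat f)(\bmtheta,y)=f(\bmtheta,y)-\bar f(\bmtheta).
\]
By (ii), this solution satisfies $\norm{\hat f(\bmtheta,y)}\le C(1+\norm{\bmtheta})v(y)$. It is also Lipschitz in $\bmtheta$ with constant $C v(y)$, because the centred map $f(\bmtheta_1,\cdot)-f(\bmtheta_2,\cdot)-\bar f(\bmtheta_1)+\bar f(\bmtheta_2)$ has $v$-weighted norm at most $2\mfL\norm{\bmtheta_1-\bmtheta_2}$.

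\textbf{Step 2: noise decomposition.} Rewrite the recursion as
\[
\bmtheta_{n+1}=\bmtheta_n+\eta_n\bigl(\bar f(\bmtheta_n)+M_{n+1}+\varepsilon_{n+1}\bigr),
\qquad
M_{n+1}:=\hat f(\bmtheta_n,Y_{n+1})-P\hat f(\bmtheta_n,Y_n).
\]
Here $M_{n+1}$ is a martingale difference. The remaining term $\varepsilon_{n+1}$ collects a telescoping part $P\hat f(\bmtheta_n,Y_n)-P\hat f(\bmtheta_{n+1},Y_{n+1})$ and a correction term $P\hat f(\bmtheta_{n+1},Y_{n+1})-P\hat f(\bmtheta_n,Y_{n+1})$. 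I will then do a summation by parts on $\sum\eta_n\varepsilon_{n+1}$. This is where Assumption \ref{assump-StepSizeLimit} enters: the existence of $\lim(\eta_{n+1}^{-1}-\eta_n^{-1})$ gives $\eta_n-\eta_{n+1}=O(\eta_n^2)$. Together with the Lipschitz bound of Step~1 and $\norm{\bmtheta_{n+1}-\bmtheta_n}=O(\eta_n(1+\norm{\bmtheta_n})v(Y_{n+1}))$, the corrected sums are controlled by $\sum\eta_n^2 v(Y_n)^2(1+\norm{\bmtheta_n})^2$. The (V4) drift yields uniform moment bounds on $v(Y_n)$, which make this sum manageable.

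\textbf{Step 3: stability.} I will follow the Borkar--Meyn scaling argument. Split time into windows of ODE length $T$. At the start of window $k$, set $r_k:=\max(1,\norm{\bmtheta_{m_k}})$ and rescale the iterates by $r_k^{-1}$. On the window, the rescaled iterates follow the recursion driven by $f_{r_k}$, with noise scaled down by the same factor $r_k$. By Step~2, the normalized noise vanishes over the window, so the rescaled iterates track the ODE $\dot{\bmvartheta}=\bar f_{r_k}(\bmvartheta)$ started in the unit ball. The vector fields $\bar f_r$ are uniformly Lipschitz and converge to $\bar f_\infty$ by (iv), so this ODE is uniformly close to $\dot{\bmvartheta}=\bar f_\infty(\bmvartheta)$ on $[0,T]$. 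Since $\bar f_\infty(c\bmtheta)=c\bar f_\infty(\bmtheta)$ and the origin is globally asymptotically stable, we can pick $T$ with $\norm{\bmvartheta(T)}\le 1/4$ from the unit ball. Consequently, whenever $\norm{\bmtheta_{m_k}}$ is large, the norm at the end of the window is at most half of it. This contraction gives $\sup_n\norm{\bmtheta_n}<\infty$ almost surely.

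\textbf{Step 4: convergence.} Once $\sup_n\norm{\bmtheta_n}<\infty$ is known, the martingale $\sum\eta_nM_{n+1}$ converges almost surely by the $L^2$ bound, and $\sum\eta_n\varepsilon_{n+1}$ converges by Step~2. The interpolated trajectory is therefore an asymptotic pseudotrajectory of $\dot{\bmvartheta}=\bar f(\bmvartheta)$. By Benaïm's theorem, the limit set is internally chain transitive, and under (v) it lies in the invariant set of that ODE.

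\textbf{Main obstacle.} The hardest step is Step~3. There the noise bounds must be proved before boundedness of $\bmtheta_n$ is available, and the weight $v(Y_n)$ is unbounded, so the estimates are coupled. I would first try working with the rescaled iterates, whose window-wise growth is controlled by Gronwall using the linear growth from (ii). The normalized martingale increments then have conditional second moments bounded by $Cv(Y_n)^2$ uniformly in $r_k$, which (V4) controls. If that is not enough, I would fall back on a stopping-time truncation at level $\norm{\bmtheta}\le R$ followed by letting $R\to\infty$.
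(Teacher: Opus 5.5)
The paper does not prove this statement; it imports Theorem~1 of \cite{Borkar2025aoap}. Your architecture is the standard route behind that result: Poisson-equation splitting of the Markov noise into a martingale difference, a telescoping term and an $O(\eta_n)$ correction; Borkar--Meyn rescaling with the ODE at infinity for boundedness; then ODE tracking for convergence.

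\textbf{The gap is the moment control, which does not close as written.} You solve Poisson's equation in the $v$-weighted norm, $\norm{\hat f(\bmtheta,y)}\le C(1+\norm{\bmtheta})v(y)$. You then rely on $\sum_n\eta_n^2v(Y_n)^2(1+\norm{\bmtheta_n})^2<\infty$, and on conditional second moments of order $v^2$ for the (rescaled) martingale increments, and you assert that (V4) controls these. It does not. (V4) gives $\sup_n\mbE[v(Y_n)]<\infty$ and nothing about $\mbE[v(Y_n)^2]$. For example, take an i.i.d.\ chain on a countable space with $\mbP(v(Y_n)>t)\sim t^{-3/2}$. It satisfies (V4), and $\eta_n=n^{-0.6}$ meets Assumption~\ref{assump-StepSizeLimit}. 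Yet $\sum_n\eta_n^2v(Y_n)^2$ diverges almost surely by the second Borel--Cantelli lemma.

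\textbf{Repair.} Use the fact that (ii) makes $f$ bounded in $y$: $\norm{f(\bmtheta,y)}\le\mfL(1+\norm{\bmtheta})$.
\begin{itemize}
\item By Jensen, $P\sqrt v\le\sqrt{Pv}$, so a geometric drift also holds for $\sqrt v$. In the petite-set form $Pv\le(1-\delta)v+b\bm{1}_C$, one gets $P\sqrt v\le\sqrt{1-\delta}\,\sqrt v+\sqrt b\,\bm{1}_C$.
\item The chain is therefore $\sqrt v$-uniformly ergodic, and you may take $\hat f$ in the $\sqrt v$-weighted space. Then $\norm{\hat f(\bmtheta,y)}\le C(1+\norm{\bmtheta})\sqrt{v(y)}$, with Lipschitz constant $C\sqrt{v(y)}$, uniformly over the rescalings $r^{-1}\hat f(r\bmtheta,y)$, $r\ge1$.
\item Conditional second moments of the martingale increments are then at most $C(1+\norm{\bmtheta_n})^2(Pv)(Y_n)$. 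Since its mean is finite, $\sum_n\eta_n^2(Pv)(Y_n)<\infty$ a.s.
\item The same fact gives $\eta_n\sqrt{v(Y_n)}\to0$ a.s., which disposes of the boundary terms in your summation by parts.
\end{itemize}

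\textbf{Drop $v$ from your increment bound.} Condition (ii) gives directly $\norm{\bmtheta_{n+1}-\bmtheta_n}\le\eta_n\mfL(1+\norm{\bmtheta_n})$. So the correction and step-size-difference terms are of order $\eta_n^2(1+\norm{\bmtheta_n})$ times $P\sqrt v$ along the chain, which is a first-moment quantity.

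\textbf{This also removes what you call the main obstacle.} Since $\norm{r^{-1}f(r\bmtheta,y)}\le\mfL(1+\norm{\bmtheta})$ for $r\ge1$, discrete Gronwall gives a deterministic bound, of order $e^{\mfL T}$, on the rescaled iterates over a window of ODE-length $T$ started in the unit ball. No stopping-time truncation is needed.

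\textbf{Step~4.} There, $\sup_n\norm{\bmtheta_n}$ is only a.s.\ finite. So use the conditional-variance criterion $\sum_n\eta_n^2\mbE[\norm{M_{n+1}}^2\mid\mcF_n]<\infty$ a.s., rather than an unconditional $L^2$ bound.

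In the paper's application $\mcY$ is finite and $v\equiv1$ works, so the issue is invisible there; but the statement as posed is for general $\mcY$.
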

We note that in \cite{Borkar2025aoap}, the transition kernel is allowed to depend on $\bmtheta$ and the Lipschitz constant $\mfL$ in (ii) can be a function of $y$, which necessitates additional continuity and measurable assumptions (see inequality (30) and Assumption (A2) in \cite{Borkar2025aoap}). In contrast, since the transition kernel in our framework does not depend on $\bmtheta$ and $\mfL$ is a constant, these assumptions are omitted from Theorem~\ref{thm:BorkarSATheorem} for clarity.

\begin{proof}[Proof of Theorem \ref{thm:UBSRTDASConvergence}]
    It is straightforward to see that \eqref{eq-UBSR-TD-SAwithMarkovNoise} is the stochastic approximation with Markovian noise \eqref{eq-SAMarkovian} with $f(\bmtheta,y) := \frac{1}{\alpha} H(\theta,y)$, $\bar{f}(\bmtheta) := \frac{1}{\alpha} h(\bmtheta)$, $\bar{f}_\infty(\bmtheta) := \frac{1}{\alpha} h_\infty(\bmtheta)$, $\bar{f}_r(\bmtheta) := \frac{1}{\alpha} h_r(\bmtheta)$, with $H$, $h$, $h_\infty$, $h_r$ defined in \eqref{eq-UBSR-TD-SAwithMarkovNoise}, \eqref{eq-TD-ODE}, Lemma \ref{lemma:HinftyhinftyExistence}, respectively,  for all $\bmtheta \in \mbR^d$ and $y\in\mcY$.
    We show our algorithm satisfies all the conditions required by Theorem \ref{thm:BorkarSATheorem}.
    
    (i) Condition (i) is satisfied by enforcing Assumption \ref{assump-StepSizeLimit}.

    (ii) Condition (ii) is proved by Lemma \ref{lemma:HLipschitz} by letting $\mfL := L_2/\alpha$.

    (iii) Under Assumption~\ref{assump-StationaryDistribution}, the Markov chain $\{Y_n\}_{n \ge 1}$ is finite, irreducible and aperiodic. The Lyapunov drift condition (V4) holds trivially with $v(y) \equiv 1$, $s(y) \equiv 1$ for all $y \in \mcY$ and any $b > \delta$. In particular,
    \[
    \mbE\left[ v(Y_{n+1}) - v(Y_n) \mid Y_n = y \right]
    = 0 \le -\delta + b, \qquad \forall\, y \in \mcY.
    \]

    (v) Lemma \ref{lemma:hatHContraction} shows that $\hat{\mcH}$ is a contraction, while Lemma \ref{lemma:ODEUniqueEquilibrium} establishes that $\bmtheta^*$ is a unique equilibrium point of the ODE \eqref{eq-TD-ODE}. Consequently, we have $\bm{0} = \hat{\mcH}(\bmtheta^*) - \bmtheta^*$, implying that $\bmtheta^*$ is a fixed point of $\hat{\mcH}$. Then by Theorem 12.1 in \cite{Borkar2023book}, it follows that $\bmtheta^*$ is globally asymptotically stable for the ODE \eqref{eq-TD-ODE}. Since $\alpha > 0$ is a constant, it follows that $\bmtheta^*$ is also globally asymptotically stable for the ODE \eqref{eq-BorkarODEbarf}.

    (iv) Lemma~\ref{lemma:HinftyhinftyExistence} establishes that $h_\infty(\bmtheta)$ exists, is finite, and satisfies
    $h_\infty(\bmtheta) = \lim_{s\to\infty} h_s(\bmtheta)$ for all $\bmtheta \in \mbR^d$. Consequently, it follows that
    $\bar{f}_\infty(\bmtheta) = \lim_{s\to\infty} \bar{f}_s(\bmtheta)$ also exists and is finite for all $\bmtheta \in \mbR^d$.
    The ODE 
    \begin{equation}\label{eq-ODE-hinfty}
        \frac{d \bmvartheta(t)}{dt} = h_\infty(\bmvartheta(t)),
    \end{equation}
    is then the ODE \eqref{eq-TD-ODE} with $c(x) \equiv 0$, $\ell(x) = \ell_\infty(x)$ for all $x\in\mcX$, and $\bm{0}$ is an equilibrium for \eqref{eq-ODE-hinfty}. Then following a similar step as the proof of (v), we conclude that such an equilibrium point is globally asymptotically stable. Moreover, it is also a globally asymptotically stable equilibrium for the ODE \eqref{eq-BorkarODEbarfinfty}.

    Therefore, by applying Theorem \ref{thm:BorkarSATheorem}, we obtain that the sequence $\{\bmtheta_n\}_{n\ge 1}$ generated by the UBSR-TD algorithm \eqref{eq-UBSR-TD} converges almost surely to $\bmtheta^*$. The rest of the result then follows from Lemma \ref{lemma:Phitheta-VstarBound}. This completes the proof.
\end{proof}

\section{Further Experiments}\label{sec:FurtherExperiments}

\subsection{Convergence Experiments for Soft Quantile}\label{sec:ConvergenceSoftQuantile}

Figure~\ref{figure:ConvergenceSQ} shows the convergence of the policy evaluation algorithms under soft-quantile risk across different risk parameters and discount factors, under the setting of Section~\ref{sec:ConvergenceExperiementsd10}. The first two rows correspond to $d=N$, and the last two to $d<N$; shaded regions indicate standard errors. All methods converge to the true value function when $d=N$. UBSR-TD and UBSR-Newton converge to the same objective when $d< N$. UBSR-TD(1) does not minimize the distance to the projected value function for the case of $d=5$, $\gamma=0.05$, $\mu=0.8$, and diverges for the case of $d=5$, $\gamma=0.9$, and $\mu=0.8$.

\begin{figure}[htbp]
    \centering
    \includegraphics[width=0.95\linewidth]{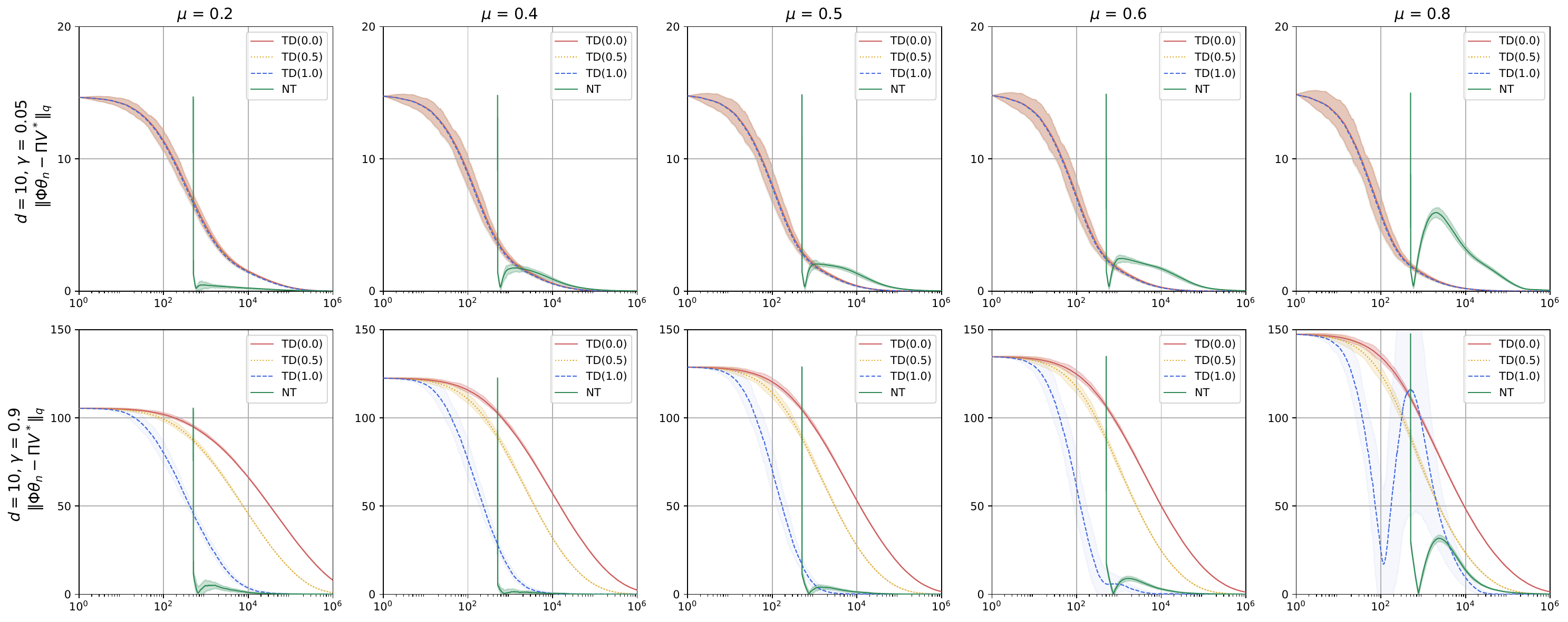}

    \includegraphics[width=0.95\linewidth]{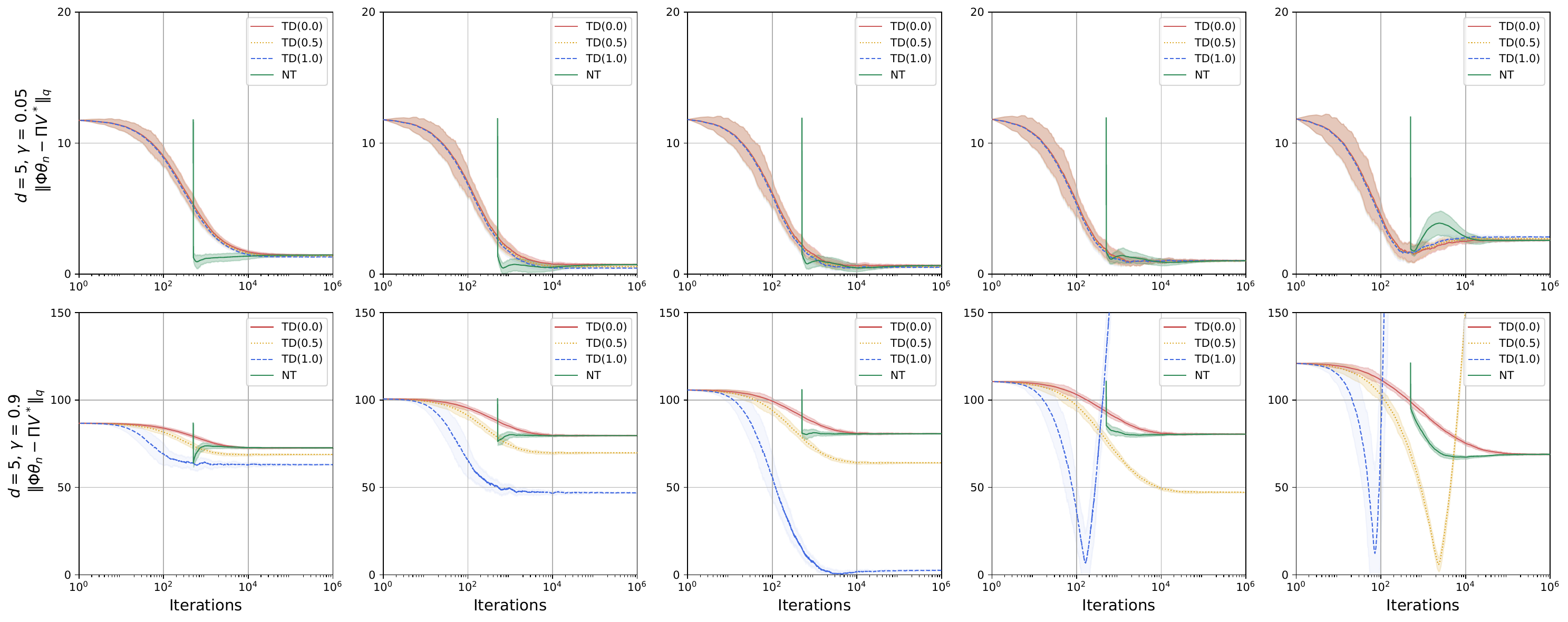}
    \caption{Convergence results for soft-quantile risk}
    \label{figure:ConvergenceSQ}
\end{figure}

\subsection{Runtime Comparison for Different ADP Policy Evaluation Algorithms}\label{sec:ADPPolicyEvaluation}

\begin{table}[htbp]
  \centering
  \caption{Compuational times of different policy evaluation algorithms}
  \resizebox{\textwidth}{!}{
    \begin{tabular}{c|cccccccccccccccccc}
    \toprule
    $m$     & \multicolumn{6}{c}{3}                         & \multicolumn{6}{c}{5}                         & \multicolumn{6}{c}{8} \\
    \midrule
    $\tau$   & \multicolumn{2}{c}{0.4} & \multicolumn{2}{c}{0.5} & \multicolumn{2}{c}{0.6} & \multicolumn{2}{c}{0.4} & \multicolumn{2}{c}{0.5} & \multicolumn{2}{c}{0.6} & \multicolumn{2}{c}{0.4} & \multicolumn{2}{c}{0.5} & \multicolumn{2}{c}{0.6} \\
    \midrule
    VI    & \multicolumn{2}{c}{5.74s} & \multicolumn{2}{c}{5.40s} & \multicolumn{2}{c}{7.48s} & \multicolumn{2}{c}{376.25s} & \multicolumn{2}{c}{396.71s} & \multicolumn{2}{c}{673.51s} & \multicolumn{2}{c}{$>$14h} & \multicolumn{2}{c}{$>$17h} & \multicolumn{2}{c}{$>$35h} \\
    TD    & 171.10s & (7.99\%) & 160.71s & (6.70\%) & 171.76s & (8.00\%) & 171.32s & (0.87\%) & 161.45s & (0.70\%) & 173.18s & (0.37\%) & 178.25s & (0.76\%) & 165.91s & (1.20\%) & 175.66s & (1.27\%) \\
    Newton & 198.12s & (2.07\%) & 183.31s & (0.45\%) & 198.98s & (2.74\%) & 201.36s & (2.48\%) & 186.43s & (0.62\%) & 202.82s & (7.70\%) & 216.02s & (3.12\%) & 197.30s & (0.40\%) & 212.77s & (7.44\%) \\
    \bottomrule
    \end{tabular}%
    }
  \label{table:PEComparison}%
\end{table}%

We compare the runtime of policy evaluation for the static policy in Section \ref{sec:PerishableInventoryProblem}. Table \ref{table:PEComparison} reports the average runtime (CPU time) of UBSR-TD and UBSR-Newton for one policy evaluation step (60,000 online sampling iterations) based on 20 independent runs, and compares it with exact policy evaluation by value iteration using a tolerance of $0.1$. Experiments are conducted using $m=3,5,8$, cost parameters $(c_1,c_2,c_3,c_4)=(10,1,20,5)$, and $\tau=0.4,0.5,0.6$. The values in parentheses denote the optimality gap between the averaged function approximation and the value computed by value iteration at the empty-inventory state.

The results show that the computational cost of value iteration grows rapidly with $m$, increasing from only a few seconds when $m=3$ to more than 10 hours when $m=8$. By contrast, UBSR-TD and UBSR-Newton require approximately 200 seconds across all settings while maintaining relative errors below $10\%$. Their runtimes increase moderately with $m$ as the number of features grows from 7 to 37. UBSR-Newton is consistently more computationally expensive than UBSR-TD because it additionally requires the processing of the information matrix. Overall, these results demonstrate that function approximation effectively alleviates the curse of dimensionality in the perishable inventory management problem.

\subsection{Further Details of ADP Policy Comparisons}\label{sec:FurtherDetailsADP}
Table~\ref{table:ADPPolicyEvaluationm3} reports the exact evaluations of policies obtained by value and policy iteration for the perishable inventory management problem under different risk and cost settings in Section \ref{sec:PerishableInventoryProblem} when $m=3$. Static, myopic, and risk-neutral (RN) policies are included as benchmarks, with optimality gaps shown in parentheses.

It can be seen that, the static policy performs poorly, with optimality gaps reaching $512.31\%$, highlighting the inadequacy of classical $(s,S)$ policies for perishable inventory. The myopic policy performs much better, with gaps below $3\%$ in most cases, but can still reach $30.49\%$ in the extreme case. The risk-neutral policy generally outperforms these benchmarks but can exceed a $6\%$ gap under strong risk sensitivity. In contrast, the policy iteration methods consistently achieve average gaps below $0.2\%$, demonstrating their effectiveness.

\begin{table}[htbp]
  \centering
  \caption{Results under varying policies, risk parameters, and cost settings for $m=3$}
  \resizebox{\textwidth}{!}{
    \begin{tabular}{c|c|cccccccccccccc}
    \toprule
    $\tau$   & Cost & \multicolumn{2}{c}{(10,1,20,5)} & \multicolumn{2}{c}{(20,1,20,5)} & \multicolumn{2}{c}{(50,1,20,5)} & \multicolumn{2}{c}{(80,1,20,5)} & \multicolumn{2}{c}{(10,1,20,20)} & \multicolumn{2}{c}{(10,1,20,50)} & \multicolumn{2}{c}{(10,1,20,80)} \\
    \midrule
    \multirow{6}[2]{*}{0.1} & \textbf{Optimal} & \multicolumn{2}{c}{\textbf{34.49 }} & \multicolumn{2}{c}{\textbf{54.02 }} & \multicolumn{2}{c}{\textbf{101.45 }} & \multicolumn{2}{c}{\textbf{144.53 }} & \multicolumn{2}{c}{\textbf{37.08 }} & \multicolumn{2}{c}{\textbf{40.47 }} & \multicolumn{2}{c}{\textbf{42.96 }} \\
          & Static & 36.52  & (5.91\%) & 59.77  & (10.65\%) & 122.80  & (21.04\%) & 181.45  & (25.54\%) & 47.08  & (26.98\%) & 68.03  & (68.09\%) & 88.68  & (106.44\%) \\
          & Myopic & 34.66  & (0.49\%) & 55.22  & (2.22\%) & 105.17  & (3.67\%) & 188.59  & (30.49\%) & 37.22  & (0.38\%) & 40.61  & (0.33\%) & 43.01  & (0.13\%) \\
          & RN    & 35.91  & (4.12\%) & 57.45  & (6.36\%) & 105.85  & (4.34\%) & 147.65  & (2.16\%) & 37.98  & (2.44\%) & 40.88  & (1.00\%) & 43.22  & (0.60\%) \\
          & TD    & 34.49  & (0.02\%) & 54.09  & (0.14\%) & 101.46  & (0.01\%) & 144.54  & (0.01\%) & 37.09  & (0.02\%) & 40.48  & (0.02\%) & 42.97  & (0.03\%) \\
          & Newton & 34.49  & (0.02\%) & 54.03  & (0.02\%) & 101.46  & (0.01\%) & 144.54  & (0.01\%) & 37.09  & (0.02\%) & 40.48  & (0.02\%) & 42.97  & (0.03\%) \\
    \midrule
    \multirow{6}[2]{*}{0.4} & \textbf{Optimal} & \multicolumn{2}{c}{\textbf{49.61 }} & \multicolumn{2}{c}{\textbf{72.43 }} & \multicolumn{2}{c}{\textbf{132.50 }} & \multicolumn{2}{c}{\textbf{184.28 }} & \multicolumn{2}{c}{\textbf{63.11 }} & \multicolumn{2}{c}{\textbf{79.09 }} & \multicolumn{2}{c}{\textbf{89.88 }} \\
          & Static & 66.26  & (33.56\%) & 113.77  & (57.07\%) & 204.79  & (54.56\%) & 253.68  & (37.66\%) & 137.07  & (117.20\%) & 278.31  & (251.91\%) & 419.40  & (366.61\%) \\
          & Myopic & 49.67  & (0.12\%) & 72.87  & (0.61\%) & 135.73  & (2.44\%) & 188.17  & (2.11\%) & 63.13  & (0.03\%) & 79.10  & (0.02\%) & 89.90  & (0.02\%) \\
          & RN    & 49.66  & (0.09\%) & 72.62  & (0.26\%) & 132.87  & (0.28\%) & 184.60  & (0.17\%) & 63.14  & (0.05\%) & 79.11  & (0.03\%) & 89.89  & (0.01\%) \\
          & TD    & 49.63  & (0.03\%) & 72.46  & (0.04\%) & 132.52  & (0.02\%) & 184.30  & (0.01\%) & 63.13  & (0.03\%) & 79.10  & (0.02\%) & 89.89  & (0.01\%) \\
          & Newton & 49.62  & (0.02\%) & 72.44  & (0.01\%) & 132.51  & (0.01\%) & 184.29  & (0.00\%) & 63.12  & (0.01\%) & 79.10  & (0.02\%) & 89.89  & (0.01\%) \\
    \midrule
    \multirow{5}[2]{*}{0.5} & \textbf{Optimal} & \multicolumn{2}{c}{\textbf{54.83 }} & \multicolumn{2}{c}{\textbf{78.15 }} & \multicolumn{2}{c}{\textbf{140.54 }} & \multicolumn{2}{c}{\textbf{194.79 }} & \multicolumn{2}{c}{\textbf{73.40 }} & \multicolumn{2}{c}{\textbf{94.37 }} & \multicolumn{2}{c}{\textbf{108.68 }} \\
          & Static & 72.94  & (33.03\%) & 122.58  & (56.85\%) & 216.48  & (54.04\%) & 267.53  & (37.34\%) & 158.48  & (115.90\%) & 329.56  & (249.20\%) & 500.63  & (360.66\%) \\
          & Myopic & 54.85  & (0.02\%) & 78.45  & (0.39\%) & 142.67  & (1.52\%) & 199.91  & (2.63\%) & 73.41  & (0.01\%) & 94.40  & (0.02\%) & 108.71  & (0.03\%) \\
          & TD    & 54.84  & (0.01\%) & 78.18  & (0.04\%) & 140.55  & (0.01\%) & 194.80  & (0.01\%) & 73.42  & (0.02\%) & 94.38  & (0.01\%) & 108.70  & (0.02\%) \\
          & Newton & 54.84  & (0.01\%) & 78.18  & (0.04\%) & 140.55  & (0.01\%) & 194.80  & (0.00\%) & 73.41  & (0.01\%) & 94.38  & (0.01\%) & 108.69  & (0.01\%) \\
    \midrule
    \multirow{6}[2]{*}{0.6} & \textbf{Optimal} & \multicolumn{2}{c}{\textbf{60.74 }} & \multicolumn{2}{c}{\textbf{84.54 }} & \multicolumn{2}{c}{\textbf{149.08 }} & \multicolumn{2}{c}{\textbf{205.56 }} & \multicolumn{2}{c}{\textbf{85.33 }} & \multicolumn{2}{c}{\textbf{112.22 }} & \multicolumn{2}{c}{\textbf{130.72 }} \\
          & Static & 89.50  & (47.34\%) & 144.49  & (70.92\%) & 230.27  & (54.47\%) & 285.22  & (38.75\%) & 215.41  & (152.46\%) & 467.55  & (316.62\%) & 719.79  & (450.65\%) \\
          & Myopic & 60.76  & (0.03\%) & 84.69  & (0.18\%) & 151.97  & (1.94\%) & 210.15  & (2.23\%) & 85.34  & (0.02\%) & 112.26  & (0.03\%) & 131.64  & (0.71\%) \\
          & RN    & 60.89  & (0.25\%) & 84.65  & (0.13\%) & 149.14  & (0.04\%) & 205.85  & (0.14\%) & 85.34  & (0.02\%) & 112.23  & (0.01\%) & 130.81  & (0.08\%) \\
          & TD    & 60.76  & (0.02\%) & 84.56  & (0.03\%) & 149.13  & (0.04\%) & 205.57  & (0.00\%) & 85.34  & (0.02\%) & 112.24  & (0.01\%) & 130.75  & (0.03\%) \\
          & Newton & 60.75  & (0.01\%) & 84.56  & (0.03\%) & 149.12  & (0.03\%) & 205.58  & (0.01\%) & 85.34  & (0.02\%) & 112.23  & (0.01\%) & 130.75  & (0.02\%) \\
    \midrule
    \multirow{6}[2]{*}{0.9} & \textbf{Optimal} & \multicolumn{2}{c}{\textbf{93.54 }} & \multicolumn{2}{c}{\textbf{118.31 }} & \multicolumn{2}{c}{\textbf{189.77 }} & \multicolumn{2}{c}{\textbf{256.15 }} & \multicolumn{2}{c}{\textbf{154.58 }} & \multicolumn{2}{c}{\textbf{213.98 }} & \multicolumn{2}{c}{\textbf{247.63 }} \\
          & Static & 148.14  & (58.37\%) & 218.22  & (84.45\%) & 284.89  & (50.13\%) & 350.61  & (36.88\%) & 421.61  & (172.75\%) & 968.93  & (352.81\%) & 1516.28  & (512.31\%) \\
          & Myopic & 93.54  & (0.01\%) & 118.34  & (0.03\%) & 190.34  & (0.30\%) & 258.55  & (0.94\%) & 156.07  & (0.96\%) & 214.13  & (0.07\%) & 247.80  & (0.07\%) \\
          & RN    & 100.69  & (7.65\%) & 126.39  & (6.83\%) & 196.78  & (3.70\%) & 264.99  & (3.45\%) & 156.19  & (1.04\%) & 214.63  & (0.30\%) & 263.37  & (6.36\%) \\
          & TD    & 93.56  & (0.02\%) & 118.34  & (0.03\%) & 189.84  & (0.04\%) & 256.25  & (0.04\%) & 154.60  & (0.01\%) & 214.01  & (0.01\%) & 247.69  & (0.02\%) \\
          & Newton & 93.55  & (0.02\%) & 118.34  & (0.03\%) & 189.83  & (0.03\%) & 256.17  & (0.01\%) & 154.59  & (0.01\%) & 213.99  & (0.00\%) & 247.66  & (0.01\%) \\
    \bottomrule
    \end{tabular}%
    }
  \label{table:ADPPolicyEvaluationm3}%
\end{table}%

Table \ref{table:ADPPolicyEvaluationm5full} presents the exact performance of the policies generated by value iteration and policy iteration under the various risk and cost specifications considered in Section \ref{sec:PerishableInventoryProblem} for $m=5$. Compared to the case of $m=3$, the static policy now performs substantially better than the myopic policy in most cases. In particular, it dominates the myopic policy when $\tau=0.4$ and $0.6$, and in several instances even slightly outperforms the policy obtained through policy iteration. As noted in \cite{Abouee2026ijoc}, this behavior can be attributed to the fact that, as $m$ increases, the perishable inventory model becomes closer to its non-perishable counterpart, with the limiting case $m\to\infty$ coinciding with the non-perishable inventory problem. Accordingly, for larger values of $m$, the optimal policy is expected to more closely resemble the static policy.

\begin{table}[htbp]
  \centering
  \caption{Results under varying policies, risk parameters, and cost settings for $m=5$}
  \resizebox{\textwidth}{!}{
    \begin{tabular}{c|c|cccccccccccccc}
    \toprule
    \multicolumn{1}{l|}{$\tau$} & Method & \multicolumn{2}{c}{(10,1,20,5)} & \multicolumn{2}{c}{(20,1,20,5)} & \multicolumn{2}{c}{(50,1,20,5)} & \multicolumn{2}{c}{(80,1,20,5)} & \multicolumn{2}{c}{(10,1,20,20)} & \multicolumn{2}{c}{(10,1,20,50)} & \multicolumn{2}{c}{(10,1,20,80)} \\
    \midrule
    \multirow{6}[1]{*}{0.1} & \textbf{Optimal} & \multicolumn{2}{c}{\textbf{31.23 }} & \multicolumn{2}{c}{\textbf{46.41 }} & \multicolumn{2}{c}{\textbf{84.81 }} & \multicolumn{2}{c}{\textbf{120.40 }} & \multicolumn{2}{c}{\textbf{31.26 }} & \multicolumn{2}{c}{\textbf{31.32 }} & \multicolumn{2}{c}{\textbf{31.38 }} \\
          & Static & 31.30  & (0.24\%) & 46.47  & (0.14\%) & 84.87  & (0.08\%) & 120.50  & (0.09\%) & 31.33  & (0.24\%) & 31.40  & (0.24\%) & 31.46  & (0.24\%) \\
          & Myopic & 31.86  & (2.02\%) & 48.40  & (4.28\%) & 92.69  & (9.29\%) & 188.46  & (56.53\%) & 31.88  & (1.99\%) & 31.92  & (1.90\%) & 31.95  & (1.81\%) \\
          & RN    & 33.18  & (6.27\%) & 48.52  & (4.55\%) & 86.41  & (1.89\%) & 121.66  & (1.05\%) & 33.23  & (6.31\%) & 33.15  & (5.85\%) & 33.17  & (5.70\%) \\
          & TD    & 31.30  & (0.24\%) & 46.47  & (0.14\%) & 84.94  & (0.16\%) & 120.70  & (0.25\%) & 31.33  & (0.24\%) & 31.40  & (0.24\%) & 31.46  & (0.24\%) \\
          & Newton & 31.30  & (0.24\%) & 46.47  & (0.14\%) & 84.87  & (0.08\%) & 120.41  & (0.01\%) & 31.33  & (0.24\%) & 31.40  & (0.25\%) & 31.46  & (0.24\%) \\
    \midrule
    \multirow{6}[0]{*}{0.4} & \textbf{Optimal} & \multicolumn{2}{c}{\textbf{38.44 }} & \multicolumn{2}{c}{\textbf{54.38 }} & \multicolumn{2}{c}{\textbf{95.02 }} & \multicolumn{2}{c}{\textbf{132.51 }} & \multicolumn{2}{c}{\textbf{38.75 }} & \multicolumn{2}{c}{\textbf{39.21 }} & \multicolumn{2}{c}{\textbf{39.64 }} \\
          & Static & 38.53  & (0.24\%) & 54.54  & (0.28\%) & 95.10  & (0.08\%) & 132.60  & (0.07\%) & 38.86  & (0.28\%) & 39.48  & (0.67\%) & 40.07  & (1.08\%) \\
          & Myopic & 40.03  & (4.15\%) & 60.18  & (10.66\%) & 113.22  & (19.15\%) & 162.65  & (22.75\%) & 40.18  & (3.68\%) & 40.44  & (3.12\%) & 40.69  & (2.63\%) \\
          & RN    & 38.60  & (0.43\%) & 54.59  & (0.39\%) & 95.10  & (0.08\%) & 132.60  & (0.07\%) & 38.94  & (0.50\%) & 39.44  & (0.58\%) & 39.88  & (0.60\%) \\
          & TD    & 38.53  & (0.24\%) & 54.46  & (0.14\%) & 95.10  & (0.08\%) & 132.60  & (0.07\%) & 38.85  & (0.26\%) & 39.42  & (0.53\%) & 39.76  & (0.29\%) \\
          & Newton & 38.53  & (0.24\%) & 54.46  & (0.14\%) & 95.10  & (0.08\%) & 132.60  & (0.07\%) & 38.85  & (0.25\%) & 39.31  & (0.25\%) & 39.74  & (0.25\%) \\
    \midrule
    \multirow{5}[0]{*}{0.5} & \textbf{Optimal} & \multicolumn{2}{c}{\textbf{40.23 }} & \multicolumn{2}{c}{\textbf{56.46 }} & \multicolumn{2}{c}{\textbf{97.67 }} & \multicolumn{2}{c}{\textbf{136.13 }} & \multicolumn{2}{c}{\textbf{40.79 }} & \multicolumn{2}{c}{\textbf{41.78 }} & \multicolumn{2}{c}{\textbf{42.55 }} \\
          & Static & 40.33  & (0.25\%) & 56.55  & (0.15\%) & 97.75  & (0.08\%) & 136.14  & (0.01\%) & 40.91  & (0.30\%) & 42.14  & (0.85\%) & 43.30  & (1.76\%) \\
          & Myopic & 41.23  & (2.47\%) & 61.50  & (8.92\%) & 115.07  & (17.82\%) & 165.49  & (21.57\%) & 41.54  & (1.83\%) & 42.21  & (1.03\%) & 42.82  & (0.63\%) \\
          & TD    & 40.43  & (0.49\%) & 56.55  & (0.15\%) & 97.75  & (0.08\%) & 136.14  & (0.01\%) & 40.89  & (0.26\%) & 41.90  & (0.29\%) & 42.64  & (0.21\%) \\
          & Newton & 40.43  & (0.49\%) & 56.55  & (0.15\%) & 97.75  & (0.08\%) & 136.14  & (0.01\%) & 40.89  & (0.26\%) & 41.96  & (0.43\%) & 42.63  & (0.17\%) \\
    \midrule
    \multirow{6}[1]{*}{0.6} & \textbf{Optimal} & \multicolumn{2}{c}{\textbf{42.38 }} & \multicolumn{2}{c}{\textbf{58.83 }} & \multicolumn{2}{c}{\textbf{100.60 }} & \multicolumn{2}{c}{\textbf{140.21 }} & \multicolumn{2}{c}{\textbf{43.37 }} & \multicolumn{2}{c}{\textbf{44.91 }} & \multicolumn{2}{c}{\textbf{46.23 }} \\
          & Static & 42.43  & (0.13\%) & 59.10  & (0.47\%) & 100.69  & (0.08\%) & 140.22  & (0.01\%) & 44.02  & (1.49\%) & 47.20  & (5.11\%) & 50.46  & (9.15\%) \\
          & Myopic & 44.01  & (3.84\%) & 64.59  & (9.79\%) & 121.33  & (20.60\%) & 173.77  & (23.93\%) & 44.46  & (2.50\%) & 45.45  & (1.21\%) & 46.51  & (0.60\%) \\
          & RN    & 42.55  & (0.39\%) & 58.98  & (0.27\%) & 100.93  & (0.32\%) & 140.37  & (0.11\%) & 43.49  & (0.27\%) & 45.00  & (0.21\%) & 46.33  & (0.22\%) \\
          & TD    & 42.40  & (0.05\%) & 58.93  & (0.18\%) & 100.69  & (0.08\%) & 140.23  & (0.01\%) & 43.45  & (0.18\%) & 45.04  & (0.28\%) & 46.49  & (0.55\%) \\
          & Newton & 42.40  & (0.05\%) & 58.91  & (0.15\%) & 100.69  & (0.08\%) & 140.22  & (0.01\%) & 43.45  & (0.19\%) & 45.17  & (0.58\%) & 46.33  & (0.21\%) \\
    \midrule
    \multirow{6}[2]{*}{0.9} & \textbf{Optimal} & \multicolumn{2}{c}{\textbf{52.98 }} & \multicolumn{2}{c}{\textbf{71.42 }} & \multicolumn{2}{c}{\textbf{119.55 }} & \multicolumn{2}{c}{\textbf{165.96 }} & \multicolumn{2}{c}{\textbf{61.58 }} & \multicolumn{2}{c}{\textbf{74.20 }} & \multicolumn{2}{c}{\textbf{84.09 }} \\
          & Static & 55.50  & (4.75\%) & 74.72  & (4.61\%) & 119.66  & (0.09\%) & 166.05  & (0.05\%) & 74.31  & (20.67\%) & 113.64  & (53.15\%) & 153.24  & (82.23\%) \\
          & Myopic & 54.19  & (2.27\%) & 75.45  & (5.64\%) & 139.19  & (16.42\%) & 200.82  & (21.00\%) & 62.75  & (1.89\%) & 82.17  & (10.73\%) & 101.66  & (20.90\%) \\
          & RN    & 66.04  & (24.65\%) & 82.62  & (15.68\%) & 126.23  & (5.59\%) & 170.86  & (2.95\%) & 67.94  & (10.32\%) & 76.07  & (2.52\%) & 87.66  & (4.24\%) \\
          & TD    & 54.13  & (2.16\%) & 71.77  & (0.49\%) & 119.65  & (0.08\%) & 166.04  & (0.05\%) & 62.66  & (1.76\%) & 75.33  & (1.52\%) & 84.56  & (0.56\%) \\
          & Newton & 53.79  & (1.54\%) & 71.60  & (0.25\%) & 119.65  & (0.08\%) & 166.04  & (0.05\%) & 61.68  & (0.16\%) & 74.32  & (0.16\%) & 84.14  & (0.06\%) \\
    \bottomrule
    \end{tabular}%
    }
  \label{table:ADPPolicyEvaluationm5full}%
\end{table}%

\end{document}